\documentclass[12pt]{article}

\usepackage{natbib}
\PassOptionsToPackage{numbers, compress, sort}{natbib}

\usepackage[utf8]{inputenc} 
\usepackage[T1]{fontenc}    
\usepackage{hyperref}       
\usepackage{url}            
\usepackage{booktabs}       
\usepackage{amsfonts}       
\usepackage{nicefrac}       
\usepackage{microtype}      
\usepackage{xcolor}         

\usepackage{amsmath}
\usepackage{amssymb}
\usepackage{mathtools}
\usepackage{amsthm}

\usepackage{natbib}
\usepackage{arydshln}
\usepackage[T1]{fontenc}

\usepackage{algorithm}
\usepackage{algorithmic}
\usepackage{enumitem}
\usepackage{wrapfig}
\usepackage{subfigure}


\usepackage{amsmath,amssymb,amsthm,amsfonts,bm}

















\def\1{\bm{1}}







\def\vzero{{\bm{0}}}
\def\vone{{\bm{1}}}
\def\vmu{{\bm{\mu}}}
\def\vtheta{{\bm{\theta}}}
\def\va{{\bm{a}}}

\def\ve{{\bm{e}}}

\def\vh{{\bm{h}}}

\def\vx{{\bm{x}}}
\def\vy{{\bm{y}}}
\def\vz{{\bm{z}}}



\def\mH{{\bm{H}}}

\def\mK{{\bm{K}}}

\def\mQ{{\bm{Q}}}

\def\mV{{\bm{V}}}
\def\mW{{\bm{W}}}

\DeclareMathAlphabet{\mathsfit}{\encodingdefault}{\sfdefault}{m}{sl}
\SetMathAlphabet{\mathsfit}{bold}{\encodingdefault}{\sfdefault}{bx}{n}

\def\gA{{\mathcal{A}}}
\def\gB{{\mathcal{B}}}

\def\gH{{\mathcal{H}}}

\def\gL{{\mathcal{L}}}
\def\gM{{\mathcal{M}}}
\def\gN{{\mathcal{N}}}
\def\gO{{\mathcal{O}}}

\def\gR{{\mathcal{R}}}
\def\gS{{\mathcal{S}}}



\def\sP{{\mathbb{P}}}

\def\sR{{\mathbb{R}}}

\def\sT{{\mathbb{T}}}








\newcommand{\E}{\mathbb{E}}

\newcommand{\softmax}{\mathrm{softmax}}



\DeclareMathOperator*{\argmax}{arg\,max}
\DeclareMathOperator*{\argmin}{arg\,min}

\theoremstyle{plain}
\newtheorem{theorem}{Theorem}[section]
\newtheorem{proposition}[theorem]{Proposition}
\newtheorem{lemma}[theorem]{Lemma}

\newtheorem{definition}[theorem]{Definition}
\newtheorem{assumption}[theorem]{Assumption}
\theoremstyle{remark}

\newcommand{\oneb}{\mathds{1}}

\usepackage{dsfont}
\newcommand{\Alg}{\textup{\texttt{Alg}}}

\newcommand{\real}{\textnormal{real}}
\newcommand{\tth}{\texttt{h}}

\newcommand{\ttE}{\texttt{E}}
\newcommand{\ttA}{\texttt{A}}
\newcommand{\ttB}{\texttt{B}}
\newcommand{\ReLU}{\textnormal{ReLU}}
\newcommand{\Attn}{\textnormal{Attn}}
\newcommand{\mattn}{\textnormal{mattn}}
\newcommand{\MLP}{\textnormal{MLP}}
\newcommand{\mlp}{\textnormal{mlp}}
\newcommand{\TF}{\textnormal{TF}}
\newcommand{\clip}{\textnormal{clip}}
\newcommand{\proj}{\textnormal{proj}}

\newcommand{\TV}{\textnormal{TV}}
\newcommand{\pos}{\textup{\textbf{pos}}}
\newcommand{\Ball}{\textup{Ball}}
\newcommand{\op}{\textup{op}}
\newcommand{\CCE}{\textup{CCE}}
\newcommand{\sigmas}{\sigma_{\textup{s}}}
\newcommand{\sigmar}{\sigma_{\textup{r}}}

\newcommand{\pre}{\text{pre}}
\newcommand{\post}{\text{post}}
\newcommand{\MWU}{\text{MWU}}

\usepackage{fullpage}
\usepackage{authblk}

\allowdisplaybreaks

\title{Transformers as Game Players: \\
Provable In-context Game-playing Capabilities of Pre-trained Models}

\author[1]{Chengshuai Shi}
\author[1]{Kun Yang}
\author[2]{Jing Yang}
\author[1]{Cong Shen}
\affil[1]{\{cs7ync, ky9tc, cong\}@virginia.edu, University of Virginia}
\affil[2]{yangjing@psu.edu, The Pennsylvania State University}
    \date{}

\begin{document}

\maketitle

\begin{abstract}
    The in-context learning (ICL) capability of pre-trained models based on the transformer architecture has received growing interest in recent years. While theoretical understanding has been obtained for ICL in reinforcement learning (RL), the previous results are largely confined to the single-agent setting. This work proposes to further explore the in-context learning capabilities of pre-trained transformer models in competitive multi-agent games, i.e., \textit{in-context game-playing} (ICGP). Focusing on the classical two-player zero-sum games,  theoretical guarantees are provided to demonstrate that pre-trained transformers can provably learn to approximate Nash equilibrium in an in-context manner for both decentralized and centralized learning settings. As a key part of the proof, constructional results are established to demonstrate that the transformer architecture is sufficiently rich to realize celebrated multi-agent game-playing algorithms, in particular, decentralized V-learning and centralized VI-ULCB.
\end{abstract}

\section{Introduction}\label{sec:intro}
Since proposed in \citet{vaswani2017attention}, the transformer architecture has received significant interest. It has powered many recent breakthroughs in artificial intelligence \citep{devlin2018bert,dosovitskiy2020image,carion2020end,brown2020language}, including the extremely powerful large language models such as GPT \citep{openai2023gpt} and Llama \citep{touvron2023llama,touvron2023llama2}.
One of the most striking observations from the research of these transformer-powered models is that they demonstrate remarkable \textit{in-context learning} (ICL) capabilities. In particular, after appropriate pre-training, the models can handle new tasks when prompted by a few descriptions or demonstrations without any parameter updates, e.g., \citet{brown2020language,liu2021makes,chowdhery2023palm}.

ICL is practically attractive as it provides strong generalization capabilities across different downstream tasks without requiring further training or a large amount of task-specific data. These appealing properties have motivated many empirical studies to better understand ICL \citep{garg2022can,von2023transformers,dai2023can}; see the survey by \citet{dong2022survey} for key findings and results. In addition to the empirical investigations, recent years have witnessed growing efforts in gaining deeper theoretical insights into ICL, e.g., \citet{xie2021explanation,akyurek2022learning,wu2023many,ahn2023transformers,cheng2023transformers,li2023transformers,zhang2023trained,bai2023transformers,raventos2023pretraining}.

Among these empirical and theoretical studies, one emerging direction focuses on the capability of pre-trained transformer models to perform \textit{in-context reinforcement learning} (ICRL) \citep{laskin2022context,lee2023supervised,grigsby2023amago}. In particular, the transformer is pre-trained with interaction data from diverse environments, modeling the interaction as a sequential prediction task. During inference, the pre-trained transformer is prompted via the interaction trajectory in the current environment for it to select actions. The recent work by \citet{lin2023transformers} provides some theoretical understanding of ICRL, including both a general pre-training guarantee and specific constructions of transformers to realize some well-known designs in multi-armed bandits and RL (especially, LinUCB \citep{abbasi2011improved}, Thompson sampling \citep{thompson1933likelihood}), and UCB-VI \citep{azar2017minimax}). A detailed literature review can be found in Appendix~\ref{subapp:related}.

The insights from \citet{lin2023transformers} are largely confined to the single-agent scenario, i.e., a single-agent multi-armed bandit or Markov decision process (MDP). The power of RL, however, extends to the much broader multi-agent scenario, especially the multi-player competitive games such as GO \citep{silver2017mastering}, Starcraft \citep{vinyals2019grandmaster}, and Dota 2 \citep{berner2019dota}. To provide a more comprehensive understanding of ICRL, this work targets further studying the \textit{in-context game-playing} (ICGP) capabilities of transformers in multi-agent competitive settings. To the best of our knowledge, \textit{this is the first work providing theoretical analyses and empirical pieces of evidence on the ICGP capabilities of transformers.} The contributions of this work are further summarized as follows.

$\bullet$ A general framework is proposed to model in-context game-playing via transformers, where we focus on the representative two-player zero-sum Markov games and target learning Nash equilibrium (NE). Compared with the single-agent scenario \citep{lin2023transformers}, the multi-agent setting considered in this work broadens the ICRL research scope while it is also more complicated due to its game-theoretic nature.

$\bullet$ The challenging decentralized learning setting is first studied, where two distinct transformers are trained to learn NE, one for each player, without observing the opponent's actions. A general realizability-conditioned guarantee is first derived that characterizes the generalization error of the pre-trained transformers. Then, the capability of the transformer architecture is demonstrated by providing a concrete construction so that the famous V-learning algorithm \citep{bai2020provable} can be exactly realized. Lastly, a finite-sample upper bound on the approximation error of NE is proved to establish the ICGP capability of transformers. As a further implication, the result of realizing V-learning demonstrates the capability of transformers to perform model-free RL designs, in addition to the model-based ones (e.g., UCB-VI \citep{azar2017minimax} as studied in \citet{lin2023transformers}).

$\bullet$ To obtain a complete understanding, the centralized learning setting is also investigated, where one transformer is pre-trained to control both players' actions. A similar set of results is provided: a general pre-training guarantee, a constructional result to demonstrate realizability, and a finite-sample upper bound on the approximation error of NE. Distinctly, the transformer construction is presented as a specific parameterization to implement the renowned centralized VI-ULCB algorithm \citep{bai2020provable}.


$\bullet$ Furthermore, experiments are also performed to practically test the ICGP capabilities of the pre-trained transformers. The obtained results not only corroborate the derived theoretical claims, but also empirically motivate this and further studies on the interesting direction of pre-trained models in game-theoretic settings.

\section{A Theoretical Framework for In-Context Game Playing}
\subsection{The Basic Setup of Environments}
To demonstrate the ICGP capability of transformers, we focus on one of the most basic game-theoretic settings: two-player zero-sum Markov games \citep{shapley1953stochastic}, while the discussions provided later conceivably extend to more general games. 
An illustration of the different settings (i.e., decentralized and centralized) considered in this work (with details explained later) is given in Fig.~\ref{fig:overview}. 
The overall framework is introduced in the following, which extends \citet{lin2023transformers} from the single-agent decision-making setting to the competitive multi-agent domain.

Considering a set of two-player zero-sum Markov games denoted as $\gM$. Each environment $M\in \gM$ shares the number of episodes $G$, the number of steps $H$ in each episode, the state space $\gS$ (with $|\gS| = S)$,  the action spaces $\{\gA, \gB\}$ (with $|\gA| = A$ and $|\gB| = B$), and the reward space $\gR$. Here $\gA$ and $\gB$ denote the action spaces of two players, respectively, which are referred to as the max-player and the min-player for convenience. 

Each environment $M = \{\sT_{M}^{h-1}, \sR_{M}^h: h\in [H]\}$ has its transition model $\sT_{M}^h: \gS \times \gA \times \gB \to \Delta(\gS)$ and reward functions $\sR_{M}^h: \gA \times \gB \to \Delta(\gR)$, where $\sT_{M}^0(\cdot)$ denotes the initial state distribution. Particularly, overall $G$ episodes of $H$ steps happen in each environment $M$, with each episode starting at $s^{g,1}\sim \sT_{M}^0(\cdot)$. If the action pair $(a^{g,h}, b^{g,h})$ is taken upon state $s^{g,h}$ at step $h$ in episode $g$, the state is transited to $s^{g,h+1} \sim \sT_{M}^h(\cdot|s^{g,h}, a^{g,h}, b^{g,h})$, and reward $r^{g,h} \sim \sR_{M}^h(s^{g,h}, a^{g,h}, b^{g,h})$ (respectively, $-r^{g,h}$) is collected by the max-player (respectively, the min-player). For simplicity, we assume that the max-player rewards are bounded in $[0,1]$ and deterministic, i.e., for each $(s, a, b, h)$, there exists $r\in [0,1]$ such that $\sR^{g,h}_M(r|s,a,b) = 1$. Also, the initial state $s^{g,1}$ is assumed to be a fixed one $s^1$, i.e., $\sT^0_M(s^1)=1$.

We further leverage the notation $T:= GH$, while using time $t$ and episode-step pair $(g, h)$ in an interleaving manner with $t := (g-1)H + h$. The partial interaction trajectory up to time $t$ is then denoted as $D^t := \{(s^\tau, a^\tau, b^\tau, r^\tau): \tau\in [t])$ and we use the abbreviated notation $D := D^T$. Individually, for the max-player, we denote her observed interaction trajectory up to time $t$ by $D_{+}^t := \{(s^\tau, a^\tau, r^\tau): \tau\in [t])$ and write $D_+ := D_{+}^T$ for short. Similarly, for the min-player, we denote $D_{-}^t := \{(s^\tau, b^\tau, r^\tau): \tau\in [t])$ and $D_{-} := D_{-}^T$.

\begin{figure*}[tb]
    \centering
    \setlength{\abovecaptionskip}{-1pt}
    \includegraphics[width = \linewidth]{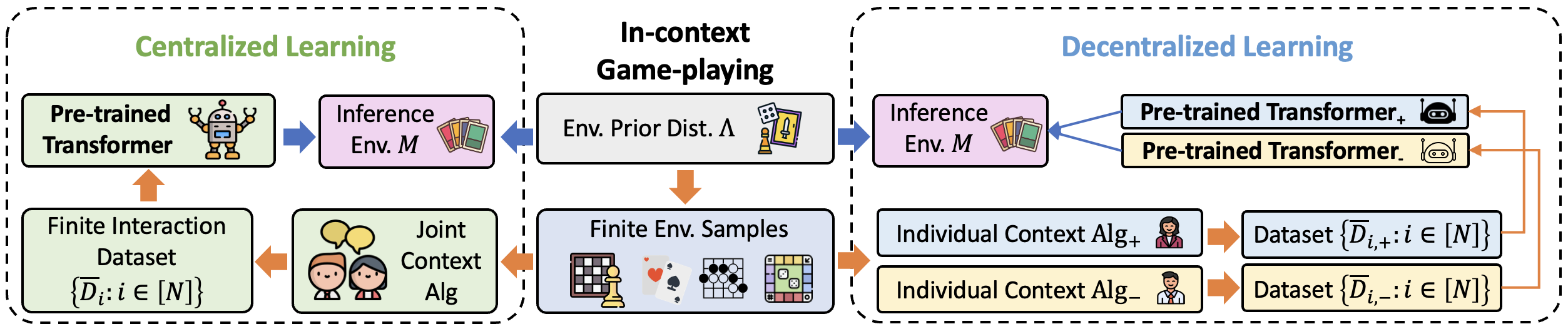}
    \caption{\small An overall view of the framework, where the in-context game-playing (ICGP) capabilities of transformers are studied in both decentralized and centralized learning settings. The \textcolor{orange}{orange} arrows denote the supervised pre-training procedure and the \textcolor{blue}{blue} arrows mark the inference procedure.}
    \label{fig:overview}
\end{figure*}

\subsection{Game-playing Algorithms and Nash Equilibrium}
A game-playing algorithm $\Alg$ can map a partial trajectory $D^{t-1}$ and state $s^t$ to a distribution over the actions, i.e., $\Alg(\cdot, \cdot|D^{t-1}, s^t)\in \Delta(\gA \times \gB)$. If one algorithm $\Alg$ is decoupled for the two players (as in the later decentralized setting), we denote it as $\Alg = (\Alg_+, \Alg_-)$, where $\Alg_+(\cdot|D^{t-1}_{+}, s^t) \in \Delta(\gA)$ and $\Alg_-(\cdot|D^{t-1}_{-}, s^t) \in \Delta(\gB)$. Given an environment $M$ and an algorithm $\Alg$, the distribution over a full trajectory $D$ can be expressed as
\begin{align*}
         \sP^{\Alg}_M(D)= \prod\nolimits_{t\in [T]} &\sT_{M}^{t-1}(s^t|s^{t-1},a^{t-1},b^{t-1}) \cdot \Alg(a^t, b^t|D^{t-1},s^t) \cdot \sR_M^{t}(r^t).
    \end{align*}
If further considering an environment prior distribution $\Lambda \in \Delta(\gM)$ such that $M\sim \Lambda$, the joint distribution of $(M, D)$ is denoted as $\sP^{\Alg}_\Lambda(D)$, where $M\sim \Lambda(\cdot)$ and $D\sim \sP^{\Alg}_M(\cdot)$.

For environment $M$ and a game-playing algorithm $\pi$, we define its value function over one episode as $V^\pi_M(s^1) = \E_{D^H\sim \sP^{\pi}_M} [\sum\nolimits_{t\in [H]} r^t]$.
With the marginalized policies of $\pi$ denoted as $(\mu, \nu)$, we define their best responses as
\begin{align*}
    \nu_\dagger(\mu) := \argmin\nolimits_{\nu'} V^{\mu, \nu'}_M(s^1),~~ \mu_\dagger(\nu) := \argmax\nolimits_{\mu'} V^{\mu',\nu}_M(s^1),
\end{align*}
whose corresponding values are
\begin{align*}
    V^{\mu, \dagger}_M(s^1) :=  V^{\mu, \nu_\dagger(\mu)}_M(s^1), ~~ V^{ \dagger, \nu}_M(s^1) :=  V^{\mu_\dagger(\nu), \nu}_M(s^1).
\end{align*}
With the notion of best responses, the following classical definition of approximate Nash equilibrium (NE) can be introduced \citep{shapley1953stochastic,bai2020provable, liu2021sharp,jin2023v}.
\begin{definition}[Approximate Nash equilibrium]
    A decoupled policy pair $(\hat{\mu}, \hat{\nu})$ is an $\varepsilon$-approximate Nash equilibrium for environment $M$ if $
        V^{\hat{\mu}, \dagger}_M(s^1) + \varepsilon  \geq  V^{\hat{\mu}, \hat{\nu}}_M(s^1) \geq V^{\dagger, \hat{\nu}}_M(s^1) - \varepsilon$, i.e., the Nash equilibrium gap $V^{\dagger, \hat{\nu}}_M(s^1) - V^{\hat{\mu}, \dagger}_M(s^1) \leq 2\varepsilon$.
\end{definition}
For each environment, our learning goal is to approximate its NE policy pair. In other words, we target outputting a policy pair $(\hat{\mu}, \hat{\nu})$ that is $\varepsilon$-approximate NE with an error $\varepsilon$ that is as small as possible, after interacting with the environment for an overall $T$ rounds.

\subsection{The Transformer Architecture} 
With the basics of the game-playing environment and the learning target established, we now introduce the transformer architecture \citep{vaswani2017attention}, which has demonstrated great potential in processing sequential inputs. 
First, for an input vector $\vx \in \sR^{d}$, we denote $\sigmar(\vx) : = \ReLU(\vx) := \max\{\vx, 0\}\in \sR^{d}$ as the entry-wise ReLU activation function and $\sigmas(\vx): = \softmax(\vx)\in \sR^{d}$ as the softmax activation function, while using $\sigma(\cdot)$ to refer to a non-specified activation function (i.e., both ReLU and softmax may be used). Then, the masked attention layer and the MLP layer can be defined as follows.

\begin{definition}[Masked Attention Layer]
    A masked attention layer with $M$ heads is denoted as $\Attn_\vtheta(\cdot)$ with parameters $\vtheta = \{(\mV_m, \mQ_m, \mK_m)\}_{m\in [M]} \subset \sR^{d\times d}$. On any input sequence $\mH = [\vh_1, \cdots, \vh_N] \in \sR^{d\times N}$, we have $\overline{\mH} = \Attn_\vtheta(\mH) = [\overline{\vh}_1, \cdots, \overline{\vh}_N] \in \sR^{d\times N}$, where
    \begin{equation*}
        \overline{\vh}_i= \vh_i + \sum\nolimits_{m\in [M]} \frac{1}{i} \sum\nolimits_{j \in [i]} \sigma_r(\langle \mQ_m \vh_i, \mK_m \vh_j\rangle)\cdot \mV_m \vh_j.
    \end{equation*}
\end{definition}

\begin{definition}[MLP Layer]
    An MLP layer with hidden dimension $d'$ is denoted as $\MLP_\vtheta$ with parameters $\vtheta = (\mW_1, \mW_2) \in \sR^{d'\times d} \times \sR^{d\times d'}$. On any input sequence $\mH = [\vh_1, \cdots, \vh_N] \in \sR^{d\times N}$, we have $\overline{\mH} = \MLP_\vtheta(\mH) = [\overline{\vh}_1, \cdots, \overline{\vh}_N]\in \sR^{d\times N}$, where $\overline{\vh}_i = \vh_i + \mW_2 \cdot \sigma(\mW_1 \cdot \vh_i)$.
\end{definition}

The combination of masked attention layers and MLP layers leads to the overall decoder-based transformer architecture studied in this work, as defined in the following.
\begin{definition}[Decoder-based Transformer]\label{def:transformer}
    An $L$-layer decoder-based transformer, denoted as $\TF_{\vtheta}(\cdot)$, is a composition of $L$ masked attention layers, each followed by an MLP layer and a clip operation: $\TF_\vtheta(\mH) = \mH^{(L)}\in \sR^{d\times N}$, where $\mH^{(L)}$ is defined iteratively by taking $\mH^{(0)} = \mH \in \sR^{d\times N}$ and for $l\in [L]$,
    \begin{align*}
        \mH^{(l)} = \MLP_{\vtheta^{(l)}_{\mlp}}\left(\Attn_{\vtheta^{(l)}_{\mattn}}\left(\mH^{(l-1)}\right)\right) \in \sR^{d\times N},
    \end{align*}
    where parameter $\vtheta = \{(\vtheta^{(l)}_\mattn, \vtheta^{(l)}_\mlp): l\in [L]\}$ consists of $\vtheta^{(l)}_\mattn = \{(\mV^{(l)}_m, \mQ^{(l)}_m, \mK^{(l)}_m): m\in [M]\} \subset \sR^{d\times d}$ and $\vtheta^{(l)}_\mlp = (\mW^{(l)}_1, \mW^{(l)}_2) \in \sR^{d'\times d} \times \sR^{d\times d'}$. 
\end{definition}
We further define the parameter class of transformers as $\Theta_{d, L, M, d', F}: = \{\vtheta = (\vtheta^{(1:L)}_\mattn, \vtheta^{(1:L)}_\mlp): \|\vtheta\| \leq F\}$, where the norm of a transformer $\TF_\vtheta$ is denoted as
    \begin{align*}
            \|\vtheta\| := &\max_{l\in [L]}\left\{\max_{m\in [M]}\left\{\|\mQ^{(l)}_m\|_{\op}, \|\mK^{(l)}_m\|_{\op}\right\} + \sum\nolimits_{m\in [M]} \|\mV^{(l)}_m\|_{\op} + \|\mW^{(l)}_1\|_{\op} + \|\mW^{(l)}_2\|_{\op}\right\}.
        \end{align*}

\textbf{Other Notations.} The total variation distance between two algorithms $\{\pi, \pi'\}$ upon $D^{t-1}\cup \{s\}$ is denoted as $\TV(\pi, \pi'|D^{t-1},s) : = \TV(\pi(\cdot|D^{t-1},s), \pi'(\cdot|D^{t-1},s))$.
Also, the notation $x \lesssim y$ indicates that $x$ is a lower or equivalent order term compared with $y$, i.e., $x = O(y)$, $\tilde{\gO}(\cdot)$ hides poly-logarithmic terms in $H, G, S, A, B$, and $\text{poly}(\cdot)$ compactly denotes a polynomial term with respect to the input.

\section{Decentralized Learning}
First, we study the decentralized learning setting, i.e., each player takes actions following her own model independently without observing the opponent's actions, as it better captures the unique game-playing scenario considering in this work. This setting is aligned with the canonical study of normal-form games \citep{syrgkanis2015fast,daskalakis2021near}, and has been extended to Markov games in recent years \citep{jin2023v,song2021can,mao2023provably}.
In the following, we start by introducing the basic setup of supervised pre-training and provide a general performance guarantee relying on a realizability assumption. Then, we provide a constructional result to demonstrate that the algorithms induced by transformers are rich enough to realize the celebrated V-learning algorithm \citep{jin2023v}. With these results, we finally establish that with V-learning providing training data, the pre-trained transformer can effectively approximate NE when interacting with different environments in an in-context fashion.

\subsection{Supervised Pre-training Results}
\subsubsection{Basic Setups}\label{subsubsec:decentralized_basics}
\textbf{Training Dataset.} In the supervised pre-training, we use a context algorithm $\Alg_0$ to collect the offline trajectories. For the decentralized setting, the context algorithm $\Alg_0$ used for data collection is assumed to be consisted of two decoupled algorithms $(\Alg_{+,0}, \Alg_{-,0})$ for the max- and min-players, respectively. With the context algorithm, we consider $N$ i.i.d. offline trajectories $\{\overline{D}_i: i\in [N]\}$ are collected, where $\overline{D}_i := D_i \cup D'_i$ with
\begin{align*}
    &D_i := \{(s^t_i, a^t_i, b^t_i, r^t_i) : t\in [T]\} \sim \sP^{\Alg_0}_{\Lambda}(\cdot);\\
    &D'_i := \{(a^t_{i,s}, b^t_{i,s}) \sim \Alg_0(\cdot, \cdot|D_i^{t-1},s) : t\in [T], s\in \gS\}.
\end{align*}
It can be observed that $D_i$ is the commonly considered offline interaction trajectory of $\Alg_0$, while $D'_i$ is the sampled actions of each state $s$ at each step $t$ with $\Alg_0$. Compared with \citet{lin2023transformers}, $D'_i$ is an augmented component. We first note that collecting $D'_i$ is relatively easy in practical applications, as it only needs to additionally sample from the distribution $\Alg_0(\cdot, \cdot|D^{t-1}_i, s)$ for each $s\in \gS$ (i.e., no additional interactions with the environment). Moreover, the reason to incorporate such an augmentation is to provide additional diverse pre-training data due to the unique game-theoretic environment, with further discussions provided after the later Lemma~\ref{lem:decentralized_generalization}. It has also been recognized previously \citep{cui2022offline,zhong2022pessimistic} that the data requirement for learning Markov games is typically much stronger than that for single-agent RL.

To facilitate the decentralized training, the overall dataset is further split into two parts: $\{\overline{D}_{+,i}:i\in [N]\}$ and $\{\overline{D}_{-,i}:i\in [N]\}$, where
\begin{align*}
    \overline{D}_{+,i}: = D_{+,i} \cup D'_{+,i},~~&D_{+,i}:= \{(s^t_i, a^t_i, r^t_i): t\in [T]\}, ~~D'_{+,i}:= \{a^t_{i,s}: t\in [T], s\in \gS\};\\
    \overline{D}_{-,i}: = D_{-,i} \cup D'_{-,i},~~&D_{-,i}:= \{(s^t_i, b^t_i, r^t_i): t\in [T]\}, ~~D'_{-,i}:= \{b^t_{i,s}: t\in [T], s\in \gS\}.
\end{align*}
In other words, $\overline{D}_{i,+}$ denotes the observations of the max-player, while $\overline{D}_{i,-}$ those of the min-layer. Note that neither player can observe the opponent's actions.

\textbf{Algorithm Induced by Transformers.} Due to the decentralized nature, two embedding mappings of $d_{+}$ and $d_{-}$ dimensions are considered as $\tth_{+}:  \gS \cup\left(\gA \times \gR\right) \to \sR^{d_{+}}$ and $\tth_{-}:  \gS \cup\left(\gB \times \gR\right) \to \sR^{d_{-}}$, together with two transformers $\TF_{\vtheta_{+}}$ and $\TF_{\vtheta_{-}}$. Taking the max-player's transformer as representative, for trajectory $(D_+^{t-1}, s^t)$, let $\mH_+ = \tth_+(D_+^{t-1},s^t) = [\tth_+(s^1), \tth_+(a^1, r^1), \cdots, \tth_+(s^t)]$ be the input to $\TF_{\vtheta_+}$, and the obtained output is $\overline{\mH}_+ = \TF_{\vtheta_+}(\mH_+) = [\overline{\vh}_{+,1}, \overline{\vh}_{+,2}, \cdots, \overline{\vh}_{+,-2}, \overline{\vh}_{+,-1}]$, which has the same shape as $\mH_+$. Similarly, the mapping $\tth_{-}$ is used for the min-player's transformer $\TF_{\vtheta_{-}}$ to embed trajectory $(D_-^{t-1}, s^t)$.

We further assume that two fixed linear extraction mappings, $\ttA\in \sR^{A\times {d_+}}$ and $\ttB\in \sR^{B\times {d_-}}$, are used to induce algorithms $\Alg_{\vtheta_+}$ and $\Alg_{\vtheta_-}$ over the action spaces $\gA$ and $\gB$ of the max- and min-players, respectively, as
\begin{equation}\label{eqn:decentralized_induced_alg}
\begin{aligned}
    \Alg_{\vtheta_+}(\cdot|D_+^{t-1}, s^t)= \proj_{\Delta}\left(\ttA\cdot \TF_{\vtheta_+}\left(\tth_+(D_+^{t-1}, s^t)\right)_{-1}\right),\\
    \Alg_{\vtheta_-}(\cdot|D_-^{t-1}, s^t)= \proj_{\Delta}\left(\ttB\cdot \TF_{\vtheta_-}\left(\tth_-(D_-^{t-1}, s^t)\right)_{-1}\right),
\end{aligned}
\end{equation}
where $\proj_{\Delta}$ denotes the projection to a probability simplex. 

\textbf{Training Scheme.} We consider the standard supervised pre-training to maximize the log-likelihood of observing training datasets $\overline{D}_+$ (resp., $\overline{D}_-$)  over algorithms $\{\Alg_{\vtheta_{+}}: \vtheta_+\in \Theta_+\}$  (resp., $\{\Alg_{\vtheta_{-}}: \vtheta_-\in \Theta_-\}$) with $\Theta_+:=\Theta_{d_+, L_+, M_+, d'_+, F_+}$ (resp., $\Theta_-:=\Theta_{d_-, L_-, M_-, d'_-, F_-}$). In particular, the pre-training outputs $\widehat{\vtheta}_+$ and $\widehat{\vtheta}_-$ are determined as
\begin{align*}
    \widehat{\vtheta}_+ = \argmax\nolimits_{\vtheta_+\in \Theta_+}\frac{1}{N}\sum\nolimits_{i\in [N]}\sum\nolimits_{t\in [T]}\sum\nolimits_{s\in \gS} \log\left(\Alg_{\vtheta_+}(a^{t}_{i,s}|D^{t-1}_{+,i},s)\right);\\
    \widehat{\vtheta}_- = \argmax\nolimits_{\vtheta_-\in \Theta_-}\frac{1}{N}\sum\nolimits_{i\in [N]}\sum\nolimits_{t\in [T]}\sum\nolimits_{s\in \gS} \log\left(\Alg_{\vtheta_-}(b^{t}_{i,s}|D^{t-1}_{-,i},s)\right).
\end{align*}

\subsubsection{Theoretical Guarantees}
In this section, we provide a generalization guarantee of the algorithms $\Alg_{\widehat\vtheta_+}$ and $\Alg_{\widehat\vtheta_-}$ pre-trained following the scheme introduced above. First, the standard definition regarding the covering number and an assumption of approximate realizability are introduced to facilitate the analysis, which are also leveraged in \citet{lin2023transformers}.
\begin{definition}[Decentralized Covering Number]\label{def:decentralized_covering}
    For a class of algorithms $\{\Alg_{\vtheta_+}: \vtheta_+ \in \Theta_+\}$, we say $\tilde{\Theta}_{+} \subseteq \Theta_+$ is a $\rho_+$-cover of $\Theta_+$, if $\tilde{\Theta}_{+}$ is a finite set such that for any $\vtheta_+\in \Theta_+$, there exists $\tilde{\vtheta}_+ \in \tilde{\Theta}_{+}$ such that for all $D^{t-1}_+, s\in \gS, t\in [T]$, it holds that
    \begin{equation*}
        \left\|\log \Alg_{\tilde{\vtheta}_+}(\cdot|D^{t-1}_+,s) - \log \Alg_{\vtheta_+}(\cdot|D^{t-1}_+,s)\right\|_\infty \leq \rho_+.
    \end{equation*}
    The covering number $\gN_{\Theta_+}(\rho_+)$ is the minimal cardinality of $\tilde{\Theta}_+$ such that $\tilde{\Theta}_+$ is a $\rho_+$-cover of $\Theta_+$. Similarly, we can define the $\rho_-$-cover of $\Theta_-$ and the covering number $\gN_{\Theta_-}(\rho_-)$.
\end{definition}

\begin{assumption}[Decentralized Approximate Realizability]\label{aspt:decentralized_realizability}
    There exist $\vtheta^*_{+}\in \Theta_+$ and $\varepsilon_{+,\real}>0$ such that for all $t\in [T], s\in \gS, a\in \gA$, it holds that
    \begin{equation*}
        \log\left(\E_{D\sim \sP^{\Alg_0}_\Lambda} \left[\frac{\Alg_{+, 0}(a|D^{t-1}_+,s)}{ \Alg_{\vtheta^*_+}(a|D^{t-1}_+,s)}\right]\right) \leq \varepsilon_{+,\real}.
    \end{equation*}
    We also similarly assume $\varepsilon_{-,\real}$-approximate realizability of $\Alg_{-,0}$ via $\Alg_{\vtheta^*_-}$ with $\vtheta^*_-\in \Theta_-$.
\end{assumption}

Then, we can establish the following generalization guarantee on the TV distance between $(\Alg_{\widehat\vtheta_+}, \Alg_{\widehat\vtheta_-})$ and $\Alg_0 = (\Alg_{0,+}, \Alg_{0,-})$, capturing their similarities.
\begin{theorem}[Decentralized Pre-training Guarantee]\label{thm:decentralized_pre_training} Let $\widehat{\vtheta}_+$ be the max-player's pre-training output defined in Sec.~\ref{subsubsec:decentralized_basics}. Take $\gN_{\Theta_+} = \gN_{\Theta_+}(1/N)$ as in Def.~\ref{def:decentralized_covering}. Then, under Assumption~\ref{aspt:decentralized_realizability}, with probability at least $1-\delta$, it holds that\footnote{The covering number $\gN_{\Theta_+}$ in Theorem~\ref{thm:decentralized_pre_training} (and also the later Theorem~\ref{thm:centralized_pre_training}) is not concretely discussed in the main paper to ease the presentation. A detailed illustration is provided in Appendix~\ref{app:covering}.}
\begin{align*}
        \E_{D\sim \sP^{\Alg_0}_\Lambda} \left[\sum\nolimits_{t\in [T], s\in \gS} \TV\left(\Alg_{+,0}, \Alg_{\widehat{\vtheta}_+}|D^{t-1}_{+}, s\right)\right] \lesssim TS\sqrt{\varepsilon_{+,\real}}+TS\sqrt{\frac{\log\left(\gN_{\Theta_+}TS/\delta\right)}{N}}.
    \end{align*}
A similar result holds for the min-players' pre-training output $\widehat{\vtheta}_-$.
\end{theorem}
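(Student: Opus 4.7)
The plan is to carry out the standard supervised MLE generalization analysis, adapted to the augmented product-over-$(t,s)$ dataset structure, and then aggregate per-position divergence bounds into the claimed sum-of-TV statement. At a high level, the argument parallels Theorem~1 of \citet{lin2023transformers} for single-agent ICRL, with the main adaptations being (i)~the product-over-$(t,s)$ likelihood arising from the augmented samples $D'_{+,i}$ and (ii)~the per-player decentralized structure.

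\textbf{MLE concentration step.} I would treat each augmented trajectory $\overline{D}_{+,i}$ as one independent observation whose conditional log-likelihood factorizes as $\sum_{t\in[T]}\sum_{s\in\gS}\log \Alg_{\vtheta_+}(a^t_{i,s}\,|\,D^{t-1}_{+,i}, s)$, since the augmented samples are drawn conditionally independently across $(t,s)$ given $D_{+,i}$, and the environment/min-player factors governing $D_{+,i}$ are $\vtheta_+$-free and drop from the MLE objective. Using a $(1/N)$-covering $\tilde\Theta_+$ of size $\gN_{\Theta_+}(1/N)$ to discretize the parameter class, a Chernoff-style bound on the log-likelihood ratio for each cover element, a union bound over $\tilde\Theta_+$, and Assumption~\ref{aspt:decentralized_realizability} to control the approximation bias (its per-$(t,s,a)$ Renyi-style bound combines across positions by the chain rule), the classical MLE concentration argument (e.g., Zhang, 2006) gives, with probability at least $1-\delta$,
\[
\E_{D\sim \sP^{\Alg_0}_\Lambda}\!\Big[\sum_{t\in[T],s\in\gS} D^2_{\text{H}}\bigl(\Alg_{+,0}(\cdot|D^{t-1}_+,s),\, \Alg_{\widehat\vtheta_+}(\cdot|D^{t-1}_+,s)\bigr)\Big] \lesssim \varepsilon_{+,\real} + \frac{\log(\gN_{\Theta_+}\,TS/\delta)}{N},
\]
where $D^2_{\text{H}}$ denotes the squared Hellinger distance and the KL chain rule (equivalently, sub-additivity of product Hellinger) converts the joint-MLE bound into the above per-position sum.

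\textbf{Conversion to TV and aggregation.} Applying the pointwise bound $\TV\le\sqrt{2}\,D_{\text{H}}$ and then Jensen's inequality yields the per-position bound $\E[\TV_{t,s}]\le\sqrt{2\,\E[D^2_{\text{H},t,s}]}$. Since each $\E[D^2_{\text{H},t,s}]$ is trivially at most the sum controlled in the previous display, summing the $TS$ resulting square roots (via $\sum_{t,s}\sqrt{x_{t,s}}\le TS\sqrt{\sum_{t,s}x_{t,s}}$, obtained from $\sqrt{\max}\le\sqrt{\sum}$) gives
\[
\E\!\Big[\sum_{t,s}\TV(\Alg_{+,0},\Alg_{\widehat\vtheta_+}|D^{t-1}_+,s)\Big] \lesssim TS\sqrt{\varepsilon_{+,\real}+\log(\gN_{\Theta_+}TS/\delta)/N},
\]
and the subadditivity $\sqrt{a+b}\le\sqrt{a}+\sqrt{b}$ splits this into the two claimed summands. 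The min-player bound follows by a fully symmetric argument with $\gB$, $\ttB$, $\Alg_{-,0}$, $\vtheta^*_-$, and $\Theta_-$ in place of their max-player counterparts.

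\textbf{Main obstacle.} The chief technical subtlety lies in the MLE concentration step: one must handle the augmented product sample structure carefully when lifting the per-$(t,s,a)$ realizability condition in Assumption~\ref{aspt:decentralized_realizability} into a bound on the joint log-likelihood ratio, and verify that the chain rule for KL/Hellinger combines cleanly with the Chernoff-plus-covering machinery applied to the joint objective. By contrast, the decentralized aspect itself introduces little added difficulty: since $\Alg_{+,0}(\cdot|D^{t-1}_+,s)$ is by construction measurable with respect to the max-player's own observed history $D^{t-1}_+$, the per-player MLE problem is well-posed without any access to opponent actions, and the min-player's analysis is a carbon copy.
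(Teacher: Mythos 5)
Your overall roadmap (MLE optimality, a $1/N$-cover with a Chernoff-type concentration union-bounded over the cover, the realizability assumption for bias control, then conversion of a Hellinger-type quantity to TV and aggregation) is the same family of argument the paper uses. However, there is a concrete gap in your ``MLE concentration step,'' and your aggregation step silently compensates for it, so the proposal does not stand as written.

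First, the claimed intermediate bound $\E_D[\sum_{t,s} D^2_{\mathrm H}] \lesssim \varepsilon_{+,\real} + \log(\gN_{\Theta_+}TS/\delta)/N$ cannot be obtained the way you describe. ``Sub-additivity of product Hellinger'' is the inequality $D^2_{\mathrm H}(\prod_{t,s} p_{t,s}, \prod_{t,s} q_{t,s}) \le \sum_{t,s} D^2_{\mathrm H}(p_{t,s}, q_{t,s})$, which runs in the \emph{wrong} direction for what you need: you want to pass from a joint-MLE bound on the full-trajectory Hellinger to a bound on the \emph{sum} of per-position Hellingers. The KL chain rule does give equality, and $\mathrm{KL}\ge 2 D_{\mathrm H}^2$ per position, but the standard MLE concentration (Zhang 2006 style, and what the paper imports as Lemma 14 of \citet{lin2023transformers}) controls a Hellinger/Bhattacharyya quantity, not the joint KL, so the conversion is not free. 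Separately, Assumption~\ref{aspt:decentralized_realizability} is a per-$(t,s,a)$ bound; combining it across the $TS$ augmented positions necessarily contributes on the order of $TS\varepsilon_{+,\real}$ to the aggregate log-likelihood ratio, not $\varepsilon_{+,\real}$ as your display asserts. In short, the correct intermediate carries an extra factor of $TS$.

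The reason your final display still matches the theorem is that your aggregation step $\sum_{t,s}\sqrt{x_{t,s}} \le TS\sqrt{\sum_{t,s}x_{t,s}}$ (via $\sqrt{\max}\le\sqrt{\sum}$) is a factor $\sqrt{TS}$ looser than Cauchy--Schwarz ($\sum_{t,s}\sqrt{x_{t,s}} \le \sqrt{TS}\sqrt{\sum_{t,s}x_{t,s}}$). Notice that, had your intermediate bound actually been correct, plugging it into Cauchy--Schwarz would give a final bound of order $\sqrt{TS}\sqrt{\varepsilon_{+,\real}+\log/N}$, a factor $\sqrt{TS}$ stronger than the theorem itself — a red flag. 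What the paper does instead (see the centralized analog in Appendix~\ref{app:centralized_pre_training}, to which the decentralized proof is stated to be analogous) is to bound $(\E[\TV])^2$ per $(t,s)$ via a cover element, relate $1-\E[\exp(-\ell/2)]$ to the squared Hellinger (and then TV), sum over $(t,s)$ to obtain $\sum_{t,s}(\E[\TV])^2 \lesssim TS(\varepsilon_{+,\real}+\log(\gN_{\Theta_+}TS/\delta)/N)$, and only then apply Cauchy--Schwarz to get the stated $TS\sqrt{\varepsilon_{+,\real}}+TS\sqrt{\log(\gN_{\Theta_+}TS/\delta)/N}$. To repair your write-up, you should put the $TS$ factor where it belongs (in the summed squared-Hellinger/TV bound) and replace the crude $TS\sqrt{\cdot}$ aggregation with Cauchy--Schwarz.
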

Theorem~\ref{thm:decentralized_pre_training} demonstrates that in expectation of the pre-training data distribution, i.e., $\sP^{\Alg_0}_\Lambda(D)$, the TV distance between the pre-trained algorithm $\Alg_{\widehat\vtheta_+}$ (resp, $\Alg_{\widehat\vtheta_-}$) and the context algorithm $\Alg_{+, 0}$ (resp, $\Alg_{-, 0}$) can be bounded via two terms: one from the approximate realizability, i.e., $\varepsilon_{+,\real}$ (resp, $\varepsilon_{-,\real}$), and the other from the limited amount of pre-training trajectories, i.e., finite $N$. While we can diminish the second term via a large pre-training dataset (i.e., sufficient pre-training games), the key question is whether the transformer structure is sufficiently expressive to realize the context algorithm, i.e., having a small $\varepsilon_{+,\real}$, which we affirmatively answer via an example of realizing V-learning \citep{jin2023v} in the next subsection.

\subsection{Realizing V-learning}\label{subsec:approx_V_learning}
To demonstrate the capability of transformers in the decentralized game-playing setting, we choose to prove that they can realize the renowned V-learning algorithm \citep{jin2023v}, the first design that breaks the curse of multiple agents in learning Markov games. Particularly, V-learning leverages techniques from adversarial bandits \citep{auer2002nonstochastic} to perform policy updates without observing the opponent's actions. The details of V-learning are provided in Appendix~\ref{subapp:V_learning}, where its unique output rule is also elaborated.

In the following theorem, we demonstrate that a transformer can be constructed to exactly perform V-learning with a suitable parameterization. One additional Assumption~\ref{aspt:division} on the existence of a transformer parameterized by the class of $\Theta_{d, L_D, M_D, d_D, F_D}$ to perform exact division is adopted for the convenience of the proof, while in Appendix~\ref{subapp:division_assumption}, we further demonstrate that the required division operation can be approximated to any arbitrary precision.
\begin{theorem}\label{thm:approx_V_learning}
    With embedding mapping $\tth_+$ and extraction mapping $\ttA$ defined in Appendix~\ref{subapp:approx_V_learning}, under Assumption~\ref{aspt:division}, there exists a transformer $\TF_{\vtheta_+}$ with 
    \begin{align*}
        d \lesssim &HSA, ~~ L \lesssim GHL_D, ~~ \max_{l\in [L]} M^{(l)} \lesssim HS^2 + HSA + M_D, \\
        &d' \lesssim G+ A + d_D, ~~\|\vtheta\|  \lesssim GH^2S + G^3 + F_D,
    \end{align*}
    which satisfies that for all $D_+^{t-1}, s\in \gS, t\in [T]$, $\Alg_{\vtheta_+}(\cdot|D^{t-1}_+,s) = \Alg_{\text{V-learning}}(\cdot|D^{t-1}_+,s)$.
    A similar construction $\TF_{\vtheta_-}$ exists for the min-player's transformer such that  for all $D_-^{t-1}, s\in \gS, t\in [T]$, $\Alg_{\vtheta_-}(\cdot|D^{t-1}_-,s) = \Alg_{\text{V-learning}}(\cdot|D^{t-1}_-,s)$.
\end{theorem}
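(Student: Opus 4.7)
The plan is to construct $\TF_{\vtheta_+}$ explicitly as a token-by-token simulator of V-learning and then verify via the induced-algorithm definition \eqref{eqn:decentralized_induced_alg} that the linearly extracted last-token output matches V-learning's prescribed distribution at every $(s^t,h^t)$. Recall from Appendix~\ref{subapp:V_learning} that V-learning, instantiated for the max-player, maintains for each $(s,h)$ a visit counter $N^h(s)$, a value estimate $V^h(s)$, and an adversarial-bandit iterate $\pi^h_s\in\Delta(\gA)$ driven by an exponential-weights subroutine whose step size is tied to $N^h(s)$; its output rule at $(s,h)$ randomizes over stored past iterates $\{\pi^h_{s,k}\}_{k\le N^h(s)}$ using mixing weights $\alpha^k_{N^h(s)}=\alpha_k\prod_{j=k+1}^{N^h(s)}(1-\alpha_j)$. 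The construction keeps all of this state in reserved scratch coordinates of the token stream and uses successive attention$+$MLP layers to realize each update exactly.

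First I fix the embedding $\tth_+$ so that every token carries (i) one-hot indicators of the current state $s$, step $h$, and episode $g$; (ii) for action-reward tokens, the one-hot of the played action together with the scalar reward; and (iii) a reserved block of $\lesssim HSA$ coordinates, initialized to zero, to be populated by later layers with V-learning bookkeeping. The extractor $\ttA\in\sR^{A\times d_+}$ simply reads a designated $A$-subblock of the last token, which accounts for $d\lesssim HSA$. For the algorithmic dynamics I construct a one-episode simulator of $\lesssim HL_D$ layers that performs V-learning's within-episode updates in a strict backward-$h$ sweep: within each $h$-slice, $\lesssim HS^2$ attention heads key on $(s,h)$ one-hots to isolate past tokens of the current episode and state and gather $V^{h+1}(s')$, while $\lesssim HSA$ further heads aggregate per-action reward sums and visit indicators. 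An MLP of hidden width $\lesssim A+d_D$, together with the division gadget of Assumption~\ref{aspt:division} (contributing $L_D,M_D,d_D,F_D$), produces the importance-weighted loss estimator $\tilde{\ell}^h_s$ and the renormalized exponential-weights iterate $\pi^h_s$, while a residual-add gadget implements $V^h(s)\leftarrow(1-\alpha_{N^h(s)})V^h(s)+\alpha_{N^h(s)}[r+V^{h+1}(s')]$. Chaining $G$ such simulator blocks yields depth $L\lesssim GHL_D$ and a norm bound $\|\vtheta\|\lesssim GH^2S+G^3+F_D$, where $GH^2S$ absorbs the cumulative $V$-range scaling across the $H$-step recursion, $G^3$ absorbs the largest weight appearing in the output-rule mixing distribution over $G$ episodes, and $F_D$ is inherited from the division primitive.

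The output rule is realized at the terminal token by a dedicated attention head that matches the query $(s^t,h^t)$ against past tokens and assigns attention scores proportional to $\alpha^k_{N^{h^t}(s^t)}$; the required partial products are pre-cached into per-episode scratch coordinates during the simulator blocks, accounting for the $G$ term in $d'\lesssim G+A+d_D$. The weighted sum $\sum_k\alpha^k_{N^{h^t}(s^t)}\pi^{h^t}_{s^t,k}$ is thereby written into the $A$-block read by $\ttA$, which after $\proj_{\Delta}$ yields exactly $\Alg_{\text{V-learning}}(\cdot|D^{t-1}_+,s^t)$. The symmetric construction for $\TF_{\vtheta_-}$ is identical up to swapping $\gA\leftrightarrow\gB$ and $a\leftrightarrow b$.

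The hardest part will be arranging attention heads that aggregate only over the correct $(s,h,g)$-indexed past tokens without blowing up head count: since masked attention by default averages over all past positions, I will form keys and queries via tensor products of one-hot indicators and insert the ReLU nonlinearity $\sigmar$ inside each head to threshold non-matching entries to zero, which is precisely what consumes the $HS^2+HSA$ per-layer heads. A secondary but benign obstacle is that the backward recursion $V^h\leftarrow f(V^{h+1},\cdot)$ forces the $H$ intra-episode layers into strict reverse-$h$ order, which constrains layer layout but not depth. The last non-trivial primitive, exact division, is subsumed into Assumption~\ref{aspt:division}, with Appendix~\ref{subapp:division_assumption} supplying the arbitrary-precision approximation that removes the assumption if desired.
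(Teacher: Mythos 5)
Your high-level framing — simulate V-learning token-by-token inside the transformer, then read the action distribution off the last token via $\ttA$ and $\proj_\Delta$ — matches the paper, and your parameter bounds coincide with the theorem statement. However, the two non-trivial parts of the construction are done wrong, and the errors are conceptual rather than cosmetic.

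First, the ``strict backward-$h$ sweep'' inside a one-episode simulator is the wrong template. V-learning is model-free and incremental: at time $(g,h)$ it performs $\tilde V^h(s^{g,h}) \leftarrow (1-\alpha_n)\tilde V^h(s^{g,h}) + \alpha_n\big(r^{g,h} + V^{h+1}(s^{g,h+1}) + \beta_n\big)$, where the $\tilde V^h$ being overwritten and the $V^{h+1}$ being read are the running estimates carried over from \emph{all previous episodes}, not quantities computed within episode $g$. A backward-$h$ sweep that writes $V^H,\dots,V^1$ from a single episode's tokens is the value-iteration / UCB-VI / VI-ULCB philosophy, which the paper explicitly distinguishes from V-learning. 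The paper resolves the recursion by a time-forward unrolling over $\tau\in[T]$: one fixed-depth block reads the V-learning state cached in token $\vh^d_{2\tau-1}$, advances it one step, and writes it to $\vh^d_{2\tau+1}$; stacking $T$ such blocks (each of depth $O(L_D)$ for Step~3) gives $L = O(GHL_D)$. Your ``chain $G$ one-episode blocks of depth $HL_D$'' reaches the same layer count but by an architecture that cannot implement the cross-episode running averages $\tilde V^h$, $N^h$, and the cumulative exponential-weight losses.

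Second, you realize the wrong distribution at the terminal token. $\Alg_{\text{V-learning}}(\cdot\mid D^{t-1}_+,s^t)$ in the theorem is the \emph{played} policy during learning, namely the current exponential-weights iterate $\mu^{h}(\cdot\mid s^t)$ (Step~4 of the paper's proof produces exactly this). The $\alpha$-weighted mixture $\sum_{k}\alpha_{n,k}\,\mu^{g_k,h}(\cdot\mid s)$ you construct is V-learning's \emph{output policy} $\hat\mu$ from Algorithm~\ref{alg:v_learning_output}, which the paper applies \emph{outside} the transformer when extracting $(\hat\mu,\hat\nu)$ in Theorem~\ref{thm:decentralized_overall}. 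Matching the output rule instead of the played policy would make the pre-training objective in Sec.~\ref{subsubsec:decentralized_basics} (maximizing the log-likelihood of the sampled actions $a^t_{i,s}\sim \Alg_{+,0}$) inconsistent with the realizability claim $\varepsilon_{+,\real}=0$ needed in Theorem~\ref{thm:decentralized_overall}.
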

The proof of Theorem~\ref{thm:approx_V_learning} (presented in Appendix~\ref{subapp:approx_V_learning}) is challenging because V-learning is a model-free design, while UCB-VI \citep{azar2017minimax} studied in \citet{lin2023transformers} and VI-ULCB \citep{bai2020provable} later presented in Sec.~\ref{subsec:approx_VI_ULCB} are both model-based ones. 
We believe this result deepens our understanding of the capability of pre-trained transformers in decision-making, i.e., they can realize both model-based and model-free designs, showcasing their further potentials.

More specifically, with the embedded trajectory as the input, the model-based philosophy is natural for the masked attention mechanism, i.e., the value computation at each step is directly over all raw inputs in previous steps. Thus, the construction procedure is straightforward as (input$_1$, input$_2$, ..., input$_t$) $\to$ (value$_1$, value$_2$, ..., value$_t$). However, the model-free designs are different, where value computation at one step requires previous values (instead of raw inputs). In other words, the construction procedure is a recursive one as (input$_1$, input$_2$, ..., input$_t$) $\to$ (value$_1$, input$_2$, ..., input$_t$) $\to$ (value$_1$, value$_2$, ..., input$_t$) $\to$ ... $\to$ (value$_1$, value$_2$, ..., value$_{t}$), whose realization requires carefully crafted constructions.

\subsection{The Overall ICGP Capablity}
Finally, built upon the obtained results, the following theorem demonstrates the ICGP capability of pre-trained transformers in the decentralized setting.
\begin{theorem}\label{thm:decentralized_overall}
    Let $\Theta_+$ and $\Theta_-$ be the classes of transformers satisfying the requirements in Theorem~\ref{thm:approx_V_learning} and $(\Alg_{+,0}, \Alg_{-,0})$ both be V-learning. Let $(\hat{\mu}, \hat{\nu})$ be the output policies via the output rule of V-learning. Denoting $\Alg_{\widehat{\vtheta}} = (\Alg_{\widehat{\vtheta}_+}, \Alg_{\widehat{\vtheta}_-})$ and $\gN_{\Theta} = \gN_{\Theta_+}\gN_{\Theta_-}$. Then, with probability at least $1-\delta$, it holds that
    \begin{align*}
            \E_{D\sim \sP^{\Alg_{\widehat{\theta}} }_\Lambda}\left[V_M^{\dagger, \hat{\nu}}(s^1) - V_M^{\hat{\mu}, \dagger}(s^1)\right] \lesssim \sqrt{\frac{H^5S (A \vee B)\log(SABT)}{G}}  +THS\sqrt{\frac{\log\left(TS\gN_{\Theta}/\delta\right)}{N}}.
    \end{align*}
\end{theorem}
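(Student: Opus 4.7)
The plan is to combine three ingredients: the exact-realizability result of Theorem~\ref{thm:approx_V_learning}, the pre-training generalization guarantee of Theorem~\ref{thm:decentralized_pre_training}, and the classical Nash-gap convergence rate of V-learning from \citet{jin2023v}. The glue between the latter two is a standard change-of-measure argument that transfers a performance bound from the offline data-collection law $\sP^{\Alg_0}_\Lambda$ to the inference-time law $\sP^{\Alg_{\widehat{\vtheta}}}_\Lambda$.

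First, I would apply Theorem~\ref{thm:approx_V_learning} to each player: it supplies $\vtheta_+^*\in\Theta_+$ and $\vtheta_-^*\in\Theta_-$ with $\Alg_{\vtheta_\pm^*}$ equal pointwise to V-learning, so Assumption~\ref{aspt:decentralized_realizability} holds with $\varepsilon_{+,\real}=\varepsilon_{-,\real}=0$. Feeding this into Theorem~\ref{thm:decentralized_pre_training} for each player and using subadditivity of TV for decoupled joint policies (i.e., $\TV((\mu,\nu),(\mu',\nu'))\le \TV(\mu,\mu')+\TV(\nu,\nu')$) yields, with probability at least $1-\delta$,
$$\E_{D\sim \sP^{\Alg_0}_\Lambda}\Bigl[\sum_{t\in[T],\,s\in\gS}\TV(\Alg_0,\Alg_{\widehat{\vtheta}}\mid D^{t-1},s)\Bigr]\;\lesssim\; TS\sqrt{\log(\gN_\Theta TS/\delta)/N}.$$

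Next, I would carry out the chain-rule step. Because $\sP^{\Alg}_\Lambda$ factorizes as environment prior $\times$ environment-only transitions and rewards $\times$ algorithm action distributions, all of the discrepancy between $\sP^{\Alg_0}_\Lambda$ and $\sP^{\Alg_{\widehat{\vtheta}}}_\Lambda$ is driven by the per-step algorithm TVs evaluated at the visited $s^t$, so
$$\TV\bigl(\sP^{\Alg_0}_\Lambda,\,\sP^{\Alg_{\widehat{\vtheta}}}_\Lambda\bigr)\;\le\; \E_{\sP^{\Alg_0}_\Lambda}\Bigl[\sum_{t\in[T]}\TV(\Alg_0,\Alg_{\widehat{\vtheta}}\mid D^{t-1},s^t)\Bigr],$$
and the right-hand side is upper bounded by the sum-over-all-$s$ quantity from the previous display. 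Setting $f(M,D):=V_M^{\dagger,\hat\nu}(s^1)-V_M^{\hat\mu,\dagger}(s^1)\in[0,2H]$ (a measurable function of the full trajectory), the variational definition of TV gives $\E_{\sP^{\Alg_{\widehat{\vtheta}}}_\Lambda}[f]\le \E_{\sP^{\Alg_0}_\Lambda}[f]+2H\cdot\TV(\sP^{\Alg_0}_\Lambda,\sP^{\Alg_{\widehat{\vtheta}}}_\Lambda)$, which contributes exactly the $THS\sqrt{\log(TS\gN_\Theta/\delta)/N}$ term of the target bound.

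Finally, I would invoke the standard V-learning Nash-gap guarantee of \citet{jin2023v}: under $\sP^{\Alg_0}_\Lambda$ the interaction is pure V-learning, and after $G$ episodes its output rule produces $(\hat\mu,\hat\nu)$ with $\E_{\sP^{\Alg_0}_\Lambda}[f]\lesssim \sqrt{H^5S(A\vee B)\log(SABT)/G}$ for every fixed $M$, hence also after integration against $\Lambda$. Combining this with the change-of-measure inequality of the previous paragraph completes the proof. The main obstacle is the chain-rule/change-of-measure step: one has to verify that the sum-over-all-$s\in\gS$ bound from Theorem~\ref{thm:decentralized_pre_training}—which is precisely what the augmented pre-training component $D'_i$ was introduced to support—dominates the sum-over-visited-$s^t$ TV needed to control the full-trajectory TV; given this, the remaining arithmetic is routine and the other two pieces are immediate from their respective theorems.
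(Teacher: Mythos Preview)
Your overall plan is the right one and matches the paper's three-ingredient strategy, but there is a genuine gap in the change-of-measure step. You treat $f(M,D)=V_M^{\dagger,\hat\nu}(s^1)-V_M^{\hat\mu,\dagger}(s^1)$ as a single measurable function of $(M,D)$ and then apply the variational TV bound. However, $(\hat\mu,\hat\nu)$ is produced by the V-learning output rule, which mixes over the \emph{running policies} $\{\mu^{g,h}(\cdot\mid s)\}$ actually used during the interaction (see Alg.~\ref{alg:v_learning_output}). When the trajectory is generated by $\Alg_{\widehat\vtheta}$ those running policies are the transformer's; when it is generated by $\Alg_0$ they are V-learning's. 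Thus $\hat\mu$ under $\sP^{\Alg_{\widehat\vtheta}}_\Lambda$ and $\hat\mu$ under $\sP^{\Alg_0}_\Lambda$ are \emph{different} functions of $D$, and your inequality $\E_{\widehat\vtheta}[f]\le\E_0[f]+2H\cdot\TV$ is comparing two different $f$'s. Concretely, the quantity $\E_{\sP^{\Alg_0}_\Lambda}[f]$ you would like to bound by the Jin et al.\ rate is the Nash gap of the \emph{transformer's} output policy on V-learning trajectories, to which the V-learning theorem does not apply.

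The paper closes this gap via Lemma~\ref{lem:decentralized_generalization}, which explicitly decomposes $\E_\alpha[V_M^{\hat\mu_\alpha,\dagger}]-\E_\beta[V_M^{\hat\mu_\beta,\dagger}]$ into a ``term II'' that is exactly your change-of-measure argument, plus a ``term I'' that compares the two algorithms' output policies on a \emph{fixed} trajectory law. Bounding term I is not routine: one must first replace the two distinct best responses by a common one via $V_M^{\hat\mu_\alpha,\dagger}-V_M^{\hat\mu_\beta,\dagger}\le V_M^{\hat\mu_\alpha,\nu_\dagger(\hat\mu_\beta)}-V_M^{\hat\mu_\beta,\nu_\dagger(\hat\mu_\beta)}$, then unroll the V-learning output rule (the random episode index and the $\alpha_{n,i}$-weighted mixture) to reduce $\TV(\hat\mu_\alpha^{g,h},\hat\mu_\beta^{g,h})$ to sums of per-step policy TVs over all $s\in\gS$. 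Only after this extra step does Theorem~\ref{thm:decentralized_pre_training} become applicable; the paper itself flags this as the place where the analysis ``is drastically different'' from the single-agent case in \citet{lin2023transformers}. Your proposal should incorporate this missing comparison of output policies before the change-of-measure can legitimately hand the problem off to the V-learning guarantee.
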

With the obtained upper bound on the approximation error of NE, Theorem~\ref{thm:decentralized_overall} demonstrates the ICGP capability of pre-trained transformers as the algorithms $\Alg_{\hat{\vtheta}_+}$ and $\Alg_{\hat{\vtheta}_-}$ are fixed during interactions with varying inference games (i.e., no parameter updates). When prompted by the interaction trajectory in the current game, they are capable of deciding the future interaction strategy and finally provide policy pairs that are approximate NE. We further note that during both pre-training and inference, each player's transformer takes inputs of its own observed trajectories, but not the opponent's actions, which reflects the decentralized requirement. Moreover, the approximation error in Theorem~\ref{thm:decentralized_overall} depends on $A\vee B$ instead of $AB$ as in the later Theorem~\ref{thm:centralized_overall}, evidencing the benefits of decentralized learning.

\textbf{Proof Sketch.} 
The proof of Theorem~\ref{thm:decentralized_overall} (presented in Appendix~\ref{app:decentralized_overall}) rely on the following, decomposition, where $\E_{0} [\cdot]$ and $\E_{\widehat{\vtheta}} [\cdot]$ are with respect to $\sP^{\Alg_0}_\Lambda$ and $\sP^{\Alg_{\widehat{\vtheta}}}_\Lambda$, respectively:
\begin{align}
            \E_{\widehat{\vtheta}}\left[ V_M^{\dagger, \hat{\nu}}(s^1) - V_M^{\hat{\mu}, \dagger}(s^1)\right] &= \E_{0}\left[V_M^{\dagger, \hat{\nu}}(s^1) - V_M^{\hat{\mu}, \dagger}(s^1)\right] \label{eqn:decentralized_decomposition}\\
            &+ \E_{\widehat{\vtheta}}\left[V_M^{\dagger, \hat{\nu}}(s^1)\right] - \E_{0}\left[V_M^{\dagger, \hat{\nu}}(s^1)\right] \notag + \E_{0}\left[V_M^{\hat{\mu},\dagger}(s^1)\right] - \E_{\widehat{\vtheta}}\left[V_M^{\hat{\mu},\dagger}(s^1)\right],\notag.
    \end{align}
It can be observed that the first decomposed term is the performance of the considered V-learning, which can be obtained following \citet{jin2023v} as in Theorem~\ref{thm:performance_V_learning}.

Then, the remaining terms concern the performance of the policy pair $(\hat{\mu}, \hat{\nu})$ learned from $\Alg_0$ and $\Alg_{\widehat\vtheta}$ \textit{against their own best responses}, respectively. This is drastically different from the consideration in \citet{lin2023transformers}, which only bounds the performance of the learned policies, i.e., $$\E_{0}\left[V_M^{\hat{\mu}, \hat{\nu}}(s^1)\right] - \E_{\widehat{\vtheta}}\left[V_M^{\hat{\mu}, \hat{\nu}}(s^1)\right].$$ The involvement of best responses complicates the analysis. After careful treatments in Appendix~\ref{app:decentralized_overall}, we obtain the following lemma to characterize these terms.

\begin{lemma}\label{lem:decentralized_generalization}
    For any two decentralized algorithms $\Alg_\alpha$ and $\Alg_\beta$, we denote their performed policies for episode $g$ are $(\mu^g_\alpha, \nu^g_\alpha)$ and $(\mu^g_\beta, \nu^g_\beta)$, and their final output policies via the output rule of V-learning (see Appendix~\ref{subapp:V_learning}) are $(\hat{\mu}_\alpha, \hat{\nu}_\alpha)$ and $(\hat{\mu}_\beta, \hat{\nu}_\beta)$. For $\{\hat{\mu}_\alpha, \hat{\mu}_\beta\}$, it holds that
    \begin{align*}
        \E_{\alpha}\left[V_M^{\hat{\mu}_\alpha,\dagger}(s^1)\right] - \E_{\beta}\left[V_M^{\hat{\mu}_\beta,\dagger}(s^1)\right] \lesssim &H \cdot \sum\nolimits_{t\in [T], s\in \gS}\E_{\alpha} \left[\TV\left(\mu^{t}_{\alpha}, \mu^t_{\beta}|D^{t-1}_+,s\right) \right]\\
        + &H \cdot \sum\nolimits_{t\in [T], s\in \gS}\E_{\alpha}\left[\TV\left(\nu^t_{\alpha}, \nu^t_{\beta}|D^{t-1}_-,s\right)\right],
    \end{align*}
    where $\E_{\alpha} [\cdot]$ and $\E_{\beta} [\cdot]$ are with respect to $\sP^{\Alg_\alpha}_\Lambda$ and $\sP^{\Alg_\beta}_\Lambda$.
    A similar result holds for $\{\hat{\nu}_\alpha, \hat{\nu}_\beta\}$.
\end{lemma}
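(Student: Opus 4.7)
The plan is to split the LHS into a \emph{policy-difference} term and a \emph{distribution-shift} term,
\begin{align*}
\E_\alpha\left[V_M^{\hat{\mu}_\alpha,\dagger}(s^1)\right] - \E_\beta\left[V_M^{\hat{\mu}_\beta,\dagger}(s^1)\right]
&= \underbrace{\E_\alpha\left[V_M^{\hat{\mu}_\alpha,\dagger}(s^1) - V_M^{\hat{\mu}_\beta,\dagger}(s^1)\right]}_{(\mathrm{I})} \\
&\quad + \underbrace{\E_\alpha\left[V_M^{\hat{\mu}_\beta,\dagger}(s^1)\right] - \E_\beta\left[V_M^{\hat{\mu}_\beta,\dagger}(s^1)\right]}_{(\mathrm{II})},
\end{align*}
and bound each by a total-variation argument. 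In $(\mathrm{I})$ and in the left expectation of $(\mathrm{II})$, $\hat{\mu}_\beta$ is interpreted as the V-learning output produced from the trajectory $D$ (still sampled under $\Alg_\alpha$) when the per-episode played policies $\{\Alg_\beta(\cdot|D^{g-1}_+,s)\}_{g,s}$ are substituted into the V-learning output rule; this is well-defined since the rule is a fixed recipe applied to a sequence of action distributions.

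For $(\mathrm{II})$, since $V_M^{\hat{\mu}_\beta,\dagger}(s^1)\in[0,H]$ on every realization, the difference is at most $H\cdot\TV(\sP^{\Alg_\alpha}_\Lambda,\sP^{\Alg_\beta}_\Lambda)$. A standard chain-rule expansion (conditioning on $M\sim\Lambda$ and telescoping over $t$) upper bounds the sequential TV by $\E_\alpha\bigl[\sum_t \TV(\Alg_\alpha(\cdot,\cdot|D^{t-1},s^t),\Alg_\beta(\cdot,\cdot|D^{t-1},s^t))\bigr]$. Replacing the single realized $s^t$ by a sum over all $s\in\gS$ is valid since each summand is non-negative, and the decentralized product structure $\Alg(\cdot,\cdot|D^{t-1},s)=\mu^t(\cdot|D^{t-1}_+,s)\otimes\nu^t(\cdot|D^{t-1}_-,s)$, together with the subadditivity $\TV(P_1\otimes Q_1, P_2\otimes Q_2)\leq \TV(P_1,P_2)+\TV(Q_1,Q_2)$, splits each summand into the two per-player terms, giving exactly the RHS of the lemma (with $\TV_\mu$ and $\TV_\nu$ both under $\E_\alpha$).

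For $(\mathrm{I})$, I would use $V_M^{\hat{\mu}_\alpha,\dagger}-V_M^{\hat{\mu}_\beta,\dagger}\leq\max_\nu(V_M^{\hat{\mu}_\alpha,\nu}-V_M^{\hat{\mu}_\beta,\nu})$ and then, for each fixed opponent $\nu$, invoke a performance-difference (simulation) lemma that bounds the value gap by $H$ times the expected sum of the per-step TV distances between $\hat{\mu}_\alpha$ and $\hat{\mu}_\beta$ along an $(\hat{\mu}_\alpha,\nu)$ rollout; upper-bounding the visitation-weighted sum by a state-wise sum over $\gS$ reduces the task to controlling $\TV(\hat{\mu}_\alpha(\cdot|s,h),\hat{\mu}_\beta(\cdot|s,h))$ in terms of per-episode TV distances. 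The V-learning output rule samples, at each $(s,h)$, a past-episode index $g$ with weights depending only on the visit counts of $(s,h)$ in $D_+$ (identical under $\alpha$ and $\beta$) and then plays $\mu^g(\cdot|D^{g-1}_+,s)$. Coupling the two rollouts by reusing the same episode-index draws, the only source of divergence at each step is $\mu^g_\alpha$ versus $\mu^g_\beta$, so the per-step TV is dominated by a convex combination of $\TV(\mu^g_\alpha,\mu^g_\beta|D^{g-1}_+,s)$ over past episodes $g$; summing over $(s,h)$ and taking $\alpha$-expectation collapses into $H\sum_{t,s}\E_\alpha[\TV(\mu^t_\alpha,\mu^t_\beta|D^{t-1}_+,s)]$.

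The main obstacle is the non-Markovian nature of the V-learning certified policy in $(\mathrm{I})$: the output is a recursively defined policy that couples past-episode index samples across steps, so the performance-difference argument must be carried out under an explicit coupling of the episode-index draws under $\hat{\mu}_\alpha$ and $\hat{\mu}_\beta$ rather than a naive per-state TV comparison. Once this coupling is in place, the standard simulation-lemma machinery closes the argument; combining $(\mathrm{I})$ with $(\mathrm{II})$ yields the stated bound, and the symmetric claim for $\{\hat{\nu}_\alpha,\hat{\nu}_\beta\}$ follows by swapping the roles of the two players.
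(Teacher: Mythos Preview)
Your proposal is correct and matches the paper's proof: both use the same $(\mathrm{I})+(\mathrm{II})$ decomposition, handle $(\mathrm{II})$ by telescoping over $t$ and splitting the product TV into the $\mu$- and $\nu$-contributions, and handle $(\mathrm{I})$ by fixing the opponent at $\nu_\dagger(\hat{\mu}_\beta)$, telescoping over $h$, and exploiting that $\hat{\mu}_\alpha,\hat{\mu}_\beta$ are built from the same trajectory $D$ (hence share the visit counts and mixing weights $\alpha_{n,i}$) so that convexity of TV collapses the per-step mixture to a weighted sum of $\TV(\mu^{g}_\alpha,\mu^{g}_\beta\mid D^{g-1}_+,s)$. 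The paper additionally makes explicit the averaging over the initial draw $g\sim\texttt{Unif}([G])$ in the V-learning output rule and then uses $\alpha_{n,i}\le 1$ to drop the weights, which is precisely the ``coupling of episode-index draws'' step you flag as the main obstacle.
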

With Lemma~\ref{lem:decentralized_generalization}, we can incorporate Theorem~\ref{thm:decentralized_pre_training} to upper bound the TV distance between $\Alg_0$ and $\Alg_{\widehat\vtheta}$, which together with Theorem~\ref{thm:approx_V_learning} establish $\varepsilon_{\real,+} = \varepsilon_{\real,-} = 0$ in this case, leading to the desired performance guarantee in Theorem~\ref{thm:decentralized_overall}.  We here further note that the effectiveness of Theorem~\ref{thm:decentralized_pre_training} in capturing the bound in Lemma~\ref{lem:decentralized_generalization} over all $s\in \gS$ credits to the augmented dataset $D'$, which provides diverse data of all $s\in \gS$.

\section{Centralized Learning}
In this section, we discuss the scenario of centralized learning, i.e., training one joint model to control both players' interactions. This is also known as the self-play setting \citep{bai2020provable,liu2021sharp,jin2022power,xiong2022self,bai2020near}. Following a similar procedure as the decentralized discussions, we first provide supervised pre-training guarantees and then demonstrate that transformers are capable of realizing the renowned VI-ULCB algorithm \citep{bai2020provable}. It is thus established that in a centralized learning setting, the pre-trained transformer can still effectively perform ICGP and approximate NE.

\subsection{Supervised Pre-training Results}\label{subsec:centralized_pretrain}
The same training dataset $\{\overline{D}_i: i\in [N]\}$ as in Section~\ref{subsubsec:decentralized_basics} is considered. As the centralized setting is studied here, no further split of the dataset is needed. Moreover, one $d$-dimensional mapping $\tth:  \gS \cup\left(\gA \times \gB \times \gR\right) \to \sR^{d}$ can be designed to embed the trajectories, and the induced algorithm $\Alg_\vtheta(\cdot,\cdot|D^t, s^t)$ from the transformer $\TF_\vtheta$ can be obtained via a fixed linear extraction mapping $\ttE$ similarly as Eqn.~\eqref{eqn:decentralized_induced_alg}. Finally, the MLE training is performed with $\Theta:=\Theta_{d, L, M, d', F}$ as $\widehat{\vtheta} = \argmax\nolimits_{\vtheta\in \Theta}\frac{1}{N}\sum\nolimits_{i\in [N]}\sum\nolimits_{t\in [T]}\sum\nolimits_{s\in \gS} \log\left(\Alg_{\vtheta}(a^{t}_{i,s}, b^{t}_{i,s}|D^{t-1}_{i},s)\right)$.

Then, a generalization guarantee of $\Alg_{\widehat\vtheta}$ can be provided similarly as Theorem~\ref{thm:decentralized_pre_training}, which is deferred to Theorem~\ref{thm:centralized_pre_training}. This centralized result also implies that the pre-trained centralized algorithm performs similarly as the context algorithm, with errors caused by the approximate realizability and the finite pre-training data.

\subsection{Realizing VI-ULCB}\label{subsec:approx_VI_ULCB}
The VI-ULCB algorithm \citep{bai2020provable} is one of the first provably efficient centralized learning designs for Markov games. It extends the key idea of using confidence bounds to incorporate uncertainties from stochastic bandits and MDPs \citep{auer2002finite, azar2017minimax} to handle competitive environments, and has further inspired many extensions in Markov games \citep{liu2021sharp,xie2020learning,huang2021towards,bai2020near,jin2022power}. As VI-ULCB is highly representative, we choose it as the example for realization in the centralized setting to demonstrate the capability of transformers.

To make VI-ULCB practically implementable, we adopt an approximate CCE solver powered by multiplicative weight update (MWU) in the place of its originally required general-sum NE solver (which is computationally demanding). This modification is demonstrated as provably efficient in later works \citep{liu2021sharp,xie2020learning,bai2020near}.
Then, the following result illustrates that a transformer can be constructed to exactly perform the MWU-version of VI-ULCB.
\begin{theorem}\label{thm:approx_VI_ULCB}
    With embedding mapping $\tth$ and extraction mapping $\ttE$ defined in Appendix~\ref{subapp:approx_VI_ULCB}, there exists a transformer $\TF_\vtheta$ with 
    \begin{align*}
        &d \lesssim HS^2AB, \qquad L \lesssim GHS, \quad \max\nolimits_{l\in [L]} M^{(l)} \lesssim HS^2AB, \\
        &d'  \lesssim G^2HS^2AB, \quad \|\vtheta\|  \lesssim HS^2AB + G^3 + GH,
    \end{align*}
    which satisfies that for all $D^{t-1}, s\in \gS, t\in [T]$, $\Alg_{\vtheta}(\cdot, \cdot|D^{t-1},s) = \Alg_{\text{VI-ULCB}}(\cdot, \cdot|D^{t-1},s)$.
\end{theorem}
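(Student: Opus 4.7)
The plan is to give an explicit transformer parameterization that realizes MWU-based VI-ULCB step by step, exploiting the observation made in Sec.~\ref{subsec:approx_V_learning} that model-based designs map naturally onto a masked attention stack: every quantity used at episode $g$ is a deterministic function of the raw trajectory $D^{(g-1)H}$, so it can be read directly from the context without the recursive value-propagation needed for V-learning. I will design the embedding $\tth$ so that every token carries a one-hot encoding of $(s,a,b,r)$, a $(g,h)$ positional marker, and a zero-initialized workspace of width $O(HS^2AB)$ into which successive layers write the empirical counts $N^g_h(s,a,b)$ and $N^g_h(s,a,b,s')$, the plug-in kernel $\widehat{\sT}^g_h$ and reward $\widehat r^g_h$, the Hoeffding-style bonus $\beta^g_h$, the optimistic and pessimistic $Q$-values $(\overline Q^g_h, \underline Q^g_h)$, the MWU iterates used to solve the CCE subproblem, and finally the induced policy pair $(\mu^g_h, \nu^g_h)$; the extraction matrix $\ttE$ then simply reads the $(\mu^g_h,\nu^g_h)$ slot associated with the current state-token.

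For each episode $g$ I will allocate a block of $O(HS)$ layers organized into three phases. Model estimation uses one masked attention layer whose heads are indexed by $(h,s,a,b,s')$ tuples and pull the needed indicator sums from the tokens preceding the current episode, followed by an MLP that invokes the division subroutine from Assumption~\ref{aspt:division} to convert counts into $\widehat{\sT}^g_h$ and $\beta^g_h$. Backward induction over $h = H, H-1, \ldots, 1$ then uses one layer per step whose MLP evaluates $\overline Q^g_h = \widehat r^g_h + \widehat{\sT}^g_h \overline V^g_{h+1} + \beta^g_h$ clipped to $[0, H]$, together with its lower counterpart $\underline Q^g_h$. The CCE subroutine implements the $O(\log(AB)/\eta^2)$ MWU rounds per $(h,s)$ needed to approximate the matrix-game equilibrium of $(\overline Q^g_h(s,\cdot,\cdot),\underline Q^g_h(s,\cdot,\cdot))$; each round is a single attention$+$MLP pair that updates exponentiated cumulative losses, renormalizes via the same division subroutine, and commits the next iterate to the workspace. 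Parallelizing the MWU rounds over $s$ through independent attention heads keeps the per-episode depth at $O(HS)$ and the overall depth at the claimed $L \lesssim GHS$.

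The main obstacle is orchestrating these sequential dependencies inside the uniform, token-parallel transformer computation: VI-ULCB is intrinsically sequential in the backward direction of $h$ and in the MWU iteration index, whereas every layer acts on every token simultaneously. I plan to handle this through token-level gating driven by the $(g,h,s)$ positional code, designing each layer's attention and MLP weights so that only the token(s) matching the current phase produce nonzero updates and the residual connections preserve every other workspace slot verbatim. A secondary obstacle is that neither exact division nor exact softmax renormalization is implementable by a finite transformer, which I absorb by folding a $\Theta_{d,L_D,M_D,d_D,F_D}$-subtransformer from Assumption~\ref{aspt:division} into every such operation.

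The stated resource bounds then follow by bookkeeping: $d$ and $\max_l M^{(l)}$ are dominated by the $HS^2AB$ workspace slots and heads used during model estimation; $d'$ reaches $G^2 HS^2AB$ because the MLP must carry MWU-loss sums across up to $G$ rounds for each of the $HS^2AB$ coordinates; and the norm bound decomposes into $HS^2AB$ from the aggregation matrices, $GH$ from rescaling values clipped at $H$ across $GH$ time steps, and $G^3$ from the learning-rate and step-size scaling of the MWU subroutine. Exactness $\Alg_\vtheta(\cdot,\cdot|D^{t-1},s) = \Alg_{\text{VI-ULCB}}(\cdot,\cdot|D^{t-1},s)$ for every $(D^{t-1},s,t)$ then holds by construction, since every intermediate quantity agrees exactly with its algorithmic counterpart at the layer that produces it.
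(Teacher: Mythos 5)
Your high-level phase decomposition (counts $\to$ plug-in model $\to$ bonuses $\to$ backward induction $\to$ MWU rounds $\to$ policy) matches the paper's six-step construction closely, but the architectural scaffolding you wrap around it does not give the claimed depth bound. The paper executes each phase \emph{once}, on all tokens in parallel: because every token at time $t$ causally attends only to tokens $\le t$, the policy for episode $g$ is computed in the slot of its own state-token concurrently with every other episode's policy, so there is no episode loop in the depth dimension. You instead allocate a fresh block of layers per episode $g$, and then within each block you still need $\Theta(G)$ sequential MWU rounds (your own estimate $O(\log(AB)/\eta^2) = \Theta(G)$ once $\eta = \Theta(\sqrt{\log A / G})$, as in Alg.~\ref{alg:MWU}). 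Those rounds cannot be parallelized in the round index; only the $(h,s)$ coordinates can be parallelized across heads. So your per-episode block has depth $\Omega(G)$, not $O(HS)$, and the total depth is $\Omega(G^2)$ rather than the claimed $L \lesssim GHS$. In the paper's single-pass layout (Step~4), the $G$ MWU rounds are stacked once, shared across all tokens, and yield $L = O(GHS)$ total without ever multiplying by a second factor of $G$.

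A second, independent discrepancy is your reliance on Assumption~\ref{aspt:division} for the count-to-estimate division in the model-estimation phase (and again inside MWU for ``renormalization''). Theorem~\ref{thm:approx_VI_ULCB} is stated \emph{without} Assumption~\ref{aspt:division}, unlike Theorem~\ref{thm:approx_V_learning}; if your construction folds in an abstract $\Theta_{d,L_D,M_D,d_D,F_D}$ divider, the bounds should carry $L_D, M_D, d_D, F_D$ factors, but the theorem's bounds do not. The paper instead reuses the Step-2 construction of \citet{lin2023transformers} for realizing UCB-VI, which approximates division explicitly with a wide ReLU MLP — that is precisely the source of the $d' \lesssim G^2HS^2AB$ term, not the ``MWU-loss sums across $G$ rounds'' rationale you give. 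Finally, MWU normalization does not need any division subroutine at all: in the paper's Step~4.2 it is handled directly by a softmax-activated MLP layer, which is one of the allowed activations in the architecture.
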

One observation from the proof of Theorem~\ref{thm:approx_VI_ULCB} (presented in Appendix~\ref{subapp:approx_VI_ULCB}) is that transformer layers can perform MWU so that an approximate CCE can be found, which is not reported in \citet{lin2023transformers} and further demonstrates the in-context learning capability of transformers in playing normal-form games (since MWU is one of the most basic designs).

\subsection{The Overall ICGP Capability}\label{subsec:decentralized_overall}
With Theorem~\ref{thm:approx_VI_ULCB} showing VI-ULCB can be exactly realized (i.e., $\varepsilon_{\real} = 0$ in  Assumption~\ref{aspt:centralized_realizability}), we can further prove an overall upper bound of the approximation error of NE by $\Alg_{\widehat{\vtheta}}$ via the following theorem, demonstrating the ICGP capability of transformers.
\begin{theorem}\label{thm:centralized_overall}
    Let $\Theta$ be the class of transformers satisfying the requirements in Theorem~\ref{thm:approx_VI_ULCB} and $\Alg_0$ be VI-ULCB. For all $(g, h, s) \in [G]\times  [H] \times \gS$, let $(\mu^{g,h}(\cdot|s), \nu^{g,h}(\cdot|s))$ be the marginalized policies of $\Alg_{\widehat{\vtheta}}(\cdot, \cdot|D^{t-1},s)$. Then, with probability at least $1-\delta$, it holds that
        \begin{align*}
            \E_{\sP^{\Alg_{\widehat{\vtheta}}}_\Lambda, \hat{\mu}, \hat{\nu}}\left[V_M^{\dagger, \hat{\nu}}(s^1) - V_M^{\hat{\mu}, \dagger}(s^1)\right] \lesssim \sqrt{\frac{H^4S^2AB\log(SABT)}{G}}  + THS\sqrt{\frac{\log(TS\gN_{\Theta})/\delta)}{N}},
    \end{align*}
    where $\hat{\mu}$ and $\hat{\nu}$ are uniformly sampled as $\hat{\mu}\sim \texttt{Unif}\{\mu^1, \cdots, \mu^G\}$ and $\hat{\nu}\sim \texttt{Unif}\{\nu^1, \cdots, \nu^G\}$, with $\mu^g:= \{\mu^{g,h}(\cdot|s): (h, s)\in [H] \times \gS\}$ and $\nu^g:= \{\nu^{g,h}(\cdot|s): (h, s)\in [H] \times \gS\}$, and $\E_{\hat{\mu}, \hat{\nu}}$ is with respect to the process of policy sampling.
\end{theorem}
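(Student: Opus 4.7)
The approach mirrors the decentralized argument in Theorem~\ref{thm:decentralized_overall}: I decompose the NE gap under the inference distribution $\sP^{\Alg_{\widehat{\vtheta}}}_\Lambda$ into (i) the NE gap under the context-algorithm distribution $\sP^{\Alg_0}_\Lambda$ plus (ii) two distribution-shift terms applied to the best-response values. Concretely, analogously to~\eqref{eqn:decentralized_decomposition},
\begin{align*}
\E_{\widehat{\vtheta}}\left[V_M^{\dagger,\hat{\nu}}(s^1) - V_M^{\hat{\mu},\dagger}(s^1)\right]
&= \E_{0}\left[V_M^{\dagger,\hat{\nu}}(s^1) - V_M^{\hat{\mu},\dagger}(s^1)\right] \\
&\quad + \E_{\widehat{\vtheta}}\left[V_M^{\dagger,\hat{\nu}}(s^1)\right] - \E_{0}\left[V_M^{\dagger,\hat{\nu}}(s^1)\right] \\
&\quad + \E_{0}\left[V_M^{\hat{\mu},\dagger}(s^1)\right] - \E_{\widehat{\vtheta}}\left[V_M^{\hat{\mu},\dagger}(s^1)\right],
\end{align*}
where all expectations additionally average over the uniform sampling of $(\hat{\mu},\hat{\nu})$ from the per-episode marginalized policies $\{\mu^g,\nu^g\}_{g\in [G]}$.

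The first summand is the NE gap achieved when the MWU-version of VI-ULCB is run directly in the inference game. I would invoke the standard centralized finite-sample guarantee for this algorithm (the analog of Theorem~\ref{thm:performance_V_learning} for VI-ULCB), which yields $\sqrt{H^4 S^2 AB \log(SABT)/G}$ and matches the first term of the claimed bound; the uniform-sampling output rule is exactly the averaging over episodes that the regret-to-NE conversion already uses, so no extra factor appears.

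For the two distribution-shift terms, I will prove a centralized analog of Lemma~\ref{lem:decentralized_generalization}: for any two centralized algorithms $\Alg_\alpha,\Alg_\beta$ with corresponding uniformly-sampled outputs $\hat{\mu}_\alpha,\hat{\mu}_\beta$,
\begin{align*}
\E_\alpha\!\left[V_M^{\hat{\mu}_\alpha,\dagger}(s^1)\right] - \E_\beta\!\left[V_M^{\hat{\mu}_\beta,\dagger}(s^1)\right]
\lesssim H \sum_{t\in [T],\,s\in \gS} \E_\alpha\!\left[\TV\!\left(\Alg_\alpha,\Alg_\beta \mid D^{t-1}, s\right)\right],
\end{align*}
and similarly for $\hat{\nu}$. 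The proof is a coupling argument that walks step-by-step through the trajectory distribution, using a one-step TV inequality at each $(t,s)$ together with the fact that the per-episode best-response value is bounded by $H$, combined with the linearization $\E_{\hat{\mu}}[V^{\hat{\mu},\dagger}_M]=\frac{1}{G}\sum_g V^{\mu^g,\dagger}_M$ provided by the uniform sampling. I then apply the centralized pre-training guarantee (Theorem~\ref{thm:centralized_pre_training}) with $\Alg_\alpha=\Alg_0=$ VI-ULCB and $\Alg_\beta=\Alg_{\widehat{\vtheta}}$; since Theorem~\ref{thm:approx_VI_ULCB} realizes VI-ULCB exactly (so $\varepsilon_{\real}=0$), the $(t,s)$-summed TV distance is at most $TS\sqrt{\log(\gN_\Theta TS/\delta)/N}$, and multiplying by $H$ yields the second term.

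The main obstacle will be the centralized generalization lemma. Unlike the decentralized case, $\hat{\mu}$ is the marginal of a joint CCE-style policy produced by $\Alg_{\widehat{\vtheta}}$, so the argument must cleanly separate (a) how switching from $\Alg_0$ to $\Alg_{\widehat{\vtheta}}$ shifts the trajectory distribution that generates the per-episode policies, from (b) how it shifts the policies themselves pointwise at each $(t,s)$; the inner minimization over the best-response $\nu'$ must be handled via a performance-difference step that treats $\nu'$ as fixed conditionally, to avoid spuriously introducing a factor of $A$ or $B$ that would degrade the final bound. The augmented dataset component $D'_i$ introduced in Section~\ref{subsubsec:decentralized_basics} is again crucial here, since it is what allows the pre-training TV bound to sum over all $s\in\gS$ rather than only states actually visited along $D$.
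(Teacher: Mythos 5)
Your proposal is correct and follows essentially the same route as the paper: the same three-term decomposition, Theorem~\ref{thm:performance_VI_ULCB} for the first term, and a centralized analog of the decentralized generalization lemma (which the paper proves as Lemma~\ref{lem:centralized_generalization}, stated as $\lesssim T\cdot\sum_{t,s}\TV$ for the \emph{sum} over $g\in[G]$, which divided by $G$ gives your $H$-prefactor for the uniformly sampled output) combined with Theorem~\ref{thm:centralized_pre_training} and $\varepsilon_\real=0$ from Theorem~\ref{thm:approx_VI_ULCB}. The performance-difference step you flag, fixing the best response against $\mu^g_\beta$ before walking the trajectory by one-step TV swaps, is exactly how the paper handles the inner minimization.
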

This result demonstrates the ICGP capability of pre-trained transformers in the centralized setting, complementing the discussions in the decentralized results.

\section{Empirical Experiments}\label{sec:exp}
Experiments are performed on two-player zero-sum normal-form games ($H=1$) and Markov games ($H=2$), with the decentralized EXP3 \citep{auer2002nonstochastic} (which can be viewed as a one-step V-learning) and the centralized VI-ULCB being the context algorithms as demonstrations, respectively. Additional experimental setups and details can be found in Appendix~\ref{app:exp}. It can be first observed from Fig.~\ref{fig:exp} that, the transformers pre-trained with $N=20$ games performs better on the inference tasks than the ones pre-trained with $N = 10$ games. This observation empirically validates the theoretical result that more pre-training games benefit the final game-playing performance during inference (i.e., the $\sqrt{1/N}$-dependencies established in Theorems~\ref{thm:decentralized_overall} and \ref{thm:centralized_overall}). Moreover, when the number of pre-training games is sufficient (i.e., $N = 20$ in Fig.~\ref{fig:exp}), the obtained transformers can indeed learn to approximate NE in an in-context manner (i.e., having a gradually decaying NE gap), and also the obtained performance is similar to the context algorithm, i.e., EXP3 or VI-ULCB. These observations provide empirical pieces of evidence to support the ICGP capabilities of pre-trained transformers, motivating and validating the theoretical analyses performed in this work.

\begin{figure}[thb]
        \centering
        \subfigure[Decentralized Comparisons With EXP3.]{\centering
        \includegraphics[width=0.48\linewidth]{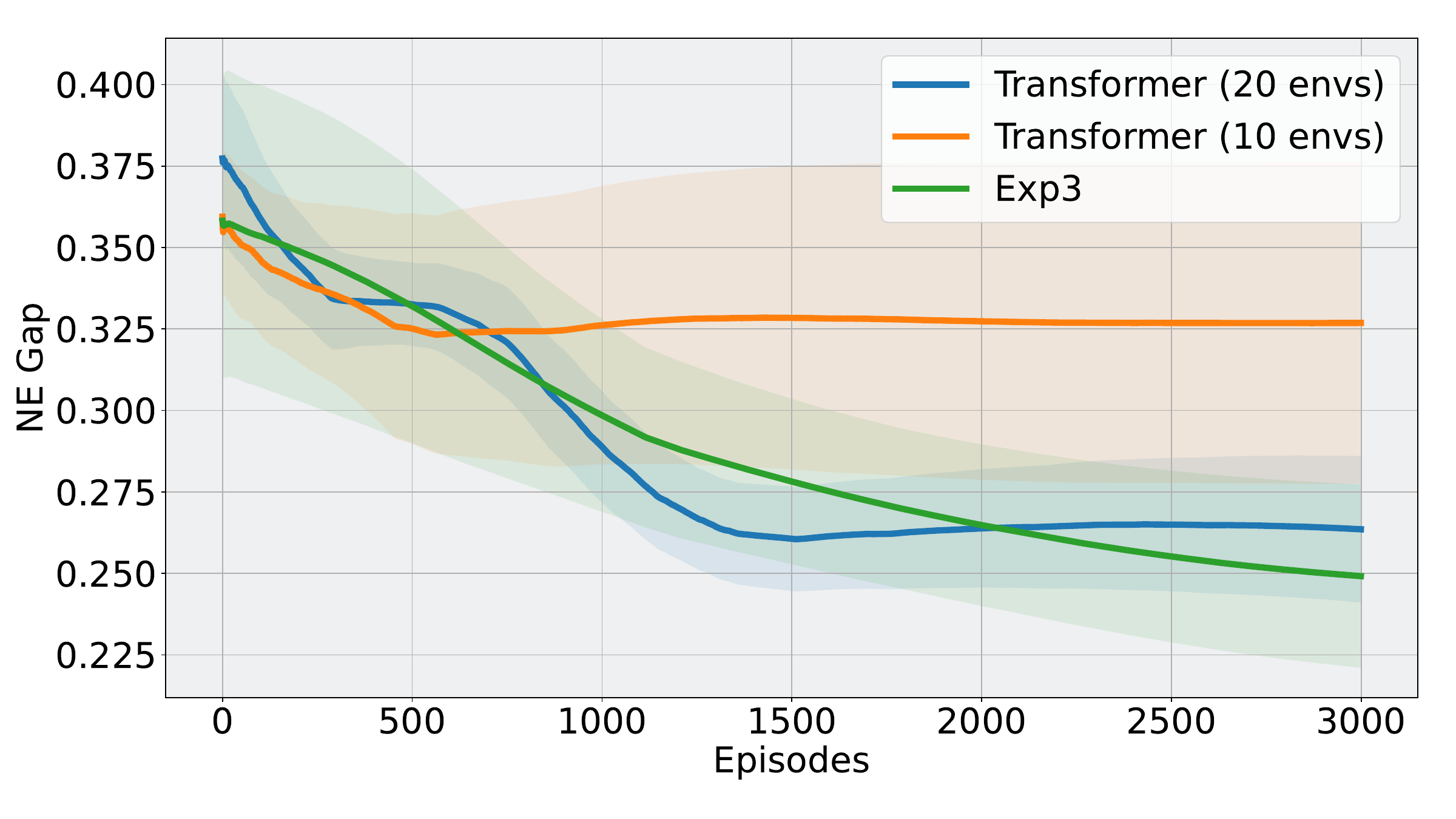}}
        \subfigure[Centralized Comparisons With VI-ULCB.]{\centering
        \includegraphics[width=0.48\linewidth]{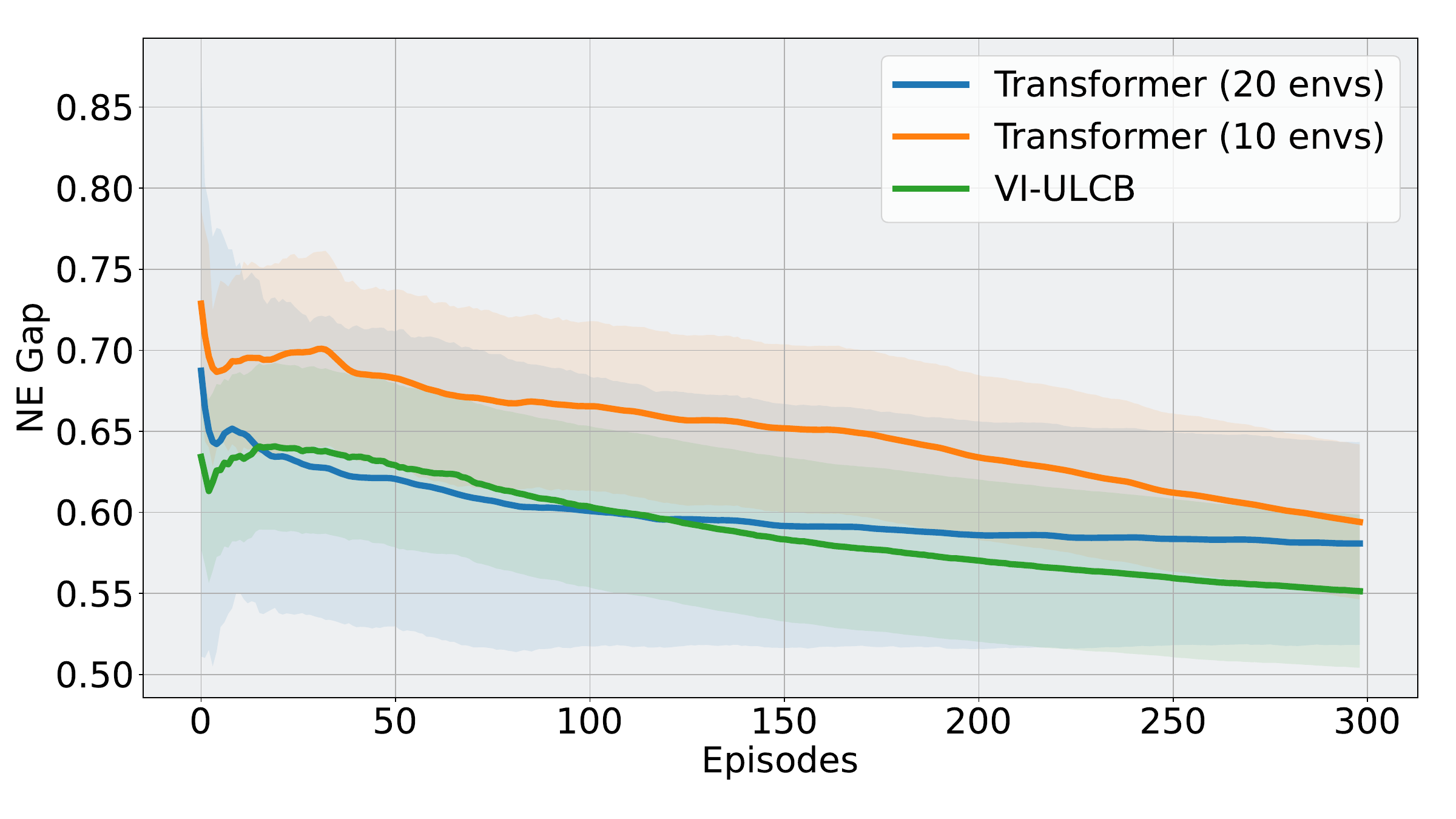}}
        \caption{\centering Comparisons of Nash equilibrium (NE) gaps over episodes in both decentralized and centralized learning scenarios, averaged over $10$ inference games.}
         \label{fig:exp}
    \end{figure}

\section{Conclusions}
This work investigated the in-context game-playing (ICGP) capabilities of pre-trained transformers, broadening the research scope of in-context RL from the single-agent scenario to the more challenging multi-agent competitive games. Focusing on the classical two-player zero-sum Markov games, a general learning framework was first introduced, laying down a solid ground for this and later studies. Through concrete theoretical results, this work further demonstrated that in both decentralized and centralized learning settings, properly pre-trained transformers are capable of approximating Nash equilibrium in an in-context manner. As a key part of the proof, concrete sets of parameterization were provided to demonstrate that the transformer architecture can realize two famous designs, decentralized V-learning and  centralized VI-ULCB. Empirical experiments further validate the theoretical results (especially that pre-trained transformers can indeed approximate NE in an in-context manner) and motivate future studies on this under-explored research direction.

\newpage
\bibliography{ref}
\bibliographystyle{apalike}

\newpage
\appendix

\section{An Overview of the Appendix}\label{app:overview}
In this section, an overview of the appendix is provided. First, additional discussions are presented in  Appendix~\ref{app:discussion}, which cover a detailed review of related works, broader impacts of this work, and our thoughts on the future directions.

Then, the proof details omitted in this main paper are provided. While the decentralized learning setting is the major focus in the main paper, the discussions and proofs for the centralized learning setting are first provided to facilitate the presentation and understanding as the decentralized learning setting is more challenging.

\begin{itemize}[noitemsep,topsep=0pt,leftmargin = *]
    \item The supervised pre-training guarantee (i.e., Theorem~\ref{thm:centralized_pre_training}) for the centralized learning setting is proved in Appendix~\ref{app:centralized_pre_training}. The details and realization of VI-ULCB (i.e., Theorem~\ref{thm:approx_VI_ULCB}) are presented in Appendix~\ref{app:centralized_realization}. The proofs for the overall performance guarantee (i.e., Theorem~\ref{thm:centralized_overall}) can be found in Appendix~\ref{app:centralized_overall}.
    \item Subsequently, the proofs for the supervised pre-training guarantee (i.e., Theorem~\ref{thm:decentralized_pre_training}) in the decentralized learning setting are provided in Appendix~\ref{app:decentralized_pre_training}. Appendix~\ref{app:decentralized_realization}  contains the details and realization of V-learning (i.e., Theorem~\ref{thm:approx_V_learning}). The overall performance guarantee (i.e., Theorem~\ref{thm:decentralized_overall}) is proved in Appendix~\ref{app:decentralized_overall}.
    \item A detailed discussion of the covering number is provided in Appendix~\ref{app:covering}.
\end{itemize}

Finally, the setups and details of the experiments presented in Sec.~\ref{sec:exp} are reported in Appendix~\ref{app:exp}.

\section{Additional Discussions}\label{app:discussion}
\subsection{Related Works}\label{subapp:related}

\textbf{In-context Learning.} Since GPT-3 \citep{brown2020language} demonstrates the ICL capability of pre-trained transformers, growing attention has been paid to this direction. In particular, an emerging line of work targets providing a deeper understanding of the fundamental mechanism behind the ICL capability \citep{li2023transformers,von2023transformers,akyurek2022learning,xie2021explanation,wu2023many,raventos2023pretraining,huang2023context,guo2023transformers,fu2023transformers,von2023uncovering,yun2019transformers}, where many interesting results have been obtained. In particular, transformers have been shown to be capable of performing in-context gradient descent so that varying optimization-based algorithms can be realized \citep{garg2022can,akyurek2022learning,bai2023transformers,von2023transformers,ahn2023transformers}. Also, \citet{giannou2023looped} demonstrates that looped transformers can emulate basic computing blocks, whose combinations can lead to complex operations.

This work is more focused on the in-context reinforcement learning (ICRL) capability of pre-trained transformers, as demonstrated in \citet{laskin2022context,lee2023supervised,grigsby2023amago}. The recent work by \citet{lin2023transformers} initiates the theoretical investigation of this topic. In particular, \citet{lin2023transformers} provides generalization guarantees after pre-training in the single-agent RL scenario, and further constructs transformers to realize provably efficient single-agent bandits and RL algorithms (in particular, LinUCB \citep{abbasi2011improved}, Thompson sampling \citep{thompson1933likelihood}, UCB-VI \citep{azar2017minimax}). This work extends \citet{lin2023transformers} to the domain of competitive multi-agent RL by studying the in-context game-playing setting. A recent concurrent work \citep{li2024context} also touches upon the in-context game-playing capability of pre-trained transformers, while focusing on  practical aspects and exploiting different opponents.

\textbf{Competitive Multi-agent RL.} The study of RL in the competitive multi-agent domain has a long and fruitful history \citep{shapley1953stochastic,zhang2021multi,silver2017mastering,berner2019dota,vinyals2019grandmaster}. In recent years, researchers have gained a deeper theoretical understanding of this topic. The centralized setting (also known as self-play) has been investigated in \citet{bai2020near,bai2020provable,zhang2020model,liu2021sharp,jin2022power,huang2021towards,xiong2022self,cui2023breaking,wang2023breaking}, and this work focuses on the representative VI-ULCB design \citep{bai2020near}. On the other hand, decentralized learning is more challenging, and the major breakthrough is made by V-learning \citep{bai2020near,jin2023v,song2021can,mao2023provably}, which is thus adopted as the target algorithm in this work.

\subsection{Broader Impacts}\label{subapp:impact}
This work mainly provides a theoretical understanding of the in-context game-playing capabilities of pre-trained transformers, broadening the research scope of in-context reinforcement learning from single-agent settings to multi-agent competitive games. Due to its theoretical nature, we do not foresee major negative societal impacts; however, we still would like to acknowledge the need for responsible usage of the practical implementation of the proposed game-playing transformers due to their high capability in various environments.

\subsection{Limitations and Future Works}\label{subapp:limitation}
The research direction of in-context game-playing is currently under-explored, and we believe that there are many interesting topics to be further investigated.

$\bullet$ \textit{Different game forms and game-solving algorithms.} This work mainly studies the classical two-player zero-sum Markov games, which can be viewed as the most basic form of competitive games, and has particular focuses on constructing transformers to realize V-learning \citep{jin2023v} and VI-ULCB \citep{bai2020provable}. The cooperative games, on the other hand, are conceptually more similar to the single-agent setting \citep{lin2023transformers}. There are more complicated game forms \citep{shoham2008multiagent,roughgarden2010algorithmic}, e.g., the mixed cooperative-competitive games, which requires different game-solving algorithms. We believe the framework built in this work is beneficial to further explore the capabilities of pre-trained models in game-theoretical scenarios.

$\bullet$ \textit{Pre-training dataset construction.} This work considering the pre-training dataset is collected from the context algorithm with an additional augmentation. First, while the current proofs rely on the augmentation, it will be an interesting topic to understand whether it is necessary. As mentioned Sec.~\ref{subsubsec:decentralized_basics}, learning in Markov games typically require more diverse data than learning in single-agent settings; however, the minimum requirement to perform effective pre-training is worth further exploring. Moreover, in the study of single-agent RL \citep{lee2023supervised}, it is shown that pre-training with the data from the optimal policy is more efficient, which is further theoretically investigated in \citet{lin2023transformers}. In multi-agent competitive games, it is currently unclear whether similar strategies can be incorporated, e.g., pre-training with data collected by Nash equilibrium policies or best responses for certain other policies.

$\bullet$ \textit{Large-scale empirical evaluations.} Due to the limited computation resources, the experiments reported in Sec.~\ref{sec:exp} are relatively small-scale compared with the current size of practically adopted transformers, and are limited to the bandit setting (i.e., without state transitions). It would be an important and interesting direction to further evaluate the ICGP capabilities of pre-trained transformers in large-scale experiments and practical game-theoretic applications.

Besides these directions, from the theoretical perspective, we believe it would be valuable to investigate how to extend the current study on the tabular setting to incorporate function approximation, where we conjecture it is sufficient for the pre-training dataset to cover information of certain representative states and actions (e.g., a well-coverage of the feature space) \citep{zhong2022pessimistic}. Another attractive theoretical question is how to learn from a dataset collected by multiple context algorithms. From the practical perspective, a future study on the impact of the practical training recipe (e.g., model structure, training hyperparameters, etc.) would be desirable to bring additional insights.

\section{Proofs for the Centralized Supervised Pre-training Guarantees}\label{app:centralized_pre_training}
First, the definition of the centralized covering number and an assumption of centralized approximate realizability are introduced to facilitate the analysis, which are also leveraged in \citet{lin2023transformers}.
\begin{definition}[Centralized Covering Number]\label{def:centralized_covering}
    For a class of algorithms $\{\Alg_{\vtheta}: \vtheta \in \Theta\}$, we say $\tilde{\Theta} \subseteq \Theta$ is a $\rho$-cover of $\Theta$, if $\tilde{\Theta}$ is a finite set such that for any $\vtheta\in \Theta$, there exists $\tilde{\vtheta} \in \tilde{\Theta}$ such that for all $D^{t-1}, s\in \gS, t\in [T]$, it holds that
    \begin{equation*}
        \left\|\log \Alg_{\tilde{\vtheta}}(\cdot,\cdot|D^{t-1},s) - \log \Alg_{\vtheta}(\cdot,\cdot|D^{t-1},s)\right\|_\infty \leq \rho.
    \end{equation*}
    The covering number $\gN_{\Theta}(\rho)$ is the minimal cardinality of $\tilde{\Theta}$ such that $\tilde{\Theta}$ is a $\rho$-cover of $\Theta$.
\end{definition}

\begin{assumption}[Centralized Approximate Realizability]\label{aspt:centralized_realizability}
    There exist $\vtheta^*\in \Theta$ and $\varepsilon_{\real}>0$ such that for all $s\in \gS, t\in [T], (a, b)\in \gA\times \gB$, it holds that
    \begin{equation*}
        \log\left(\E_{D\sim \sP^{\Alg_0}_\Lambda} \left[\frac{\Alg_{0}(a, b|D^{t-1},s)}{ \Alg_{\vtheta^*}(a, b|D^{t-1},s)}\right]\right) \leq \varepsilon_{\real}.
    \end{equation*}
\end{assumption}

Then, the following pre-training guarantee can be established.
\begin{theorem}[Centralized Pre-training Guarantee]\label{thm:centralized_pre_training} Let $\widehat{\vtheta}$ be the maximum likelihood pre-training output. Take $\gN_{\Theta} = \gN_{\Theta}(1/N)$ as in Definition~\ref{def:centralized_covering}. Then, under Assumption~\ref{aspt:centralized_realizability}, with probability at least $1-\delta$, it holds that
\begin{align*}
         \E_{D\sim \sP^{\Alg_0}_\Lambda}\left[\sum\nolimits_{t\in [T], s\in \gS} \TV(\Alg_{0}, \Alg_{\widehat{\vtheta}}|D^{t-1}, s)\right]\lesssim TS{\sqrt{\varepsilon_{\real}}}+TS\sqrt{\frac{\log\left(\gN_{\Theta}TS/\delta\right)}{N}}.
    \end{align*}
\end{theorem}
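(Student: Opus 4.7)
The plan is to follow the standard maximum-likelihood generalization recipe (essentially van de Geer-style MLE analysis, as adapted for ICRL in Lin et al.\ 2023, Theorem D.1), with the twist that the augmented dataset $D'_i$ yields one ``independent draw'' from $\Alg_0(\cdot,\cdot\mid D_i^{t-1},s)$ per pair $(t,s)$, so the bound naturally sums over all $s\in\gS$ rather than only those visited in $D_i$. The target intermediate inequality I aim for is
\begin{align*}
 \E_{D\sim \sP^{\Alg_0}_\Lambda}\!\left[\sum_{t\in[T],\,s\in\gS}\! H^2\!\left(\Alg_{0}(\cdot,\cdot\mid D^{t-1},s),\,\Alg_{\widehat\vtheta}(\cdot,\cdot\mid D^{t-1},s)\right)\right]\lesssim TS\,\varepsilon_{\real}+\frac{\log(\gN_{\Theta}TS/\delta)}{N},
\end{align*}
where $H^2$ is the squared Hellinger distance; the theorem then follows from $\TV\le\sqrt{2}\,H$ and two applications of Cauchy--Schwarz (one over $N$ trajectories and one over the $TS$ indices), which distributes the square root to produce the $TS\sqrt{\varepsilon_{\real}}+TS\sqrt{\log(\gN_{\Theta}TS/\delta)/N}$ form stated.

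First I would introduce the ``tilted'' log-likelihood ratio $\xi_i(\vtheta):=\sum_{t,s}\log\bigl(\Alg_{\vtheta}(a^{t}_{i,s},b^{t}_{i,s}\mid D_i^{t-1},s)/\Alg_{0}(a^{t}_{i,s},b^{t}_{i,s}\mid D_i^{t-1},s)\bigr)$ and use the classical inequality
\begin{align*}
 -\log\E\!\left[\exp\!\bigl(\tfrac12\xi_i(\vtheta)\bigr)\right]\ \ge\ \tfrac12\,\E\!\left[\sum_{t,s} H^2\!\bigl(\Alg_0,\Alg_{\vtheta}\!\mid D_i^{t-1},s\bigr)\right],
\end{align*}
combined with a Chernoff/MGF bound on $\tfrac12\sum_i \xi_i(\vtheta)$. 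Discretizing $\Theta$ to its $1/N$-cover $\tilde\Theta$ from Definition~\ref{def:centralized_covering} and taking a union bound over $\tilde\Theta$ produces, uniformly in the cover, an empirical-vs-expected Hellinger estimate with a $\log(\gN_{\Theta}/\delta)/N$ slack; the $1/N$ mesh size is absorbed cleanly into this slack because the cover is in $\ell_\infty$ on $\log\Alg_{\vtheta}$. Next I would plug in $\vtheta=\widehat\vtheta$: by MLE optimality $\sum_i\xi_i(\widehat\vtheta)\ge \sum_i\xi_i(\vtheta^*)$, and by Assumption~\ref{aspt:centralized_realizability} the right-hand side is controlled per $(t,s,a,b)$ by $\varepsilon_{\real}$, which sums to $TS\,\varepsilon_{\real}$ across indices. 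Assembling these pieces yields the displayed squared-Hellinger bound.

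The role of the augmentation $D'_i$ is crucial here and deserves emphasis in the write-up: because $a^t_{i,s},b^t_{i,s}$ are drawn conditionally on $D_i^{t-1}$ from $\Alg_0(\cdot,\cdot\mid D_i^{t-1},s)$ for every $s\in\gS$, the conditional expectation identity $\E[\exp(\tfrac12\xi_i(\vtheta))\mid D_i^{t-1}]=\prod_s\E[\exp(\tfrac12\log(\Alg_\vtheta/\Alg_0))\mid D_i^{t-1}]$ factorizes the moment generating function across states, so we recover a Hellinger sum over all $(t,s)$ rather than only the on-trajectory $s^t_i$. Without $D'_i$ the bound would contain only visited states and could not be matched to the downstream requirement (needed in Lemma~\ref{lem:decentralized_generalization} and its centralized analogue) that the TV distance is summed over \emph{all} $s\in\gS$.

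The main obstacle, in my view, is the filtration bookkeeping: the samples inside a single trajectory are not independent across $t$, so the Chernoff step must be run as a sequential/martingale argument on the partial sums $\sum_{\tau\le t}\sum_{s}\log(\Alg_\vtheta/\Alg_0)$, using the tower property with respect to the filtration $\sigma(D_i^{t-1})$, and only after that can one take a product over the $N$ i.i.d.\ trajectories. A secondary subtlety is that the cover is defined in $\ell_\infty$ on $\log\Alg_\vtheta$, and I must verify that an additive $1/N$ perturbation of $\log\Alg_\vtheta$ propagates to a $O(T S/N)$ additive change in $\xi_i(\vtheta)$, which is harmlessly absorbed by the concentration slack; this is what justifies taking the cover radius equal to $1/N$ in the theorem statement. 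Once these two technical points are handled, the remainder is routine algebra converting the squared-Hellinger bound to the claimed $\TV$ bound via Cauchy--Schwarz.
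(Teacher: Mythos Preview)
Your overall strategy---MLE analysis via squared Hellinger, a Chernoff/MGF step, discretization to the $1/N$-cover, MLE optimality to pass to $\vtheta^*$, and Assumption~\ref{aspt:centralized_realizability} to finish---is exactly the paper's approach (which in turn extends Theorem~6 of \citet{lin2023transformers}). The difference is in how the sum over $(t,s)$ is handled, and here the paper takes a simpler route that dissolves your ``main obstacle'' entirely: it fixes $(t,s)$, applies a plain i.i.d.\ Chernoff bound over the $N$ trajectories to the per-$(t,s)$ log-ratio $\ell^t_{i,m}(s)=\log\bigl(\Alg_0(a^t_{i,s},b^t_{i,s}\mid D_i^{t-1},s)/\Alg_{\vtheta_m}(a^t_{i,s},b^t_{i,s}\mid D_i^{t-1},s)\bigr)$, and then union-bounds over $m\in[\gN_\Theta]$, $t\in[T]$, $s\in\gS$. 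No martingale or sequential bookkeeping is needed, because for fixed $(t,s)$ the variables $\{\ell^t_{i,m}(s)\}_{i\in[N]}$ are i.i.d.\ and the moment generating factor $\E[\exp(-\ell^t_{i,m}(s)/2)]=1-\E_D[H^2_{t,s}]$ is deterministic.

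This also explains why your intermediate target is slightly off. The ``classical inequality'' you invoke, $-\log\E[\exp(\tfrac12\xi_i(\vtheta))]\ge\tfrac12\,\E[\sum_{t,s}H^2]$, is \emph{not} valid once $\xi_i$ aggregates over all $(t,s)$: the $H^2_{t,s}$ are correlated through the shared $D_i$, and a simple example (e.g.\ all $H^2_{t,s}$ equal to a common Bernoulli) shows the inequality can fail by a factor of $TS$. The per-$(t,s)$ analysis recovers the inequality at each index separately, and the union bound over $(t,s)$ is precisely what injects the $TS$ multiplier you are missing on the concentration term. The correct intermediate is $\sum_{t,s}\bigl(\E_D[\TV_{t,s}]\bigr)^2\lesssim TS\,\varepsilon_{\real}+TS\,\log(\gN_\Theta TS/\delta)/N$; a single Cauchy--Schwarz over the $TS$ indices then yields exactly the $TS\sqrt{\varepsilon_{\real}}+TS\sqrt{\log(\gN_\Theta TS/\delta)/N}$ in the statement. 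Your remaining remarks (role of the augmentation $D'_i$, absorbing the $1/N$ cover radius) are correct and match the paper.
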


\begin{proof}[Proof of Theorem~\ref{thm:centralized_pre_training}]
    This proof extends that of Theorem 6 in \citet{lin2023transformers} to the multi-agent scenario. Let $\tilde{\Theta}$ be a $\rho$-covering set of $\Theta$ with covering number $\gN_{\Theta}= \gN_{\Theta}(\rho)$ as defined in Definition~\ref{def:centralized_covering}. With Lemma 15 in \citet{lin2023transformers}, we can obtain that for any $\vtheta\in \Theta$, there exists $\tilde{\vtheta}\in \tilde{\Theta}$ such that for all $D^{t-1}$, $t\in [T]$ and $s \in \gS$,
    \begin{align*}
        \TV\left(\Alg_{\tilde{\vtheta}}, \Alg_{\vtheta}|D^{t-1},s)\right)\leq \rho
    \end{align*}
    
    For $m \in [\gN_{\Theta}], t\in [T], i\in [N], s\in \gS$, we define that
    \begin{align*}
        \ell^t_{i,m}(s):= \log \left(\frac{\Alg_{0}\left(a^{t}_{i,s}, b^{t}_{i,s}|D_i^{t-1},s\right)}{\Alg_{\vtheta_{m}}\left(a^{t}_{i,s}, b^{t}_{i,s}|D_i^{t-1},s\right)}\right).
    \end{align*}
    According to Lemma 14 in \citet{lin2023transformers}, with probability at least $1-\delta$, for all $m \in [\gN_{\Theta}], t\in [T], s\in \gS$, it holds that 
    \begin{align*}
        \frac{1}{2}\sum_{i\in [N]} \ell^t_{i,m}(s) + \log(\gN_{\Theta}TS/\delta) \geq \sum_{i\in [N]} -\log\left(\E\left[\exp\left(- \frac{\ell^t_{i,m}(s)}{2}\right)\right]\right).
    \end{align*}
    Furthermore, it can be established that
    \begin{align*}
        \E\left[\exp\left(- \frac{\ell^t_{i,m}(s)}{2}\right)| D_i^{t-1}\right] &= \E\left[\sqrt{\frac{\Alg_{\vtheta_{m}}\left(a^{t}_{i,s}, b^{t}_{i,s}|D_i^{t-1},s\right)}{\Alg_{0}\left(a^{t}_{i,s}, b^{t}_{i,s}|D_i^{t-1},s\right)}}| D_i^{t-1}\right]\\
        & = \sum_{(a,b)\in \gA\times \gB}\sqrt{\Alg_{\vtheta_{m}}\left(a, b|D_i^{t-1},s\right)\Alg_{0}\left(a, b|D_i^{t-1},s\right)},
    \end{align*}
    which implies that
    \begin{align*}
       \E\left[\exp\left(- \frac{\ell^t_{i,m}(s)}{2}\right)\right] & = 1 - \frac{1}{2} \cdot \E \left[\sum_{(a,b)\in \gA\times \gB}\left(\sqrt{\Alg_{\vtheta_{m}}\left(a, b|D_i^{t-1},s\right)} - \sqrt{\Alg_{0}\left(a, b|D_i^{t-1},s\right)}\right)^2\right]\\
       & \leq 1 - \frac{1}{2} \cdot \E \left[\TV\left(\Alg_{\vtheta_{m}}, \Alg_{0}|D_i^{t-1},s\right)^2\right],
    \end{align*}
    where the inequality is from the fact that the Hellinger distance is smaller than the TV distance.
    
    Then, we can obtain that for any $\vtheta$ covered by $\vtheta_m$, it holds that
    \begin{align*}
        &\left(\E_{D}\left[\TV\left(\Alg_{0}, \Alg_{\vtheta}|D^{t-1},s\right)\right]\right)^2\\
        &\leq \left(\E_{D}\left[\TV\left(\Alg_{0}, \Alg_{\vtheta_m}|D^{t-1},s\right)\right] + \E_{D}\left[\TV\left(\Alg_{\vtheta_m}, \Alg_{\vtheta}|D^{t-1},s\right)\right]\right)^2\\
        & \leq 2\left(\E_{D}\left[\TV\left(\Alg_{0}, \Alg_{\vtheta_m}|D^{t-1},s\right)\right]\right)^2 + 2\left(\E_{D}\left[\TV\left(\Alg_{\vtheta_m}, \Alg_{\vtheta}|D^{t-1},s\right)\right]\right)^2\\
        &\leq 2\E_{D}\left[\TV\left(\Alg_{0}, \Alg_{\vtheta}|D^{t-1},s\right)^2\right] + 2\rho^2\\
        &\leq 4 - 4 \E\left[\exp\left(- \frac{\ell^t_{i,m}(s)}{2}\right)\right]  + 2\rho^2\\
        &\leq -4 \log \left(\E\left[\exp\left(- \frac{\ell^t_{i,m}(s)}{2}\right)\right]\right) + 2\rho^2,
    \end{align*}
    which further implies that
    \begin{align*}
        & N \sum_{s\in \gS}\sum_{t\in [T]} \left(\E_{D}\left[\TV\left(\Alg_{0}, \Alg_{\vtheta}|D^{t-1},s\right)\right]\right)^2\\
        &\leq -4\sum_{s\in \gS}\sum_{t\in [T]}\sum_{i\in[N]}\log\left(\E\left[\exp\left(- \frac{\ell^t_{i,m}(s)}{2}\right)\right]\right) + 2NST\rho^2\\
        & \leq 2 \sum_{s\in \gS}\sum_{t\in [T]}\sum_{i\in[N]}\ell^t_{i,m}(s) + 2NST\rho^2 + 4ST\log(\gN_{\Theta}TS/\delta) \\
        &= 2 \sum_{s\in \gS}\sum_{t\in [T]}\sum_{i\in[N]} \log \left(\frac{\Alg_{0}\left(a^{t}_{i,s}, b^{t}_{i,s}|D_i^{t-1},s\right)}{\Alg_{\vtheta_{m}}\left(a^{t}_{i,s}, b^{t}_{i,s}|D_i^{t-1},s\right)}\right) + 2NST\rho^2 + 4ST\log(\gN_{\Theta}TS/\delta)\\
        &\leq 2 \sum_{s\in \gS}\sum_{t\in [T]}\sum_{i\in[N]} \log \left(\frac{\Alg_{0}\left(a^{t}_{i,s}, b^{t}_{i,s}|D_i^{t-1},s\right)}{\Alg_{\vtheta}\left(a^{t}_{i,s}, b^{t}_{i,s}|D_i^{t-1},s\right)}\right) + 2NST\rho^2 + 2NST\rho + 4ST\log(\gN_{\Theta}TS/\delta).
    \end{align*}
    
    Thus, for the obtained $\widehat{\vtheta}$, with probability at least $1-\delta$, it holds that
    \begin{align*}
        & N \sum_{s\in \gS}\sum_{t\in [T]} \left(\E_{D}\left[\TV\left(\Alg_{0}, \Alg_{\widehat{\vtheta}}|D^{t-1},s\right)\right]\right)^2\\
        &\leq 2 \sum_{s\in \gS}\sum_{t\in [T]}\sum_{i\in[N]} \log \left(\frac{\Alg_{0}\left(a^{t}_{i,s}, b^{t}_{i,s}|D_i^{t-1},s\right)}{\Alg_{\widehat{\vtheta}}\left(a^{t}_{i,s}, b^{t}_{i,s}|D_i^{t-1},s\right)}\right) + 2NST\rho^2 + 2NST\rho + 4ST\log(\gN_{\Theta}TS/\delta)\\
        &\leq 2 \sum_{s\in \gS}\sum_{t\in [T]}\sum_{i\in[N]} \log \left(\frac{\Alg_{0}\left(a^{t}_{i,s}, b^{t}_{i,s}|D_i^{t-1},s\right)}{\Alg_{\vtheta^*}\left(a^{t}_{i,s}, b^{t}_{i,s}|D_i^{t-1},s\right)}\right) + 2NST\rho^2 + 2NST\rho + 4ST\log(\gN_{\Theta}TS/\delta)\\
        &\leq 2 \sum_{s\in \gS}\sum_{t\in [T]}\sum_{i\in[N]} \log\left(\E\left[\frac{\Alg_{0}\left(a^{t}_{i,s}, b^{t}_{i,s}|D_i^{t-1},s\right)}{\Alg_{\vtheta^*}\left(a^{t}_{i,s}, b^{t}_{i,s}|D_i^{t-1},s\right)}\right]\right) + ST\log(TS/\delta) \\
        & + 2NST\rho^2 + 2NST\rho + 4ST\log(\gN_{\Theta}TS/\delta)\\
        &\leq 2 NST \varepsilon_{\real} + ST\log(TS/\delta) + 2NST\rho^2 + 2NST\rho + 4ST\log(\gN_{\Theta}TS/\delta),
    \end{align*}
    
    Further by Cauchy-Schwarz inequality, we can obtain that
    \begin{align*}
        &\sum_{s\in \gS}\sum_{t\in [T]} \left(\E_{D}\left[\TV\left(\Alg_{0}, \Alg_{\widehat{\vtheta}}|D^{t-1},s\right)\right]\right)\\
        &\leq \sqrt{ST\sum_{s\in \gS}\sum_{t\in [T]} \left(\E_{D}\left[\TV\left(\Alg_{0}, \Alg_{\widehat{\vtheta}}|D^{t-1},s\right)\right]\right)^2}\\
        & = O\left(ST\sqrt{\varepsilon_\real} + ST\sqrt{\frac{\log(\gN_\Theta ST)}{N}} + ST\sqrt{\rho} + ST\rho\right)
    \end{align*}
    Taking $\rho = 1/N$ concludes the proof.
\end{proof}

\section{Proofs for Realizing VI-ULCB}\label{app:centralized_realization}
\subsection{Details of MWU VI-ULCB}\label{subapp:VI_ULCB}
We here note one distinction from the VI-ULCB design considered in this work from its vanilla version proposed in \citet{bai2020provable}, which makes VI-ULCB practically implementable. Especially, \citet{bai2020provable} requires an oracle solver that can provide the exact NE policy pair $(\mu^*, \nu^*)$ from any two general input payoff matrices $(\overline{Q}, \underline{Q}) \in \sR^{A\times B} \times \sR^{A\times B}$. However, it is known that approximating such a general-sum NE is computationally hard (specifically, PPAD-complete) \citep{daskalakis2013complexity}, which makes this vanilla version impractical. Luckily, later studies \citep{liu2021sharp,xie2020learning,bai2020near} have demonstrated that a solver finding one weaker notation of equilibrium, i.e., coarse correlated equilibrium (CCE), is already sufficient. Following these recent results, we replace the NE solver with an approximate CCE solver in VI-ULCB. Moreover, we consider finding such CCEs via no-regret learning.\footnote{Another common method to find CCEs is through linear programming (LP). It will be an interesting direction to investigate whether transformers can be LP solvers, which is however out of the scope of this paper.} In particular, both players virtually run \textit{multiplicative weight update (MWU)} (which is also known as \textit{Hedge}), a classical no-regret algorithm, with payoff matrices $(\overline{Q}, \underline{Q})$ for several rounds; then, an aggregated policy can be generated as an approximate CCE.
The details of the VI-ULCB algorithm are provided in Alg.~\ref{alg:VI_ULCB}. 

\begin{algorithm*}[htb]
    \caption{VI-ULCB}
    \label{alg:VI_ULCB}
    \begin{algorithmic}[1]
        \STATE \textbf{Initialize}: for any $(s, a, b, h)$, $\overline{Q}^h(s,a,b) \gets H$, $\underline{Q}^h(s,a,b) \gets 0$, $N^h(s,a,b) \gets 0$, $N^h(s,a,b,s') \gets 0$
        \FOR{episode $g = 1, \cdots, G$}
        \FOR{$(s,a, b) \in \gS \times \gA \times \gB$}    
            \STATE Compute $\overline{Q}^h(s,a,b) \gets \min \left\{\hat{r}^h(s,a,b) + \left[\hat{\sP}^h \overline{V}^{h+1}\right](s,a,b) + c\sqrt{\frac{H^2S\iota}{N^h(s^h, a^h, b^h)}}, H\right\}$
            \STATE Compute $\underline{Q}^h(s,a,b) \gets \max \left\{\hat{r}^h(s,a,b) + \left[\hat{\sP}^h \underline{V}^{h+1}\right](s,a,b) - c\sqrt{\frac{H^2S\iota}{N^h(s^h, a^h, b^h)}}, 0\right\}$
        \ENDFOR
        \FOR{$s\in \gS$}
        \STATE Update $\pi_h(\cdot, \cdot|s) \gets \text{$\varepsilon_N$-approximate } \CCE\left(\overline{Q}^h(s,\cdot, \cdot), \underline{Q}^h(s,\cdot, \cdot)\right)$ solved by $N$-round MWU
        \STATE Compute $\overline{V}^h(s) \gets \sum_{a, b} \pi^h(a,b|s) \overline{Q}^h(s,a,b)$
        \STATE Compute $\underline{V}^h(s) \gets \sum_{a, b} \pi^h(a,b|s) \underline{Q}^h(s,a,b)$
        \ENDFOR
        \FOR{step $h = 1, \cdots, H$}
        \STATE Take action $(a^h, b^h) \sim \pi^h(\cdot, \cdot|s^h)$
        \STATE Observe reward $r^h$ and next state $s^{h+1}$
        \STATE Update $N_h(s^h, a^h, b^h)$ and $N^h(s^h, a^h, b^h, s^{h+1})$
        \STATE Update $\hat{\sP}^h(\cdot|s^h, a^h, b^h)$ and $\hat{r}^h(s^h, a^h, b^h)$
        \ENDFOR
        \ENDFOR
    \end{algorithmic}
\end{algorithm*}

\begin{algorithm}[h] 
    \caption{MWU}
    \label{alg:MWU}
    \begin{algorithmic}[1]
    \STATE \textbf{Input}: learning rates $\eta_A = \sqrt{\log(A)/N}$ and $\eta_B = \sqrt{\log(B)/N}$, action sets $\gA$ and $\gB$ with size $A$ and $B$, loss matrices $\overline{L}^h(s, \cdot, \cdot)$ and $\underline{L}^h(s, \cdot, \cdot)$
    \STATE \textbf{Initialize}: cumulative loss $O_+ \gets \vzero_A$ and  $O_- \gets \vzero_A$
    \FOR{$n = 1, \cdots, N$}
    \STATE Compute $\mu^h_n(\cdot|s) \gets  \sigmas(-\eta_A O_+) \in \Delta(\gA)$ 
    \STATE Compute $\nu^h_n(\cdot|s) \gets  \sigmas(-\eta_B O_-) \in \Delta(\gB)$
    \STATE Observe vectors $o_{+,n} \in \sR^{A}$ with $o_{+,n}(a) = \nu^h_n(\cdot|s) \cdot \overline{L}^h(s, a, \cdot)$ 
    \STATE Observe vectors $o_{-,n}\in \sR^{B}$ with $o_{-,n}(b) = \mu^h_n(\cdot|s) \cdot \underline{L}^h(s, \cdot, b)$ 
    \STATE Update $O_+ = O_+ + o_{+,n}$ and $O_- = O_- + o_{-,n}$
    \ENDFOR
    \STATE {\bfseries Output:} policy $\sum_{n\in [N]}\mu^h_n(\cdot|s) \nu^h_n(\cdot|s)/N$
    \end{algorithmic}
\end{algorithm}

 More specifically, we consider that an approximate CCE solver is adopted such that with each pair of inputs $(\overline{Q}^h(s, \cdot, \cdot), \underline{Q}^h(s, \cdot, \cdot))$, we can obtain an $\varepsilon_\CCE$-approximate CCE policy $\pi^h(\cdot, \cdot|s)$ which satisfies that
\begin{align*}
    \E_{(a,b)\sim \pi^h(\cdot, \cdot|s)} \left[\overline{Q}(s, a,b)\right] \geq \max_{a^*\in \gS}\E_{(a,b) \sim \pi^h(\cdot, \cdot|s)}\left[\overline{Q}(s, a^*, b)\right] - \varepsilon_\CCE\\
    \E_{(a,b)\sim \pi^h(\cdot, \cdot|s)} \left[\underline{Q}(s,a,b)\right] \leq \min_{b^*\in \gS}\E_{(a,b) \sim \pi^h(\cdot, \cdot|s)}\left[\underline{Q}(s, a, b^*)\right] + \varepsilon_\CCE.
\end{align*}
We also specifically choose to obtain such approximate CCEs by having both players (virtually) perform MWU against each other. The details of MWU are included in Alg.~\ref{alg:MWU}, where we use the following notations to denote normalized losses:
\begin{align*}
    \overline{L}^h(s, a,b) : = \frac{H - \overline{Q}^h(s, a,b)}{H}, ~~ \underline{L}^h(s, a, b) : = \frac{H - \underline{Q}^h(s, a,b)}{H}.
\end{align*}

Standard online learning results \citep{cesa2006prediction} guarantee that using learning rates $\eta_A = \sqrt{\log(A)/N_{\MWU}}$ and $\eta_B = \sqrt{\log(B)/N_{\MWU}}$, after $N_{\MWU}$ rounds of MWU, the policy
\begin{align*}
    \pi^h(\cdot,\cdot|s) = \frac{1}{N_{\MWU}}\sum_{n\in [N]} \mu^h_n(\cdot|s) \nu^h_n(\cdot|s)
\end{align*}
is an $\varepsilon_\CCE$-approximate CCE policy, with
\begin{align*}
    \varepsilon_\CCE = H\sqrt{\frac{\log(A+B)}{N_{\MWU}}}.
\end{align*}

Furthermore, for a certain bounded $\varepsilon_\CCE$, \citet{xie2020learning} demonstrated that the performance degradation can still be controlled. Following the results therein, the following theorem can be easily established.
\begin{theorem}[Modified from Theorem 2 from \citet{bai2020provable}]\label{thm:performance_VI_ULCB}
    With probability at least $1-\delta$, in any environment $M$, the output policies $\{(\mu^g, \nu^g): g\in [G]\}$ from the MWU-version of VI-ULCB satisfy that 
    \begin{align*}
            \sum\nolimits_{g\in[G]} V^{\dagger, \nu^g}_M(s^1) - V^{\mu^g, \dagger}_M(s^1) = O\left( \sqrt{H^3S^2ABT\log(SABT/\delta)} + T\varepsilon_\CCE\right).
        \end{align*}
\end{theorem}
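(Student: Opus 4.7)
The plan is to follow the optimism/pessimism template from \citet{bai2020provable} and show that the VI-ULCB bonuses control the duality gap, while substituting the exact NE solver with an $\varepsilon_{\CCE}$-approximate CCE produced by MWU only introduces an additive $\varepsilon_{\CCE}$ error at each $(g,h)$ pair, which telescopes to $T\varepsilon_{\CCE}$.

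First, I would prove by backward induction on $h$ that, for every episode $g$ and all $s\in\gS$, the value iterates satisfy a two-sided sandwich $\underline{V}^h(s) \leq V^{\mu^g,\dagger}_M(s) \leq V^{\dagger,\nu^g}_M(s) \leq \overline{V}^h(s)$, up to an accumulated $O(H\varepsilon_{\CCE})$ slack per episode. The base case $h=H+1$ is trivial. For the inductive step, the clipping to $[0,H]$ together with the bonus $c\sqrt{H^2 S\iota / N^h(s,a,b)}$ absorbs the one-step Bellman error once standard Hoeffding/Bernstein concentration is applied to $\hat{\sP}^h$ and $\hat{r}^h$ (holding on a high-probability event of probability at least $1-\delta$). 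The nontrivial modification from the NE-based proof is to invoke the $\varepsilon_{\CCE}$-CCE guarantee on $\pi^h(\cdot,\cdot|s)$: any unilateral max-player deviation raises $\E_{\pi^h}[\overline{Q}^h(s,\cdot,\cdot)]$ by at most $\varepsilon_{\CCE}$, and symmetrically for the min-player with $\underline{Q}^h$. Combined with the inductive hypothesis at $h+1$, this yields the claimed sandwich with the extra $\varepsilon_{\CCE}$ leakage per step.

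Next, I would bound $\sum_{g\in[G]} \bigl(\overline{V}^{1}(s^1) - \underline{V}^{1}(s^1)\bigr)$ by the classical regret decomposition. Telescoping the Bellman-type updates from $h=1$ to $H$ along the realized trajectory $(s^{g,h},a^{g,h},b^{g,h})$ reduces the gap to the sum of bonuses plus martingale fluctuations and transition/reward estimation errors; on the high-probability event the latter two are dominated by the chosen bonus. A pigeonhole argument over visitation counts $N^h(s,a,b)$ across $GH$ steps, using $\sum_{g,h} 1/\sqrt{N^h(s^{g,h},a^{g,h},b^{g,h})} \lesssim \sqrt{SABT}$, yields the $\sqrt{H^3 S^2 AB T \log(SABT/\delta)}$ rate. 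Adding the per-step CCE slack gives the second term $T\varepsilon_{\CCE}$, and using $V^{\dagger,\nu^g}_M(s^1) - V^{\mu^g,\dagger}_M(s^1) \leq \overline{V}^{1}(s^1) - \underline{V}^{1}(s^1)$ completes the bound on the summed duality gap.

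The main obstacle will be the first step: verifying that replacing the saddle-point NE solver with an $\varepsilon_{\CCE}$-approximate CCE preserves the two-sided sandwich rather than inflating it multiplicatively. An exact NE gives both inequalities simultaneously through the minimax identity, whereas a CCE only certifies one-sided no-regret properties for each marginal. I would resolve this by following the analysis used in \citet{xie2020learning,liu2021sharp,bai2020near}: the approximate CCE certificate on the paired payoff matrices $(\overline{Q}^h(s,\cdot,\cdot),\underline{Q}^h(s,\cdot,\cdot))$ controls the max-player's best-response gap against $\nu^g$ through $\overline{V}^h$ and, independently, the min-player's best-response gap against $\mu^g$ through $\underline{V}^h$, each at the cost of at most $\varepsilon_{\CCE}$ per Bellman backup. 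Summing these CCE slacks over $h\in[H]$ and $g\in[G]$ is exactly what produces the $T\varepsilon_{\CCE}$ term in the statement, and the rest of the proof then reduces to the Bai--Jin--Yu argument applied to the modified iterates.
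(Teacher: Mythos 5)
Your proposal is a faithful reconstruction of the intended argument, and it matches what the paper defers to: the paper does not actually write out a proof of this theorem but simply observes that it follows from Theorem 2 of \citet{bai2020provable} together with the approximate-CCE degradation analysis of \citet{xie2020learning,liu2021sharp,bai2020near}, which is precisely the optimism/pessimism sandwich with per-backup $\varepsilon_\CCE$ leakage and pigeonhole-over-counts regret telescope that you outline. (The only minor bookkeeping slip is that the pigeonhole sum $\sum_{g,h} 1/\sqrt{N^h(s^{g,h},a^{g,h},b^{g,h})}$ scales as $\sqrt{SABHT}$ rather than $\sqrt{SABT}$, but this does not affect the final rate you state.)
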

With $\delta = 1/T$, to have a non-dominant loss caused by the approximate CCE solver, we can choose $N_{\MWU} = G$. Then, for any environment $M$, it holds that
\begin{align*}
    \E_{D\sim \sP^{\text{VI-ULCB}}_M}\left[\sum\nolimits_{g\in[G]} V^{\dagger, \nu^g}_M(s^1) - V^{\mu^g, \dagger}_M(s^1)\right]  = O\left(\sqrt{H^3S^2ABT\log(SABT)}\right).
\end{align*}

\subsection{Proof of Theorem~\ref{thm:approx_VI_ULCB}: The Realization Construction}\label{subapp:approx_VI_ULCB}
\subsubsection{Embeddings and Extraction Mapping}
We consider each episode of observations to be embedded in $2H$ tokens. In particular, for each $t\in [T]$, we construct that
\begin{align*}
    \vh_{2t-1} = \tth(s^{g,h}) = \left[\begin{array}{cc} \vzero_{A} \\ \vzero_{B} \\ 0 \\ \hdashline  s^{g,h} \\ \hdashline \vzero_{AB} \\ \hdashline \vzero \\ \pos_{2t-1} \end{array}\right] =: \left[\begin{array}{cc} \vh^{\pre,a}_{2t-1} \\ \hdashline \vh^{\pre,b}_{2t-1} \\ \hdashline \vh^{\pre,c}_{2t-1} \\ \hdashline \vh^{\pre,d}_{2t-1} \end{array}\right],\\
    \vh_{2t} = \tth(a^{g,h}, b^{g,h}, r^{g,h}) = \left[\begin{array}{cc} a^{g,h} \\ b^{g,h} \\ r^{g,h} \\ \hdashline  \vzero_{S} \\ \hdashline \vzero_{AB} \\ \hdashline \vzero \\ \pos_{2t} \end{array}\right] =: \left[\begin{array}{cc} \vh^{\pre,a}_{2t} \\ \hdashline \vh^{\pre,b}_{2t} \\ \hdashline \vh^{\pre,c}_{2t} \\ \hdashline \vh^{\pre,d}_{2t} \end{array}\right],   
\end{align*}
where $s^{g,h}, a^{g,h}, b^{g,h}$ are represented via one-hot embedding. 
The positional embedding $\pos_i$ is defined as
\begin{align*}
    \pos_i: = \left[\begin{array}{c} g \\ h \\ t  \\ \ve_h \\ v_i \\ i \\ i^2 \\ 1 
    \end{array}\right],
\end{align*}
where $\ve_h$ is a one-hot vector with the $h$-th element being $1$ and $v_i : = \oneb\{\vh^a_i = \vzero\}$ denote the tokens that do not embed actions and rewards. 

In summary, for observations $D^{t-1}\cup\{s^t\}$, we obtain the following tokens of length $2t-1$:
\begin{align*}
    \mH:=h(D^{t-1},s^t) = [\vh_1, \vh_2, \cdots, \vh_{2t-1}]= [\tth(s^{1}), \tth(a^{1},b^{1}, r^{1}), \cdots, \tth(s^t)].
\end{align*}

With the above input $\mH$, the transformer outputs $\overline{\mH} = \TF_{\vtheta_+}(\mH)$ of the same size as $\mH$. The extraction mapping $\ttE$ is directly set to satisfy the following
\begin{align*}
    \ttE \cdot \overline{\vh}_{-1} = \ttE \cdot \overline{\vh}_{2t-1} = \overline{\vh}_{2t-1}^c \in \sR^{AB},
\end{align*}
i.e., the part $c$ of the output tokens is used to store the learned policy.

\subsubsection{An Overview of the Proof}
In the following, for the convenience of notations, we will consider step $t+1$, i.e., with observations $D^{t}\cup\{s^{t+1}\}$. Given an input token matrix
\begin{align*}
    \mH=h(D^{t},s^{t+1}) = [\vh_1, \vh_2, \cdots, \vh_{2t+1}],
\end{align*}
we construct a transformer to perform the following steps
\begin{align*}
    &\left[\begin{array}{cc} \vh^{\pre,a}_{2t+1} \\ \vh^{\pre,b}_{2t+1} \\ \vh^{\pre,c}_{2t+1} \\  \vh^{\pre,d}_{2t+1} \end{array}\right] \xrightarrow{\text{step 1}} \left[\begin{array}{cc} \vh^{\pre,\{a,b,c\}}_{2t+1} \\ N^{h}(s,a, b) \\ N^{h}(s,a,b, s')\\ N^{h}(s,a,b)r^h(s,a,b) \\ \star \\ \vzero \\ \pos_{2t+1}\end{array}\right] \xrightarrow{\text{step 2}} \left[\begin{array}{cc} \vh^{\pre,\{a,b,c\}}_{2t+1} \\ \hat{\sP}^h(s'|s,a,b) \\ \hat{r}^h(s,a,b)  \\ \star \\ \vzero \\ \pos_{2t+1}\end{array}\right] \xrightarrow{\text{step 3}} \left[\begin{array}{cc} \vh^{\pre,\{a,b,c\}}_{2t+1} \\ \overline{Q}^h(s,a,b) \\ \underline{Q}^h(s,a,b) \\ \star \\ \vzero \\ \pos_{2t+1}\end{array}\right] \\
    &\xrightarrow{\text{step 4}} \left[\begin{array}{cc} \vh^{\pre,\{a,b, c\}}_{2t+1} \\ \pi^{h}(a,b|s) \\ \star \\ \vzero \\ \pos_{2t+1}\end{array}\right] \xrightarrow{\text{step 5}} \left[\begin{array}{cc} \vh^{\pre,\{a,b, c\}}_{2t+1} \\ \overline{V}^h(s) \\ \underline{V}^h(s)\\ \star \\ \vzero \\ \pos_{2t+1}\end{array}\right] \xrightarrow{\text{step 6}} \left[\begin{array}{cc} \vh^{\pre,\{a,b\}}_{2t+1} \\ \pi^{h+1}(\cdot, \cdot|s^{h+1}) \\ \vh^{\post,d}_{2t+1}\end{array}\right] :=  \left[\begin{array}{cc} \vh^{\post,a}_{2t+1} \\ \vh^{\post,b}_{2t+1} \\ \vh^{\post,c}_{2t+1} \\  \vh^{\post,d}_{2t+1} \end{array}\right],
\end{align*}
where we use $N^h(s,a, b)$, $N^h(s,a,b,s')$, $N^h(s,a,b)r^h(s,a,b)$, $\hat{\sP}^h(s'|s,a,b)$, $\hat{r}^h(s,a,b)$, $\overline{Q}^h(s,a,b)$, $\underline{Q}^h(s,a,b)$ and $\pi^h(a,b|s)$ to denote their entire vectors over $h\in [H], s\in \gS, a\in \gA, b\in \gB, s'\in \gS$. The notation $\star$ denotes other quantities in $\vh^{d}_{2(t+1)}$.

The following provides a sketch of the proof.
\begin{itemize}
    \item[Step 1] There exists an attention-only transformer $\TF_{\vtheta}$ to complete Step 1 with
    \begin{align*}
         L = O(1), ~~ \max_{l\in [L]} M^{(l)} = O(HS^2AB), ~~ \|\vtheta\| = O(HG + HS^2AB).
    \end{align*}
    \item[Step 2] There exists a transformer $\TF_{\vtheta}$ to complete Step 2 with
    \begin{align*}
        L = O(1), ~~ &\max_{l\in [L]} M^{(l)} = O(HS^2AB), ~~ d'= O(G^2HS^2AB) \\
        &\|\vtheta\| = O(HS^2AB + G^3 + GH).
    \end{align*}
    \item[Step 3] There exists a transformer $\TF_{\vtheta}$ to complete Step 3 with
    \begin{align*}
        L = O(H), ~~ \max_{l\in [L]} M^{(l)} = O(SAB), ~~ d'^{(l)} = O(SAB), ~~ \|\vtheta\| = O(H+ SAB).
    \end{align*}
    \item[Step 4] There exists a transformer $\TF_{\vtheta}$ to complete Step 4 with
    \begin{align*}
        L = O(GHS), ~~ \max_{l\in [L]} M^{(l)} = O(AB), ~~ d'^{(l)} = O(AB), ~~ \|\vtheta\| = O(H+AB).
    \end{align*}
    \item[Step 5] There exists an attention-only  transformer $\TF_{\vtheta}$ to complete Step 5 with
    \begin{align*}
        L = O(1), ~~ \max_{l\in [L]} M^{(l)} = O(HS), ~~\|\vtheta\| = O(HS).
    \end{align*}
    \item[Step 6] There exists an attention-only  transformer $\TF_{\vtheta}$ to complete Step 6 with
    \begin{align*}
    L = O(1),~~ \max_{l\in [L]} M^{(l)} = O(HS), ~~\|\vtheta\| = O(HS + GH).
    \end{align*}
\end{itemize}
Thus, the overall transformer $\TF_{\vtheta}$ can be summarized as
\begin{align*}
    L = O(GHS),~~ &\max_{l\in [L]} M^{(l)} = O(HS^2AB), ~~ d' = O(G^2HS^2AB), \\
    &\|\vtheta\| = O(HS^2AB + G^3 + GH).
\end{align*}
Also, from the later construction, we can observe that $\log(R) = \tilde{\gO}(1)$. 
    
\subsubsection{Proof of Step 1: Update $N^h(s, a, b)$, $N^h(s, a, b, s')$ and $N^h(s, a, b)r^h(s, a, b)$}
This can be similarly completed by an attention-only transformer constructed in Step 1 of realizing UCB-VI in \citet{lin2023transformers}.

\subsubsection{Proof of Step 2: Update $\hat{\sP}(s'|s,a,b)$ and $\hat{r}(s,a,b)$}
This can be similarly completed by a transformer constructed in Step 2 of realizing UCB-VI in \citet{lin2023transformers}.

\subsubsection{Proof of Step 3: Compute $\overline{Q}^h(s'|s,a,b)$ and $\underline{Q}^h(s'|s,a,b)$}
The computation of $\overline{Q}^h(s'|s,a,b)$ can be similarly completed by a transformer constructed in Step 3 of realizing UCB-VI in \citet{lin2023transformers}.
The $\underline{Q}$ part can also be obtained by modifying a few plus signs to minuses.

\subsubsection{Proof of Step 4: Compute CCE}
This is the most challenging part of realizing the VI-ULCB design, which distinguishes it from the single-agent algorithms, e.g., UCB-VI \citep{azar2017minimax}. As mentioned in Appendix~\ref{subapp:VI_ULCB}, we obtain an approximate CCE via virtually playing MWU. In the following, for one tuple $(s,h)$, we prove that one transformer can be constructed to perform a one-step MWU update with that
\begin{align*}
    L = O(1), ~~ \max_{l\in [L]} M^{(l)} = O(AB),  ~~  d' = O(AB), ~~ \|\vtheta\| = O(H+AB). 
\end{align*}

To obtain this result, we construct a transformer to perform the following computation from inputs to output for all $t'\leq t$:
\begin{align*}
    &\vh_{2t} = \left[\begin{array}{cc} \vzero  \end{array}\right], ~~ \vh_{2t+1} = \left[\begin{array}{cc} \overline{L}^{h}(s, \cdot, \cdot)\\ \underline{L}^h(s, \cdot, \cdot) \\ \sum_{\tau < n} o_{+,\tau} \\ \sum_{\tau < n} o_{-,\tau} \\ \mu_{n}(\cdot|s) \\ \nu_{n}(\cdot|s) \\ \sum_{\tau \leq n} \mu_{\tau}(\cdot)\nu_{\tau}(\cdot) \\  \vzero\end{array}\right] \\
    &\xrightarrow{\text{compute}}
    \overline{\vh}_{2t} = \left[\begin{array}{cc} \vzero  \end{array}\right],~~\overline{\vh}_{2t+1} = \left[\begin{array}{cc} \overline{L}^{h}(s, \cdot, \cdot)\\ \underline{L}^h(s, \cdot, \cdot) \\ \sum_{\tau < n+1} o_{+,\tau} \\ \sum_{\tau < n+1} o_{-,\tau} \\ \mu_{n+1}(\cdot|s) \\ \nu_{n+1}(\cdot|s) \\ \sum_{\tau \leq n+1} \mu_{\tau}(\cdot)\nu_{\tau}(\cdot) \\  \vzero\end{array}\right].
\end{align*}

Note that here we again use the notations $\overline{L}^h(s,\cdot,\cdot) = \frac{H-\overline{Q}^h(s,\cdot,\cdot)}{H}$ and $\underline{L}^h(s,\cdot,\cdot) = \frac{H-\underline{Q}^h(s,\cdot,\cdot)}{H}$ to denote the normalized losses. It can be seen that this computation can be performed via one ReLU MLP layer.

\textbf{Step 4.1: Get $o_{+,n}$ and $o_{-,n}$.}

First, we can construct that for all $t' \leq t$
\begin{align*}
    \mQ^{(1)}_{a,1} \vh_{2t'} = \left[\begin{array}{cc} v_{2t'}-1 \\ \vzero  \\ t' \\ H \end{array}\right], 
    ~~&\mQ^{(1)}_{a,1} \vh_{2t'+1} = \left[\begin{array}{cc} v_{2t'+1}-1 \\ \nu_n(\cdot|s)  \\ t'+1 \\ H \end{array}\right];\\
    ~~\mK^{(1)}_{a,1} \vh_{2t'} = \left[\begin{array}{cc} H\\  \vzero  \\ -H \\ t' \end{array}\right], 
    ~~&\mK^{(1)}_{a,1} \vh_{2t'+1} = \left[\begin{array}{cc} H\\  \overline{L}^h(a, \cdot|s)  \\ -H \\ t'+1 \end{array}\right], \\
    \mV^{(1)}_{a,1} \vh_{2t'} = 2t',~~ &\mV^{(1)}_{a,1} \vh_{2t'+1} = 2t'+1.
\end{align*}
With ReLU activation, this constructed transformer leads to updates that $\vh^d_{2t'}  = 0$ and $\vh^d_{2t'+1}  = o_{+,n}(a)$. With another $A-1$ paralleling heads, the whole vector $o_{+,n}$ can be computed. Similarly, with $B$ more paralleling heads, the whole vector $o_{-,n}$ can be computed. 

Then, with one ReLU MLP layer, we can obtain that 
\begin{align*}
    \vh^d_{2t'} = \vzero, ~~\vh^d_{2t'+1} = \left[\begin{array}{cc}\sum_{\tau<n} o_{+,\tau}\\ \sum_{\tau<n} o_{-,\tau}\end{array}\right]+ \mW_2^{(1)} \sigmar \left(\mW_1^{(1)}\vh_{2t'+1}\right) = \left[\begin{array}{cc}\sum_{\tau<n+1} o_{+,\tau}\\ \sum_{\tau<n+1} o_{-,\tau}\end{array}\right].
\end{align*}

The required transformer can be summarized as
\begin{align*}
    L = 1, ~~M^{(1)} = O(A + B),~~ d' = O(A+B), ~~ \|\vtheta\| = O(H).
\end{align*}

\textbf{Step 4.2: Get $\mu_{n+1}(\cdot|s)$ and $\nu_{n+1}(\cdot|s)$.}

We can construct a softmax MLP layer such that
\begin{align*}
    \mW_{1}^{(1)} \vh_{2t'}  = \vzero_A, ~~ &\sigmas(\mW_{1}^{(1)} \vh_{2t'}) = \frac{1}{A}\cdot \vone_A\\
    \mW_{1}^{(1)} \vh_{2t'+1}  = \left[\begin{array}{cc} -\eta_A\sum_{\tau < n + 1} o^{t'}_{+,n} \end{array}\right], ~~ &\sigmas(\mW_{1}^{(1)} \vh_{2t'+1}) = \left[\begin{array}{cc} \mu_{n+1}(\cdot|s)  \end{array}\right],
\end{align*}
where $\eta_A = \sqrt{\log(A)/G}$. Thus, $\mu_{n+1}(\cdot|s)$ can be provided. Similarly, another MLP layer $\{W^{(2)}_1, W^{(2)}_2\}$ with softmax activation can provide $\nu_{n+1}(\cdot|s)$. The current output can be expressed as
\begin{align*}
    &\vh^d_{2t'} = \left[\begin{array}{cc} \vzero \\ \frac{1}{A} \cdot \vone_A \\  \frac{1}{B} \cdot \vone_B  \end{array}\right], ~~ \vh^d_{2t'+1} = \left[\begin{array}{cc} \mu_{n}(\cdot|s) \\ \nu_{n}(\cdot|s)\\ \mu_{n+1}(\cdot|s) \\ \nu_{n+1}(\cdot|s) \end{array}\right].
\end{align*}

We can further construct one more attention layer as
\begin{align*}
    \mQ_1^{(4)} \vh_{2t'} = \left[\begin{array}{cc} 1 - v_{2t'} \\ t' \\ 1 \end{array}\right],
    ~~&\mQ_1^{(3)} \vh_{2t'+1} = \left[\begin{array}{cc} 1 - v_{2t'+1} \\ t'+1 \\ 1 \end{array}\right];\\
    ~~\mK_1^{(3)} \vh_{2t'} = \left[\begin{array}{cc} 1 \\  -1 \\ t'\end{array}\right],
    ~~&\mK_1^{(3)} \vh_{2t'+1} = \left[\begin{array}{cc} 1 \\ -1 \\ t'+1 \end{array}\right];\\
    \mV_1^{(3)} \vh_{2t'} = 2t'\cdot \left[\begin{array}{cc} -\frac{1}{A} \cdot \vone_A \\  -\frac{1}{B} \cdot \vone_B \end{array}\right],
    ~~&\mV_1^{(3)} \vh_{2t'+1} = (2t'+1) \cdot \left[\begin{array}{cc} -\frac{1}{A} \cdot \vone_A \\  -\frac{1}{B} \cdot \vone_B \end{array}\right].
\end{align*}

With ReLU activation, this construction would result in that
\begin{align*}
    \vh_{2t'} = \left[\begin{array}{cc} \frac{1}{A} \cdot \vone_A \\  \frac{1}{B} \cdot \vone_B \end{array}\right] + \left[\begin{array}{cc} -\frac{1}{A} \cdot \vone_A \\  -\frac{1}{B} \cdot \vone_B \end{array}\right]  = \vzero, 
    ~~ \vh_{2t'+1} = \left[\begin{array}{cc} \mu_{n+1}(\cdot|s) \\ \nu_{n+1}(\cdot|s) \end{array}\right] + \vzero = \left[\begin{array}{cc} \mu_{n+1}(\cdot|s) \\ \nu_{n+1}(\cdot|s) \end{array}\right].
\end{align*}

At last, one ReLU MLP layer $\{\mW^{(3)}_1, \mW^{(3)}_2\}$ can be constructed to replace $\mu_{n}(\cdot|s)$ and  $\nu_{n}(\cdot|s)$ with $\mu_{n+1}(\cdot|s)$ and  $\nu_{n+1}(\cdot|s)$:
\begin{align*}
    \mW^{(3)}_2 \sigmar \left(\mW^{(3)}_1 \vh_{2t'}\right) = \vzero, ~~\mW^{(3)}_2 \sigmar \left(\mW^{(3)}_1 \vh_{2t'+1}\right) = \left[\begin{array}{cc} \mu_{n+1}(\cdot|s) - \mu_{n}(\cdot|s) \\ \nu_{n+1}(\cdot|s) - \nu_{n}(\cdot|s) \end{array}\right].
\end{align*}

The required transformer can be summarized as
\begin{align*}
    L = 3, ~~ \max_{l\in [L]} M^{(l)} = O(1), ~~ d' = O(A+B), ~~ \|\vtheta\| = O(\sqrt{\log(A)/G}+ \sqrt{\log(B)/G} + 1).
\end{align*}

\textbf{Step 4.3: Get $\sum_{\tau \leq n+1} \mu_{\tau}(\cdot|s)\nu_{\tau}(\cdot|s)/N$.}

We can construct
\begin{align*}
    \mQ_1^{(1)}\vh_{2t'} = \left[\begin{array}{cc} 0 \\ t' \\ 1 \end{array}\right], 
    ~~&\mQ_1^{(1)}\vh_{2t'+1} = \left[\begin{array}{cc} \mu_{n+1}(a|s) \\ t'+1 \\ 1 \end{array}\right]; \\
    \mK_1^{(1)}\vh_{2t'} = \left[\begin{array}{cc} 0 \\ -1 \\ t' \end{array}\right], 
    ~~ &\mK_1^{(1)}\vh_{2t'+1} = \left[\begin{array}{cc} \nu_{n+1}(b|s) \\ -1 \\ t'+1 \end{array}\right];\\ 
    \mV_1^{(5)}\vh_{2t'} = 2t',
    ~~&\mV_1^{(5)}\vh_{2t'+1} = 2t'+1.
\end{align*}
With ReLU activation, this construction can update that
\begin{align*}
    \vh_{2t'} = 0, ~~ \vh_{2t'+1} = \mu_{n+1}(a|s)\nu_{n+1}(b|s).
\end{align*}
Using an overall $AB$ paralleling heads, we can then obtain $\mu_{n+1}(\cdot|s)\nu_{n+1}(\cdot|s)$. Then, with a ReLU MLP layer $\{W_1^{(5)}, W_2^{(5)}\}$, we can obtain  $\sum_{\tau \leq n+1} \mu_{\tau}(\cdot|s)\nu_{\tau}(\cdot|s)/N$.

The required transformer can be summarized as
\begin{align*}
    L = 1, ~~ M^{(1)} = O(AB), ~~ d' = O(AB), ~~ \|\vtheta\| = O(AB).
\end{align*}

Combining all the sub-steps provides proof of a one-step MWU update. The same transformer can be stacked for $G$ times, which completes the $G$-step MWU.

\subsubsection{Proof of Step 5: Compute $\overline{V}^h(s)$ and $\underline{V}^h(s)$}
We can construct that
\begin{align*}
    \mQ^{(1)}_1 \vh_{2t'} = \left[\begin{array}{cc} 0 \\ t' \\ H\end{array}\right], ~~&\mQ^{(1)}_1 \vh_{2t'+1} = \left[\begin{array}{cc} \pi^h(\cdot,\cdot|s) \\ t'+1 \\ H \end{array}\right];\\
    \mK^{(1)}_1 \vh_{2t'} = \left[\begin{array}{cc} 0 \\ -H \\ t' \end{array}\right], ~~&\mK^{(1)}_1 \vh_{2t'+1} = \left[\begin{array}{cc} \overline{Q}^h(s, \cdot,\cdot) \\ -H \\ t'+1 \end{array}\right];\\
    \mV^{(1)}_1 \vh_{2t'} = 2t', ~~&\mV^{(1)}_1 \vh_{2t'+1} = 2t'+1.
\end{align*}
With ReLU activation, this construction leads to that
\begin{align*}
    \vh_{2t'} = 0, ~~ \vh_{2t'+1} = \pi^h(\cdot,\cdot|s) \cdot \overline{Q}^h(s, \cdot, \cdot) = \overline{V}^h(s).
\end{align*}
Thus, with overall $2HS$ paralleling heads, the values of $\{\overline{V}^h(s), \underline{V}^h(s): h\in [H], s\in \gS\}$ can be computed.

The required transformer can be summarized as
\begin{align*}
    L = 1, ~~M^{(1)} = O(HS), ~~\|\vtheta\| = O(HS).
\end{align*}

\subsubsection{Proof of Step 6: Obtain $\pi^{h+1}(\cdot,\cdot|s^{h+1})$}
We can construct one $HS$-head transformer that for all $(s, h) \in \gS\times [H]$
\begin{align*}
    \mQ^{(1)}_{h,s}\vh_{2t'} = \left[\begin{array}{cc} \vzero \\ \ve_{h'} \\ t' \\ 1 \\ 1\end{array}\right], 
    ~~ &\mQ^{(1)}_{h,s}\vh_{2t'+1} = \left[\begin{array}{cc} s^{t'+1} \\ \ve_{h'+1} \\ t'+1 \\ 1 \\ 1 \end{array}\right];\\
    \mK^{(1)}_{h,s}\vh_{2t'} = \left[\begin{array}{cc} \ve_{s} \\ \ve_{h} \\ -1 \\ t'  \\ - 1 \end{array}\right], 
    ~~ &\mK^{(1)}_{h,s}\vh_{2t'+1} = \left[\begin{array}{cc} \ve_{s} \\ \ve_{h} \\ -1 \\ t'+1  \\ -1\end{array}\right];\\
    \mV^{(1)}_{h,s}\vh_{2t'} =  \vzero, ~~ &\mV^{(1)}_{h,s}\vh_{2t'+1} = \pi^{h}(\cdot,\cdot|s).
\end{align*}
With ReLU activation, this construction leads to the update that
\begin{align*}
    \vh_{2t'} = \vzero, ~~ \vh_{2t'+1} = \frac{1}{2t'+1}\cdot \pi^{h'+1}(\cdot,\cdot|s^{t'+1}).
\end{align*}

Then, we can construct that
\begin{align*}
    &\mQ^{(1)}_{1}\vh_{2t'} = \left[\begin{array}{cc} 2t' \\ 2GHt' \\ 1 \end{array}\right], 
    ~~ \mQ^{(1)}_{1}\vh_{2t'+1} = \left[\begin{array}{cc} 2t'+1 \\ 2GH (t'+1) \\ 1 \end{array}\right];\\
    &\mK^{(1)}_{1}\vh_{2t'} = \left[\begin{array}{cc} 1 \\ -1 \\ 2GHt' \end{array}\right], 
    ~~ \mK^{(1)}_{1}\vh_{2t'+1} = \left[\begin{array}{cc} 1 \\ -1 \\ 2GH(t'+ 1) \end{array}\right];\\
    &\mV^{(1)}_{1}\vh_{2t'} =  \vzero, ~~ \mV^{(1)}_{h,s}\vh_{2t'+1} = \frac{1}{2t'+1}\cdot \pi^{h'+1}(\cdot,\cdot|s^{t'+1}).   
\end{align*}
With ReLU activation, this construction leads to the update that
\begin{align*}
    \vh_{2t'} = \vzero, ~~ \vh_{2t'+1} = \pi^{h'+1}(\cdot,\cdot|s^{t'+1}).
\end{align*}

The required transformer can be summarized as
\begin{align*}
    L = 2,~~ \max_{l\in [L]}M^{(l)} = O(HS), ~~\|\vtheta\| = O(HS + GH).
\end{align*}

\section{Proofs for the Centralized Overall Performance}\label{app:centralized_overall}
\begin{proof}[Proof of Theorem~\ref{thm:centralized_overall}]
    First, we can obtain the decomposition that
    \begin{align*}
        &\E_{M\sim \Lambda, D\sim \sP^{\Alg_{\widehat{\vtheta}}}_M}\left[\sum_{g\in[G]} V_M^{\dagger, \nu^g}(s^1) - V_M^{\mu^g, \dagger}(s^1)\right] \\
            &= \E_{M\sim \Lambda, D\sim \sP^{\Alg_{0}}_M}\left[\sum_{g\in[G]} V_M^{\dagger, \nu^g}(s^1) - V_M^{\mu^g, \dagger}(s^1)\right]\\
            &+ \E_{M\sim \Lambda, D\sim \sP^{\Alg_{\widehat{\vtheta}}}_M}\left[\sum_{g\in[G]} V_M^{\dagger, \nu^g}(s^1)\right] - \E_{M\sim \Lambda, D\sim \sP^{\Alg_{0}}_M}\left[\sum_{g\in[G]} V_M^{\dagger, \nu^g}(s^1)\right]\\
            & + \E_{M\sim \Lambda, D\sim \sP^{\Alg_{0}}_M}\left[\sum_{g\in[G]} V_M^{\mu^g,\dagger}(s^1)\right] - \E_{M\sim \Lambda, D\sim \sP^{\Alg_{\widehat{\vtheta}}}_M}\left[\sum_{g\in[G]} V_M^{\mu^g,\dagger}(s^1)\right].
    \end{align*}
    Via Theorem~\ref{thm:performance_VI_ULCB}, it holds that
    \begin{align*}
        \E_{M\sim \Lambda, D\sim \sP^{\Alg_{0}}_M}\left[\sum_{g\in[G]} V^{\dagger, \nu^g}_M(s^1) - V^{\mu^g, \dagger}_M(s^1)\right] = O\left(\sqrt{H^3S^2ABT\log(SABT)}\right).
    \end{align*}
    Then, via Lemma~\ref{lem:centralized_generalization} and Theorem~\ref{thm:centralized_pre_training}, we can obtain that
    \begin{align*}
        &\E_{M\sim \Lambda, D\sim \sP^{\Alg_{0}}_M}\left[\sum_{g\in[G]} V_M^{\mu^g,\dagger}(s^1)\right] - \E_{M\sim \Lambda, D\sim \sP^{\Alg_{\widehat{\vtheta}}}_M}\left[\sum_{g\in[G]} V_M^{\mu^g,\dagger}(s^1)\right]\\
        &= O\left( T\cdot \E_{M\sim \Lambda, D\sim \sP^{\Alg_{0}}_M}\left[\sum_{t\in [T]} \sum_{s\in\gS}\left[\TV\left(\Alg_0, \Alg_{\widehat\vtheta}|D^{t-1},s\right)\right]\right]\right)\\
        & = O\left(T^2S\sqrt{\varepsilon_{\real}}+T^2S\sqrt{\frac{\log\left(\gN_{\Theta}TS/\delta\right)}{N}}\right),
    \end{align*}
    and similarly,
    \begin{align*}
        &\E_{M\sim \Lambda, D\sim \sP^{\Alg_{\widehat{\vtheta}}}_M}\left[\sum_{g\in[G]} V_M^{\dagger, \nu^g}(s^1)\right] - \E_{M\sim \Lambda, D\sim \sP^{\Alg_{0}}_M}\left[\sum_{g\in[G]} V_M^{\dagger, \nu^g}(s^1)\right] \\
        &= O\left( T^2S\sqrt{\varepsilon_{\real}}+T^2S\sqrt{\frac{\log\left(\gN_{\Theta}TS/\delta\right)}{N}}\right).
    \end{align*}
    With Theorem~\ref{thm:approx_VI_ULCB} providing that $\varepsilon_{\real} = 0$, combining the above terms completes the proof of the regret bound. The bound on the covering number, i.e., $\log(N_{\Theta})$ can be obtained via Lemma~\ref{lem:covering_number}.
\end{proof}

\begin{lemma}\label{lem:centralized_generalization}
    For any two centralized algorithms $\Alg_\alpha$ and $\Alg_\beta$, we denote their performed policies for episode $g$ are $(\pi^g_\alpha, \pi^g_\beta)$, whose marginal policies are $(\mu^g_\alpha, \nu^g_\alpha)$ and $(\mu^g_\beta, \nu^g_\beta)$. For $\{\mu^g_{\alpha}, \nu^g_\beta\}$, it holds that
    \begin{equation*}\small
        \E_{\alpha}\left[\sum\nolimits_{g\in [G]}V_M^{\mu^g_\alpha, \dagger}(s^{1})\right] -  \E_{\beta}\left[\sum\nolimits_{g\in [G]}V_M^{\mu^g_\beta, \dagger}(s^1)\right] \lesssim T\cdot \E_{\alpha}\left[\sum\nolimits_{t\in [T], s\in\gS} \TV\left(\pi^t_{\alpha}, \pi^t_{\beta}|D^{t-1},s\right)\right],
    \end{equation*}
    where $\E_{\alpha} [\cdot]$ and $\E_{\beta} [\cdot]$ are with respect to $\sP^{\Alg_\alpha}_\Lambda$ and $\sP^{\Alg_\beta}_\Lambda$. A similar result holds for $\{\nu^g_\alpha, \nu^g_\beta\}$.   
\end{lemma}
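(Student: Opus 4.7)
\textbf{Proof plan for Lemma~\ref{lem:centralized_generalization}.}
The plan is to decompose the quantity on the left into a \emph{policy-perturbation} piece, which captures how the value-against-best-response changes when only the max-player's policy is perturbed (with the trajectory law held fixed under $\alpha$), and a \emph{distribution-shift} piece, which captures the change of the trajectory law when the algorithm is switched. Concretely, I would write
\begin{align*}
    \E_{\alpha}\!\left[\sum_g V_M^{\mu^g_\alpha,\dagger}(s^1)\right] - \E_{\beta}\!\left[\sum_g V_M^{\mu^g_\beta,\dagger}(s^1)\right]
    &= \E_{\alpha}\!\left[\sum_g \bigl(V_M^{\mu^g_\alpha,\dagger}(s^1) - V_M^{\mu^g_\beta,\dagger}(s^1)\bigr)\right] \\
    &\quad + \bigl(\E_{\alpha} - \E_{\beta}\bigr)\!\left[\sum_g V_M^{\mu^g_\beta,\dagger}(s^1)\right],
\end{align*}
viewing $\mu^g_\beta$ as a deterministic measurable functional of the prefix $D^{(g-1)H}$ that can legitimately be evaluated under either trajectory law.

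For the first term, I would first establish the Lipschitz estimate
\begin{equation*}
    \bigl|V_M^{\mu_1,\dagger}(s^1) - V_M^{\mu_2,\dagger}(s^1)\bigr| \;\leq\; H\sum_{h\in[H],\,s\in\gS} \TV\!\bigl(\mu_1^h(\cdot|s),\,\mu_2^h(\cdot|s)\bigr).
\end{equation*}
The trick is a minimax sandwich: picking $\nu_\star \in \{\nu_\dagger(\mu_1), \nu_\dagger(\mu_2)\}$ upper- and lower-bounds the difference by $|V_M^{\mu_1,\nu_\star} - V_M^{\mu_2,\nu_\star}|$, which reduces the problem to a single-agent performance-difference lemma in the MDP induced by fixing the min-player at $\nu_\star$. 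Since rewards are in $[0,1]$ and the horizon is $H$, the single-agent Q-functions are bounded by $H$, and upper-bounding state occupancies by $1$ delivers the sum over all $s\in\gS$. Passing from the marginal TV to the joint TV (marginalization is non-expansive for total variation) and identifying $(g,h)$ with $t$ then yields
\begin{equation*}
    \E_{\alpha}\!\left[\sum_g \bigl(V_M^{\mu^g_\alpha,\dagger}(s^1) - V_M^{\mu^g_\beta,\dagger}(s^1)\bigr)\right] \;\lesssim\; H\cdot \E_{\alpha}\!\left[\sum_{t\in[T],\,s\in\gS} \TV\!\bigl(\pi^t_\alpha,\pi^t_\beta \,|\, D^{t-1},s\bigr)\right].
\end{equation*}

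For the second term, the integrand is bounded by $HG = T$, so the standard coupling inequality for TV gives $|(\E_\alpha - \E_\beta)[\cdot]| \leq T\cdot \TV(\sP^{\Alg_\alpha}_\Lambda, \sP^{\Alg_\beta}_\Lambda)$. Because the two joint laws share the prior $\Lambda$ and the transition/reward kernels of the sampled $M$, the chain rule for TV (or $\KL$ followed by Pinsker) yields
\begin{equation*}
    \TV\!\bigl(\sP^{\Alg_\alpha}_\Lambda, \sP^{\Alg_\beta}_\Lambda\bigr) \;\leq\; \sum_{t\in[T]} \E_{\alpha}\!\left[\TV\!\bigl(\pi^t_\alpha,\pi^t_\beta \,|\, D^{t-1},s^t\bigr)\right] \;\leq\; \E_{\alpha}\!\left[\sum_{t,s} \TV\!\bigl(\pi^t_\alpha,\pi^t_\beta \,|\, D^{t-1},s\bigr)\right],
\end{equation*}
where the last step just inflates the single realized state $s^t$ to the full sum over $s\in\gS$. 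Adding the two contributions and using $H\leq T$ gives the claimed bound; the analogous statement for $\{\hat\nu_\alpha,\hat\nu_\beta\}$ is obtained by swapping the roles of the two players.

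The main obstacle I anticipate is the Lipschitz step for the policy-perturbation term: one must handle the fact that the best response $\nu_\dagger(\mu)$ is itself a (generally discontinuous) function of $\mu$, which is why a direct simulation-lemma argument with varying both $\mu$ and $\nu$ simultaneously would not give a clean bound. The minimax sandwich circumvents this by reducing to a pair of single-agent performance-difference computations, after which everything is routine. A minor secondary issue is bookkeeping: ensuring that the inflation from occupancy measures to a uniform sum over $\gS$, and from marginal to joint TV on $(a,b)$, loses only constants and does not alter the claimed order.
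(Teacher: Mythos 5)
Your proposal is correct and follows essentially the same route as the paper: the same two-term decomposition into a policy-perturbation piece and a distribution-shift piece, the same best-response relaxation $V^{\mu_1,\dagger}-V^{\mu_2,\dagger}\leq V^{\mu_1,\nu_\dagger(\mu_2)}-V^{\mu_2,\nu_\dagger(\mu_2)}$ to handle the $\dagger$ (the paper only needs the one-sided version of your ``sandwich''), the same step-by-step hybridization/performance-difference argument for the first term, and for the second term a TV chain-rule bound that is exactly what the paper establishes by its hybrid-measure telescoping. The only cosmetic differences are that you fold the chain rule and the $\|\cdot\|_\infty\cdot\TV$ coupling bound into two named steps where the paper does them in one telescoping pass, and a harmless slip writing $\hat\nu_\alpha,\hat\nu_\beta$ (decentralized notation) where the centralized lemma concerns $\nu^g_\alpha,\nu^g_\beta$.
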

\begin{proof}[Proof of Lemma~\ref{lem:centralized_generalization}]
    It holds that
    \begin{align*}
        &\E_{D \sim \sP^{\Alg_\alpha}_M}\left[\sum_{g\in [G]}V_M^{\mu^g_\alpha, \dagger}(s^1)\right] - \E_{D \sim \sP_{M}^{\Alg_\beta}}\left[\sum_{g\in [G]}V_M^{\mu^g_\beta, \dagger}(s^1)\right]\\
        & = \underbrace{\E_{D \sim \sP_{M}^{\Alg_\alpha}}\left[\sum_{g\in [G]} V_M^{\mu^g_\alpha, \dagger}(s^1) - \sum_{g\in [G]} V_M^{\mu^g_\beta, \dagger}(s^1)\right]}_{:=\text{(term I)}}\\
        &+ \underbrace{\E_{D \sim \sP_{M}^{\Alg_\alpha}}\left[\sum_{g\in [G]} V_M^{\mu^g_\beta, \dagger}(s^1)\right] - \E_{D \sim \sP_{M}^{\Alg_\beta}}\left[ \sum_{g\in [G]} V_M^{\mu^g_\beta, \dagger}(s^1)\right]}_{:=\text{(term II)}}
\end{align*}

Denoting $D^{H}: = D^{(g-1)H+1: gH}$ and $f(D^{H}) = \sum_{h\in [H]} r^{g,h}$,
we can further obtain that
\begin{align*}
    &V_M^{\mu^g_\alpha, \dagger}(s^1) - V_M^{\mu^g_\beta, \dagger}(s^1) \\
    &\overset{(a)}{\leq} V_M^{\mu^g_\alpha, \nu_{\dagger}(\mu^g_\beta)}(s^1) - V_M^{\mu^g_\beta, \nu_{\dagger}(\mu^g_\beta)}(s^1)\\
    & = \E_{D^{H}\sim \sP^{\mu^g_\alpha, \nu_\dagger(\mu^g_\beta)}_M}\left[f(D^{H})\right] - \E_{D^{H}\sim \sP^{\mu^g_\beta, \nu_\dagger(\mu^g_\beta)}_M}\left[f(D^{H})\right]\\
    & = \sum_{h\in [H]}\E_{D^{1:h}\sim \sP^{\mu^g_\alpha, \nu_\dagger(\mu^g_\beta)}_M, D^{h+1: H}\sim \sP^{\mu^g_\beta, \nu_\dagger(\mu^g_\beta)}_M}\left[f(D^H)\right] \\
    & - \sum_{h\in [H]}\E_{D^{1:h-1}\sim \sP^{\mu^g_\alpha, \nu_\dagger(\mu^g_\beta)}_M, D^{h: H}\sim \sP^{\mu^g_\beta, \nu_\dagger(\mu^g_\beta)}_M}\left[f(D^H)\right]\\
    & \overset{(b)}{\leq} 2H \sum_{h\in [H]}\E_{D^{1:h-1}\sim \sP^{\mu^g_\alpha, \nu_\dagger(\mu^g_\beta)}_M, s^{g,h}} \left[\TV\left(\mu^{g,h}_{\alpha} \times \nu^{h}_{\dagger}(\mu^{g}_\beta)(\cdot, \cdot|s^{g,h}), \mu^{g,h}_{\beta} \times \nu_\dagger^h(\mu^{g}_\beta)(\cdot, \cdot|s^{g,h})\right)\right]\\
    & \overset{(c)}{=} 2H \sum_{h\in [H]}\E_{D^{1:h-1}\sim \sP^{\mu^g_\alpha, \nu_\dagger(\mu^g_\beta)}_M, s^{g,h}} \left[\TV\left(\mu^{g,h}_{\alpha}(\cdot|s^{g,h}), \mu^{g,h}_{\beta}(\cdot|s^{g,h})\right)\right]\\
    & \overset{(d)}{\leq} 2H \sum_{h\in [H]}\sum_{s\in \gS} \TV\left(\pi^{g,h}_{\alpha}(\cdot, \cdot|s), \pi^{g,h}_{\beta}(\cdot, \cdot|s)\right),
\end{align*}
where (a) is from the definition of best responses, (b) is from the variational representation of the TV distance, (c) is from the fact that $\TV(\mu\times \nu, \mu'\times \nu) = \TV(\mu, \mu')$, and (d) is from the fact that $\TV(\sum_{b}\pi(\cdot,b),\sum_{b}\pi'(\cdot,b)) \leq \TV(\pi(\cdot,\cdot), \pi'(\cdot,\cdot))$. The above relationship further leads to that
\begin{align*}
    \text{(term I)}&:= \E_{D \sim \sP_{M}^{\Alg_\alpha}}\left[\sum_{g\in [G]} V_M^{\mu^g_\alpha, \dagger}(s^1) - \sum_{g\in [G]} V_M^{\mu^g_\beta, \dagger}(s^1)\right]\\
    &\leq 2H\cdot \E_{D \sim \sP_{M}^{\Alg_\alpha}}\left[\sum_{t\in [T]}\sum_{s\in \gS} \TV\left(\pi^{t}_{\alpha}, \pi^t_{\beta}|D^{t-1},s\right)\right].
\end{align*}

Also, denoting $g(D) = \sum_{g\in [G]} V^{\mu^g_\beta, \dagger}(s^{1})$,
it holds that
\begin{align*}
    \text{(term II)}&:= \E_{D \sim \sP_{M}^{\Alg_\alpha}}\left[\sum_{g\in [G]} V^{\mu^g_\beta, \dagger}(s^1)\right] - \E_{D \sim \sP_{M}^{\Alg_{\beta}}}\left[\sum_{g\in [G]} V^{\mu^g_\beta, \dagger}(s^1)\right] \\
    & = \sum_{t\in [T]}\E_{D^t \sim \sP_{M}^{\Alg_\alpha}, D^{t+1:T} \sim \sP_{M}^{\Alg_\beta}}\left[g(D)\right] - \sum_{t\in [T]}\E_{D^{t-1} \sim \sP_{M}^{\Alg_\alpha}, D^{t:T} \sim \sP_{M}^{\Alg_\beta}}\left[g(D)\right]\\
    & \leq 2T \sum_{t\in [T]} \E_{D^{t-1} \sim \sP_{M}^{\Alg_\alpha},s^t}\left[\TV\left(\Alg_\alpha, \Alg_\beta|D^{t-1},s^t\right)\right]\\
    & \leq 2T \cdot \E_{D \sim \sP_{M}^{\Alg_\alpha}}\left[\sum_{t\in [T]}\sum_{s\in \gS}\TV\left(\pi^t_{\alpha}, \pi^t_{\beta}|D^{t-1},s\right)\right].
\end{align*}
Combining (term I) and (term II) finishes the proof.
\end{proof}

\section{Proofs for the Decentralized Supervised Pre-training Guarantees}\label{app:decentralized_pre_training}
\begin{definition}[The Complete Version of Definition~\ref{def:decentralized_covering}]\label{def:decentralized_covering_complete}
    For a class of algorithms $\{\Alg_{\vtheta_+}: \vtheta_+ \in \Theta_+\}$, we say $\tilde{\Theta}_{+} \subseteq \Theta_+$ is a $\rho_+$-cover of $\Theta_+$, if $\tilde{\Theta}_{+}$ is a finite set such that for any $\vtheta_+\in \Theta_+$, there exists $\tilde{\vtheta}_+ \in \tilde{\Theta}_{+}$ such that for all $D^{t-1}_+, s\in \gS, t\in [T]$, it holds that
    \begin{equation*}
        \left\|\log \Alg_{\tilde{\vtheta}_+}(\cdot,\cdot|D^{t-1}_+,s) - \log \Alg_{\vtheta_+}(\cdot|D^{t-1}_+,s)\right\|_\infty \leq \rho_+.
    \end{equation*}
    The covering number $\gN_{\Theta_+}(\rho_+)$ is the minimal cardinality of $\tilde{\Theta}_+$ such that $\tilde{\Theta}_+$ is a $\rho_+$-cover of $\Theta_+$. 
    
    Similarly, for a class of algorithms $\{\Alg_{\vtheta_-}: \vtheta_- \in \Theta_-\}$, we say $\tilde{\Theta}_{-} \subseteq \Theta_-$ is a $\rho_-$-cover of $\Theta_-$, if $\tilde{\Theta}_{-}$ is a finite set such that for any $\vtheta_-\in \Theta_-$, there exists $\tilde{\vtheta}_- \in \tilde{\Theta}_{-}$ such that for all $D^{t-1}_-, s\in \gS, t\in [T]$, it holds that
    \begin{equation*}
        \left\|\log \Alg_{\tilde{\vtheta}_-}(\cdot,\cdot|D^{t-1}_-,s) - \log \Alg_{\vtheta_-}(\cdot|D^{t-1}_-,s)\right\|_\infty \leq \rho_-.
    \end{equation*}
    The covering number $\gN_{\Theta_-}(\rho_-)$ is the minimal cardinality of $\tilde{\Theta}_-$ such that $\tilde{\Theta}_-$ is a $\rho_-$-cover of $\Theta_-$.
\end{definition}

\begin{assumption}[The Complete Version of Assumption~\ref{aspt:decentralized_realizability}]\label{aspt:decentralized_realizability_complete}
    There exists $\vtheta^*_{+}\in \Theta_+$ such that there exists $\varepsilon_{+,\real}>0$, for all $t\in [T], s\in \gS$, $a\in \gA$, it holds that
    \begin{equation*}
        \log\left(\E_{M\sim \Lambda, D\sim \sP^{\Alg_0}_M} \left[\frac{\Alg_{+, 0}(a|D^{t-1}_+,s)}{ \Alg_{\vtheta^*_+}(a|D^{t-1}_+,s)}\right]\right) \leq \varepsilon_{+,\real}.
    \end{equation*}

    Similarly, there exists $\vtheta^*_{-}\in \Theta_-$ such that there exists $\varepsilon_{-,\real}>0$, for all $t\in [T], s\in \gS$, $b\in \gB$, it holds that
    \begin{equation*}
        \log\left(\E_{M\sim \Lambda, D\sim \sP^{\Alg_0}_M} \left[\frac{\Alg_{-, 0}(b|D^{t-1}_-,s)}{ \Alg_{\vtheta^*_-}(b|D^{t-1}_-,s)}\right]\right) \leq \varepsilon_{-,\real}.
    \end{equation*}
\end{assumption}

\begin{theorem}[The Complete Version of Theorem~\ref{thm:decentralized_pre_training}]\label{thm:decentralized_pre_training_complete}
    Let $\widehat{\vtheta}_+$ be the max-player's pre-training output defined in Section~\ref{subsubsec:decentralized_basics}. Take $\gN_{\Theta_+} = \gN_{\Theta_+}(1/N)$ as in Definition~\ref{def:decentralized_covering_complete}. Then, under Assumption~\ref{aspt:decentralized_realizability_complete}, with probability at least $1-\delta$, it holds that
    \begin{align*}
        \E_{M\sim \Lambda, D\sim \sP^{\Alg_0}_M} \left[\sum_{t\in [T], s\in \gS} \TV\left(\Alg_{+,0}, \Alg_{\widehat{\vtheta}_+}|D^{t-1}_{+}, s\right)\right] \\= O\left(TS\sqrt{\varepsilon_{+,\real}}+TS\sqrt{\frac{\log\left(\gN_{\Theta_+}TS/\delta\right)}{N}}\right).
    \end{align*}
    
    Let $\widehat{\vtheta}_-$ be the min-player's pre-training output defined in Section~\ref{subsubsec:decentralized_basics}. Take $\gN_{\Theta_-} = \gN_{\Theta_-}(1/N)$ as in Definition~\ref{def:decentralized_covering_complete}. Then, under Assumption~\ref{aspt:decentralized_realizability_complete}, with probability at least $1-\delta$, it holds that
    \begin{align*}
        \E_{M\sim \Lambda, D\sim \sP^{\Alg_0}_M} \left[\sum_{t\in [T], s\in \gS} \TV\left(\Alg_{-,0}, \Alg_{\widehat{\vtheta}_-}|D^{t-1}_{+}, s\right)\right] \\
        = O\left(TS\sqrt{\varepsilon_{-,\real}}+TS\sqrt{\frac{\log\left(\gN_{\Theta_-}TS/\delta\right)}{N}}\right).
    \end{align*}
\end{theorem}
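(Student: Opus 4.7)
The plan is to essentially replay the centralized argument of Theorem~\ref{thm:centralized_pre_training} twice, once for each player, exploiting the fact that in the decentralized setup $\Alg_0 = (\Alg_{+,0}, \Alg_{-,0})$ factorizes. Thanks to this factorization, the augmented samples $a^t_{i,s}$ inside $D'_{+,i}$ are drawn marginally from $\Alg_{+,0}(\cdot | D^{t-1}_{+,i}, s)$, which reduces the max-player MLE problem to a single-distribution density-estimation problem over the max-player's own observation filtration. The min-player's bound then follows by an identical argument with the roles of $(\gA, a, \Alg_{+,0}, \Alg_{\vtheta^*_+}, \Alg_{\widehat\vtheta_+})$ swapped for $(\gB, b, \Alg_{-,0}, \Alg_{\vtheta^*_-}, \Alg_{\widehat\vtheta_-})$, so I will only sketch the max-player case.

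First I would fix a $\rho_+$-cover $\tilde\Theta_+ \subseteq \Theta_+$ of cardinality $\gN_{\Theta_+}(\rho_+)$ as in Definition~\ref{def:decentralized_covering_complete}, and for each $\vtheta_m \in \tilde\Theta_+$, $t\in [T]$, $s\in \gS$, $i \in [N]$ define the log-likelihood ratio
\begin{equation*}
\ell^t_{i,m}(s) := \log \frac{\Alg_{+,0}(a^t_{i,s} | D^{t-1}_{+,i}, s)}{\Alg_{\vtheta_m}(a^t_{i,s} | D^{t-1}_{+,i}, s)}.
\end{equation*}
Next I would invoke Lemma~14 of \citet{lin2023transformers} (a martingale concentration bound) to obtain, with probability at least $1 - \delta$, uniformly over $m \in [\gN_{\Theta_+}]$, $t \in [T]$, $s \in \gS$,
\begin{equation*}
\tfrac{1}{2}\sum_{i\in [N]} \ell^t_{i,m}(s) + \log(\gN_{\Theta_+} TS/\delta) \;\geq\; -\sum_{i\in [N]} \log \E\left[\exp(-\ell^t_{i,m}(s)/2)\right].
\end{equation*}
A direct computation shows $\E[\exp(-\ell^t_{i,m}(s)/2) \mid D^{t-1}_{+,i}] = \sum_{a \in \gA} \sqrt{\Alg_{\vtheta_m}(a | D^{t-1}_{+,i}, s)\,\Alg_{+,0}(a | D^{t-1}_{+,i}, s)}$, i.e., a Bhattacharyya coefficient, which by the standard Hellinger--TV inequality is bounded above by $1 - \tfrac{1}{2} \E[\TV(\Alg_{\vtheta_m}, \Alg_{+,0} | D^{t-1}_{+}, s)^2]$.

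Then I would combine the defining optimality of the MLE solution $\widehat\vtheta_+$ against the realizable approximator $\vtheta^*_+$, apply the $\rho_+$-cover translation to replace $\widehat\vtheta_+$ by its nearest cover element (paying an additive $\rho_+$ per $(t,s)$), and bound $\sum_{s,t,i}\log\bigl(\Alg_{+,0}(a^t_{i,s} | D^{t-1}_{+,i},s)/\Alg_{\vtheta^*_+}(a^t_{i,s} | D^{t-1}_{+,i},s)\bigr)$ in expectation via $\log \E[\cdot] \leq \varepsilon_{+, \real}$ from Assumption~\ref{aspt:decentralized_realizability_complete} plus an additional Markov/concentration step. Assembling these pieces yields a squared-TV bound
\begin{equation*}
N \sum_{t \in [T], s \in \gS} \bigl(\E_{D}[\TV(\Alg_{+,0}, \Alg_{\widehat\vtheta_+} | D^{t-1}_+, s)]\bigr)^2 \;\lesssim\; NTS\,\varepsilon_{+,\real} + NTS\rho_+ + TS\log(\gN_{\Theta_+} TS/\delta),
\end{equation*}
after which Cauchy--Schwarz across $(t,s)$ converts this into the desired $\ell^1$-type bound, and choosing $\rho_+ = 1/N$ optimizes the tradeoff.

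The main subtlety (rather than a serious obstacle) is ensuring that the bound holds uniformly over all $s \in \gS$, not just over states that the context algorithm actually visits. This is exactly the reason the augmented dataset $D'_{+,i}$ was included in Section~\ref{subsubsec:decentralized_basics}: for every $(t, s)$ the MLE receives an i.i.d.\ sample $a^t_{i,s} \sim \Alg_{+,0}(\cdot | D^{t-1}_{+,i}, s)$, so the pointwise-in-$s$ analysis above goes through and the $\sum_{s\in\gS}$ falls out cleanly. Without the augmentation, occupancy-measure reweighting would be required and the bound would degrade by a concentrability coefficient. The min-player's statement is obtained by running the entire argument verbatim with $\Alg_{-,0}$, $\vtheta^*_-$, $\widehat\vtheta_-$, $D^{t-1}_-$, $b^t_{i,s}$, and $\gN_{\Theta_-}$.
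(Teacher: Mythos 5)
Your proposal is correct and follows essentially the same route as the paper: the paper proves this theorem by replaying the argument of Theorem~\ref{thm:centralized_pre_training} (covering, the MGF concentration lemma, the Bhattacharyya/Hellinger step, MLE optimality against $\vtheta^*_+$, Cauchy--Schwarz, and $\rho_+ = 1/N$) with the centralized joint algorithm replaced by the max-player's marginal $\Alg_{+,0}$, the joint action $(a,b)$ by $a$, and the joint trajectory $D^{t-1}$ by $D^{t-1}_+$, then repeats verbatim for the min-player. Your observation about the role of the augmented dataset $D'_{+,i}$ in making the pointwise-in-$s$ analysis work also matches the paper's own remarks in Section~\ref{subsubsec:decentralized_basics}.
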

\begin{proof}[Proof of Theorem~\ref{thm:decentralized_pre_training_complete}]
    This theorem can be similarly proved as Theorem~\ref{thm:decentralized_pre_training}.
\end{proof}

\section{Proofs for Realizing V-learning}\label{app:decentralized_realization}
In the following proof, we focus on the max-player's perspective, which can be easily extended for the min-player.

\subsection{Details of V-learning}\label{subapp:V_learning}
The details of V-learning \citep{jin2023v}, discussed in Sec.~\ref{subsec:approx_V_learning}, are presented in Alg.~\ref{alg:v_learning}, where the following notations are adopted
\begin{align*}
    \alpha_n &= \frac{H+1}{H+n}, ~~ \beta_n = c\cdot \sqrt{\frac{H^3A\log(HSAG/\delta)}{n}}, \\
    \gamma_n  &= \eta_n = \sqrt{\frac{H\log(A)}{An}},~~ \omega_n  = \alpha_n \left(\prod_{\tau=2}^n (1-\alpha_\tau)\right)^{-1}.
\end{align*}
After the learning process, V-learning requires an additional procedure to provide the output policy. We include this procedure in Alg.~\ref{alg:v_learning_output}, where the following notations are adopted
\begin{align*}
    N^{g,h}(s) &= \text{the number of times $s$ is visited at step $h$ before episode $g$},\\
    g^{h}_i(s) &= \text{the index of the episode $s$ is visited at step $h$ for the $i$-th time},\\
    \alpha_{n,i} & = \alpha_i \prod_{j = i+1}^n (1 - \alpha_j).
\end{align*}

Following the results in \citet{jin2023v}, we can obtain the following performance guarantee of V-learning.
\begin{theorem}[Theorem 4 in \citet{jin2023v}]\label{thm:performance_V_learning}
    With probability at least $1-\delta$, in any environment $M$, the output policies $(\hat{\mu}, \hat{\nu})$ from V-learning satisfy that 
    \begin{align*}
        V_{M}^{\dagger, \hat{\nu}}(s^1) - V_{M}^{\hat{\mu}, \dagger}(s^1) = O\left(\sqrt{\frac{H^5S(A+B)\log(SABT/\delta)}{G}}\right).
    \end{align*}
\end{theorem}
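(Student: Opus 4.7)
The plan is to reproduce the V-learning analysis of \citet{jin2023v}, which couples two symmetric ``certified-optimism'' arguments, one for each player, whose combination bounds the NE gap. Focus on the max-player; the min-player side is symmetric with $A$ replaced by $B$. The central objects are the maintained upper-confidence value $\overline V^{g,h}(s)$ (updated by $\overline V^{g,h}(s)\leftarrow (1-\alpha_n)\overline V^{g-1,h}(s)+\alpha_n\bigl(r^h+\overline V^{g,h+1}(s')+\beta_n\bigr)$ at the $n$-th visit to $(s,h)$), the inner weighted Hedge/FTRL instance with learning rate $\eta_n$ that produces the per-visit policy $\mu_n(\cdot\mid s)$, and the certified output policy $\hat\mu$ obtained by the recursive rule in Alg.~\ref{alg:v_learning_output}, where visiting $(s,h)$ samples a past index $i$ with probability $\alpha_{N^{g,h}(s),i}$ and then follows $\mu_i(\cdot\mid s)$ for one step before reverting to the certified policy at the next layer.

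First I would establish optimism by backward induction on $h$: with probability at least $1-\delta/2$, for every $(g,h,s)$,
\[
\overline V^{g,h}(s)\;\ge\; V^{\dagger,\hat\nu}_{M,h}(s),
\]
where $V^{\dagger,\hat\nu}_{M,h}$ is the value-to-go of the max-player's best response against the min-player's certified policy restricted to levels $\ge h$. The base case $h=H+1$ is trivial ($\overline V\equiv 0$). For the induction step one expands the weighted update with coefficients $\alpha_{n,i}$, splits the difference $\overline V^{g,h}(s)-V^{\dagger,\hat\nu}_{M,h}(s)$ into (a) a martingale noise term from $r^h$ and the transition, controlled by Azuma--Hoeffding at level $\sqrt{H^2 A\log(\cdot)/n}$, (b) the weighted adversarial-bandit regret of the inner Hedge instance at $(s,h)$ against the min-player's importance-weighted losses, which by standard Hedge analysis with the step-size schedule $\eta_n=\sqrt{H\log(A)/(An)}$ contributes $\tilde{\gO}(\sqrt{H^3A/n})$, and (c) a recursive remainder at level $h+1$ that is non-negative by the induction hypothesis. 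The bonus $\beta_n=c\sqrt{H^3A\log(HSAG/\delta)/n}$ is chosen precisely so that it dominates (a)$+$(b), which gives the required inequality.

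Next I would convert the optimism inequality into an NE-gap bound by iterating the recursion. Summing $\overline V^{g,1}(s^1)-V^{\mu^g,\dagger}_{M}(s^1)$ over $g\in[G]$ and unrolling through the $H$ layers, the telescoping of the $\alpha_{n,i}$ weights leaves only bonus terms $\sum_{g,h}\beta_{N^{g,h}(s^{g,h})}$ plus another $\tilde{\gO}(\sqrt{H^3 S A G})$ Azuma deviation. A Cauchy--Schwarz and pigeonhole argument $\sum_{s}\sum_{n\le N^{G+1,h}(s)}n^{-1/2}\le \sqrt{SG}$ yields
\[
\sum_{g\in[G]}\bigl(\overline V^{g,1}(s^1)-V^{\mu^g,\dagger}_M(s^1)\bigr)\;=\;\tilde{\gO}\!\left(\sqrt{H^5 S A\,G}\right).
\]
Dividing by $G$ and combining with the symmetric min-player bound (giving a $\sqrt{H^5SB/G}$ term) via the certified-policy identities $V^{\hat\mu,\hat\nu}_M\le \overline V^{1,1}_{\text{avg}}$ and $V^{\hat\mu,\hat\nu}_M\ge \underline V^{1,1}_{\text{avg}}$ (which rely on the recursive sampling rule outputting a true no-regret-type certified policy) closes the NE gap at the claimed rate.

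The hard step is the coupling in step~2: one must verify that the certified-policy construction from Alg.~\ref{alg:v_learning_output} exactly realizes the weighted adversarial-bandit regret analysis used in step~1, i.e.\ that sampling past policies with weights $\alpha_{n,i}$ is the correct ``averaging'' to convert per-step Hedge regret into a guarantee on $V^{\hat\mu,\dagger}_M$. This requires a delicate backward-induction argument showing that the expectation of the rollout under the certified policy equals the $\alpha_{n,i}$-weighted mixture of past per-step values, which is precisely what makes the martingale/telescoping go through; the choice $\alpha_n=(H+1)/(H+n)$ is what keeps $\sum_i \alpha_{n,i}^2\lesssim 1/n$ and is therefore what produces the $H^5$ (rather than $H^6$) dependence. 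Apart from this coupling, the remaining work is bookkeeping: tracking constants, applying Hedge's regret bound with time-varying step sizes, and a union bound over $(g,h,s)$ for the concentration inequalities.
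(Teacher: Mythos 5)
Your proposal reconstructs the V-learning analysis, but note the paper itself does not prove Theorem~\ref{thm:performance_V_learning}: the statement is imported verbatim as Theorem~4 of \citet{jin2023v}, and the only thing the paper does locally is set $\delta = 1/T$ to pass from a high-probability bound to an expectation bound. So there is no in-paper proof to compare against; what you are really asked to supply is the original Jin et al.\ argument.

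As a reconstruction of that argument, your sketch is structurally sound: the three ingredients --- (i) per-$(s,h)$ optimism of the maintained $V$-estimates proved by backward induction, where the bonus $\beta_n$ is calibrated to dominate the Azuma deviation of $r+V^{h+1}$ and the weighted adversarial-bandit (Hedge/FTRL with step sizes $\eta_n$) regret; (ii) the certified-policy construction of Alg.~\ref{alg:v_learning_output} coupling the rollout to the $\alpha_{n,i}$-weighted history; and (iii) pigeonhole over $(s,h)$ to accumulate $\sum \beta_{N^{g,h}(s^{g,h})} \lesssim \sqrt{H^5 S A\,G}$ --- are exactly the ones in Jin et al., and they do produce the advertised $\sqrt{H^5 S(A+B)/G}$ rate after averaging over episodes and symmetrizing across players. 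One imprecision worth flagging: the concluding ``certified-policy identities'' $V^{\hat\mu,\hat\nu}_M \le \overline V^{1,1}_{\text{avg}}$ and $V^{\hat\mu,\hat\nu}_M \ge \underline V^{1,1}_{\text{avg}}$ are phrased in the style of the centralized VI-ULCB proof, where a single learner maintains both an upper and a lower $Q$-estimate. V-learning is decentralized: the max-player maintains only an optimistic $\overline V$ certifying $\overline V^{g,1}(s^1)\ge V^{\dagger,\hat\nu}_M(s^1)$, and the min-player independently maintains its own optimistic estimate (equivalently $H-\underline V$), certifying $\underline V^{g,1}(s^1)\le V^{\hat\mu,\dagger}_M(s^1)$. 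The NE gap is then bounded directly by $\frac{1}{G}\sum_g\bigl(\overline V^{g,1}(s^1)-\underline V^{g,1}(s^1)\bigr)$, which telescopes into the two players' regrets --- no appeal to a joint value $V^{\hat\mu,\hat\nu}_M$ is needed or, strictly speaking, available in the decoupled setting. With that adjustment the closing step goes through as you describe.
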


Thus, with $\delta = 1/T$, we can obtain that
\begin{align*}
    \E_{D\sim \sP^{\text{V-learning}}_M}\left[V_{M}^{\dagger, \hat{\nu}}(s^1) - V_{M}^{\hat{\mu}, \dagger}(s^1)\right]  = O\left(\sqrt{\frac{H^5S(A+B)\log(SABT)}{G}}\right).
\end{align*}

\begin{algorithm}[htb]
    \caption{V-learning \citep{jin2023v}}
    \label{alg:v_learning}
    \begin{algorithmic}[1]
        \STATE {\bfseries Initialize:} for any $(s,a,h) \in \gS\times \gA\times [H]$, $V^h(s)\gets H+1-h$, $N^h(s) \gets 0$, $\mu^h(a|s)\gets 1/A$
        \FOR{episode $g = 1, \cdots, G$}
        \STATE receive $s^{g,1}$
        \FOR{step $h = 1, \cdots, H$}
        \STATE Take action $a^{g,h}\sim \pi^h(\cdot|s^{g,h})$, observe reward $r^{g,h}$ and next state $s^{g,h+1}$
        \STATE Update $n = N^h(s^{g,h}) \gets N^h(s^{g,h}) + 1$
        \STATE Update $\tilde{V}^h(s^{g,h}) \gets \left(1-\alpha_n\right)\tilde{V}^h(s^{g,h}) + \alpha_n\left(r^{g,h} + V^{h+1}(s^{g,h+1}) +\beta_n\right)$
        \STATE Update $V^h(s^{g,h}) \gets \min\left\{H+1-h, \tilde{V}^h(s^{g,h})\right\}$
        \STATE Compute $\tilde{\ell}^h_n(s^{g,h}, a) \gets \frac{H-r^{g,h}-V^{h+1}(s^{g,h+1})}{H} \cdot \frac{\vone\{a = a^{g,h}\}}{\mu^{h}(a^{g,h}|s^{g,h}) + \gamma_n}$ for all $a\in \gA$
        \STATE Update $\mu^h(a|s^{g,h}) \propto \exp\left(-\frac{\eta_n}{\omega_n} \cdot \sum_{\tau \in [n]} \omega_\tau \cdot \tilde{\ell}^h_\tau(s^{g,h}, a)\right)$ for all $a\in \gA$
        \ENDFOR
        \ENDFOR
    \end{algorithmic}
\end{algorithm}

\begin{algorithm}[htb]
    \caption{V-learning Executing Output Policy $\hat{\mu}$ \citep{jin2023v}}
    \label{alg:v_learning_output}
    \begin{algorithmic}[1]
        \STATE Sample $g\sim \texttt{Unif}(1, 2, \cdots, G)$
        \FOR{step $h = 1, \cdots, H$}
        \STATE Observe $s^h$ and set $n\gets N^{g,h}(s^h)$
        \STATE Set $g\gets g^{h}_i(s^h)$ where $i\in [n]$ is sampled with probability $\alpha_{n,i}$
        \STATE Take action $a^h\sim \mu^{g,h}(\cdot|s^h)$
        \ENDFOR
    \end{algorithmic}
\end{algorithm}

\subsection{An Additional Assumption About Transformers Performing Division}\label{subapp:division_assumption}
Before digging into the proof of Theorem~\ref{thm:approx_V_learning}, we first state the following assumption that there exists one transformer that can perform the division operation.
\begin{assumption}\label{aspt:division}
    There exists one transformer $\TF_{\vtheta}$ with 
    \begin{align*}
        L = L_D, ~~ \max_{l\in [L]} M^{(l)} = M_D, ~~ d' = d_D, ~~ \|\vtheta\| =  F_D,
    \end{align*}
    such that for any $x \in [0,1], y \in [\sqrt{\log(A)/(AG)}, \sqrt{\log(A)/A}+1], \varphi > 0$, with input $\mH = [\vh_1, \cdots, \vh_{e}, \cdots]$, where
    \begin{align*}
        \vh_{i} = \left[\begin{array}{cc} 0 \\ \varphi \\ \vzero \end{array}\right], ~~ \forall i\neq e; ~~ \vh_{e} = \left[\begin{array}{cc} x \\ y \\ \vzero \end{array}\right],
    \end{align*}
    the transformer can provide output $\overline{\mH} = [\overline{\vh}_1, \cdots, \overline{\vh}_{e}, \cdots]$
    \begin{align*}
        \overline{\vh}_{i} = \left[\begin{array}{cc} 0 \\ \varphi\\ 0 \\ \vzero \end{array}\right], ~~ \forall i\neq e; ~~ \overline{\vh}_{e} = \left[\begin{array}{cc} x \\ y \\ x/y \\ \vzero \end{array}\right].
    \end{align*}
\end{assumption}
We note that this assumption on performing exact division is only for the convenience of the proof as one can approximate the division to any arbitrary precision only via one MLP layer with ReLU activation.

In particular, let $\textup{Ball}_{\infty}^k(R) = [-R, R]^\infty$ denote the standard $\ell_{\infty}$ ball in $\sR^k$ with radius $R>0$, we introduce the following definitions and result.
\begin{definition}[Approximability by Sum of ReLUs, Definition 12 in \citet{bai2023transformers}]\label{def:sum_of_ReLUs}
    A function $g: \sR^k \to \sR$ is $(\varepsilon_{\textup{approx}}, R, M, C)$-approximable by sum of ReLUs, if there exists a ``$(M, C)$-sum of ReLUs'' function
    \begin{align*}
        f_{M,C}(\vz) = \sum_{m=1}^M c_m \sigma(\va_m^\top [\vz; 1])
    \end{align*}
    with
    \begin{align*}
        \sum_{m=1}^M |c_m| \leq C,~~\max_{m\in [M]}\|\va_m\|_1 \leq 1,~~\va_m \in \sR^{k+1},~~c_m\in \sR
    \end{align*}
    such that
    \begin{align*}
        \sup_{\vz\in \Ball^k_\infty(R)} \left|g(\vz) - f_{M,C}(\vz)\right| \leq \varepsilon_{\textup{approx}}.
    \end{align*}
\end{definition}

\begin{definition}[Sufficiently Smooth $k$-variable Function, Definition A.1 in \citet{bai2023transformers}]\label{def:sufficient_smooth}
    We say a function $g: \sR^k \to \sR$ is $(R, C_\ell)$-smooth if for $s = \lceil(k-1)/2\rceil + 2$, $g$ is a $C^s$ function on $\Ball^k_\infty(R)$, and 
    \begin{align*}
        \sup_{\vz\in \Ball^k_\infty(R)} \|\nabla^i g(\vz)\|_\infty = \sup_{\vz\in \Ball^k_\infty(R)} \max_{j_1, \cdots, j_i \in [k]}  |\partial_{z_{j_1} \cdots z_{j_i}} g(\vz)| \leq L_i
    \end{align*}
    for all $i\in \{0, 1, \cdots, s\}$, with
    \begin{align*}
        \max_{0\leq i\leq s} L_iR^i \leq C_{\ell}.
    \end{align*}
\end{definition}

\begin{proposition}[Approximating Smooth $k$-variable Functions, Proposition A.1 in \citet{bai2023transformers}]\label{prop:approx_smooth_func}
    For any $\varepsilon_{\textup{approx}}>0, R\geq 1, C_\ell >0$, we have the following: Any $(R, C_\ell)$-smooth function $g: \sR^k \to \sR$ is $(\varepsilon_{\textup{approx}}, R, M, C)$ approximable by sum of ReLUs with $M \leq C(k) C_{\ell}^2\log(1+C_\ell/\varepsilon_{\text{approx}})/\varepsilon_{\text{approx}}^2$ and $C\leq C(k) C_\ell$, where $C(k) >0$ is a constant that depends only on $k$.
\end{proposition}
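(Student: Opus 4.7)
The plan is to follow the classical Barron-type strategy: convert the pointwise $C^s$ smoothness of $g$ into a bound on a Fourier spectral norm, invoke a Maurey--Jones--Barron convex-hull argument to produce an $M$-term sum of ridge functions, and finally convert each ridge into a short sum of ReLUs from the dictionary of Definition~\ref{def:sum_of_ReLUs}. Throughout, I would rescale to the unit ball $\Ball^k_\infty(1)$ via $z \mapsto z/R$; the hypothesis $\max_{0\le i\le s} L_i R^i \le C_\ell$ is precisely what makes the rescaled function's derivatives bounded by $C_\ell$ independently of $R$, so that all constants below collapse to depend only on $k$ and $C_\ell$.

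\textbf{Step 1 (Spectral bound).} Working on the unit cube, I would multiply the rescaled $g$ by a fixed smooth bump $\chi$ equal to $1$ on $\Ball^k_\infty(1)$ and supported in $\Ball^k_\infty(2)$, obtaining an extension $\tilde g = \chi g$ with $\|\tilde g\|_{H^s(\sR^k)} \lesssim C(k)\,C_\ell$. Integration by parts $s$ times against the Fourier kernel, together with Plancherel, gives the key Cauchy--Schwarz estimate
\begin{align*}
\int_{\sR^k} |\omega|_1 \,|\hat{\tilde g}(\omega)| \, d\omega \;\leq\; \sqrt{\int_{\sR^k} \tfrac{|\omega|_1^2}{(1+|\omega|^2)^s}\, d\omega}\; \cdot\; \|\tilde g\|_{H^s(\sR^k)}.
\end{align*}
Because $s = \lceil(k-1)/2\rceil + 2 > k/2 + 1$, the first factor is a finite constant depending only on $k$. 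Hence the Barron spectral norm of $\tilde g$ is $O(C(k)\, C_\ell)$.

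\textbf{Step 2 (Ridge approximation).} Writing $\tilde g(z) = \mathrm{Re}\int e^{i\langle \omega,z\rangle}\hat{\tilde g}(\omega)\, d\omega$ expresses $\tilde g$, up to a constant, as a signed mixture of cosine ridge functions $z\mapsto \cos(\langle\omega,z\rangle + b)$ whose total mass is $O(C(k)\,C_\ell)$. A Maurey--Jones--Barron probabilistic-selection argument then yields a convex combination of $M_1$ such cosine ridges approximating $\tilde g$ uniformly on $\Ball^k_\infty(1)$ with error $O(C(k)\,C_\ell / \sqrt{M_1})$; setting $M_1 \asymp C(k)\, C_\ell^2/\varepsilon_{\textup{approx}}^2$ drives the error below $\varepsilon_{\textup{approx}}/2$.

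\textbf{Step 3 (Cosine-to-ReLU).} Each univariate cosine ridge, restricted to the bounded range of $\langle \omega, z\rangle$ on $\Ball^k_\infty(1)$, is uniformly approximable by a sum of $O(\log(C_\ell/\varepsilon_{\textup{approx}}))$ ReLUs with coefficients summing to $O(1)$, via the standard second-difference identity $\sigma(u+h)-2\sigma(u)+\sigma(u-h)$ realizing localized hat functions together with piecewise-linear interpolation at logarithmically refined knots. Replacing each cosine ridge in Step~2 by its ReLU surrogate, rescaling $\omega$ so that $\|\va_m\|_1\le 1$ (absorbing the frequency magnitude into the coefficient), and undoing the $z\mapsto z/R$ change of variable produces the claimed bounds $M \lesssim C(k)\, C_\ell^2 \log(1+C_\ell/\varepsilon_{\textup{approx}})/\varepsilon_{\textup{approx}}^2$ and $C \lesssim C(k)\, C_\ell$. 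The main obstacle is the bookkeeping in Step~1: transferring the pointwise multi-index derivative bounds of Definition~\ref{def:sufficient_smooth} into a clean $H^s$ bound on the extension while keeping the dependence on $R$ absorbed, and verifying that $s = \lceil(k-1)/2\rceil + 2$ is exactly the threshold that makes the spectral integral converge.
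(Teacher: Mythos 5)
This proposition is cited from \citet{bai2023transformers} and not re-proved in the present paper, so I am comparing your sketch against the approach I believe that reference actually uses.

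Your Steps~1--2 are on the right track: the Barron/Maurey approach via a spectral norm is the standard route, and the check that $s=\lceil(k-1)/2\rceil+2$ is exactly the threshold making $\int |\omega|_1^2(1+|\omega|^2)^{-s}\,d\omega<\infty$ is correct. However, there is a genuine gap in how you account for the $\log(1+C_\ell/\varepsilon_{\text{approx}})$ factor, and I think you have placed it in the wrong step. In Step~3 you claim that a cosine ridge $\cos(\langle\omega,z\rangle+b)$ restricted to $\Ball_\infty^k(1)$ can be approximated to accuracy $\varepsilon'$ by a sum of $O(\log(C_\ell/\varepsilon'))$ ReLUs with $O(1)$ coefficient mass. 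With a \emph{single} hidden layer (i.e.\ a plain sum of ReLUs as in Definition~\ref{def:sum_of_ReLUs}), this is not achievable: piecewise-linear interpolation of a twice-differentiable ridge with effective frequency $|\omega|_1$ over an interval of length $O(1)$ (or $O(R)$ after undoing the rescaling) needs on the order of $|\omega|_1/\sqrt{\varepsilon'}$ knots to reach accuracy $\varepsilon'$, and the second-difference trick does not improve this to logarithmic; the $O(\log(1/\varepsilon))$-size constructions you may have in mind (Yarotsky/Telgarsky bit-extraction) require \emph{depth}, which the sum-of-ReLUs dictionary does not provide. So Step~3 as written does not go through.

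The $\log$ factor should instead appear already in your Step~2. The empirical-mean Maurey--Jones--Barron bound gives $O(C_\ell/\sqrt{M_1})$ in $L^2$, not uniformly; passing to the sup norm over $\Ball_\infty^k(R)$ via a covering/chaining (or a Makovoz-type refinement) incurs an extra $\sqrt{\log}$ in the error, which is precisely why $M$ gains the $\log(1+C_\ell/\varepsilon_{\text{approx}})$ factor once you solve for $M_1$. A cleaner route that avoids the cosine-to-ReLU conversion entirely is to represent $g$ (minus an affine part) directly as an integral of \emph{ReLU} ridges, using the one-dimensional identity $f(u)=f(u_0)+f'(u_0)(u-u_0)+\int \sigma(\pm(u-t))f''(t)\,dt$ applied along each Fourier direction; the Barron spectral norm you computed in Step~1 controls the total variation of the resulting signed mixing measure (the extra power of $|\omega|$ from the second derivative is absorbed because the spectral integral carries $|\omega|_1$ and the ridges are already normalized to $\|\va_m\|_1\le 1$). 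Then each sampled atom \emph{is} a ReLU, Step~3 is unnecessary, and the sup-norm Maurey argument delivers both the $C_\ell^2/\varepsilon_{\text{approx}}^2$ factor and the $\log$ at once, with $\sum_m|c_m|\le C(k)C_\ell$ coming from the total variation of the mixing measure.
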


Then, we can consider $g(x,y) = (c_1 + x)/(c_2 + y)$ with $c_1 \geq 1$, $c_2\geq 1+c'_2>1$, $x\in [-1,1]$ and $y \in [-1,1]$. It can be verified that
\begin{align*}
    &|g(x,y)| = \frac{c_1+ x}{c_2 +y} \leq \frac{1+c_1}{c'_2}; ~~ |\partial_{x} g(x,y)| = \frac{1}{c+y} \leq \frac{1}{c'_2}; ~~ |\partial_{y} g(x,y)| = \frac{c_1+x}{(c_2+y)^2} \leq \frac{1+c_1}{(c_2')^2}; \\
    &|\partial_x^2 g(x,y)| = 0; ~~|\partial_y^2 g(x,y)| = \frac{2(c_1 +x)}{(c_2+y)^3} \leq \frac{2(1+c_1)}{(c_2')^3}; ~~|\partial_{x}\partial_{y} g(x,y)| = \frac{1}{(c+y)^2} \leq \frac{1}{(c_2')^2};\\
    &|\partial_x^3 g(x,y)| = 0; ~~|\partial_y^3 g(x,y)| = \frac{6(c_1 + x)}{(c_2+y)^4} \leq \frac{6(1+c_1)}{(c_2')^4};\\
    &|\partial_x^2\partial_y g(x,y)| =0; ~~|\partial_x\partial^2_y g(x,y)| = \frac{2}{(c+y)^3} \leq \frac{2}{(c'_2)^3}.
\end{align*}
Then, from Definition~\ref{def:sufficient_smooth} there exists one $C_l$ such that $g(x,y)$ is $(1, C_l)$-smooth. Thus, by Proposition~\ref{prop:approx_smooth_func}, this function can be approximated by a sum of ReLUs (defined in Definition~\ref{def:sum_of_ReLUs}). With this observation, the division operation required in Assumption~\ref{aspt:division} can be approximated.

\subsection{Proof of Theorem~\ref{thm:approx_V_learning}: The Realization Construction}\label{subapp:approx_V_learning}
\subsubsection{Embeddings and Extraction Mappings}
We consider each episode of the max-player's observations to be embedded in $2H$ tokens. In particular, for each $t\in [T]$, we construct that
\begin{align*}
    \vh_{2t-1} = \tth_+(s^{g,h}) = \left[\begin{array}{cc} \vzero_{A} \\ 0 \\ \hdashline  s^{g,h} \\ \hdashline \vzero_{A} \\ \hdashline \vzero \\ \pos_{2t-1} \end{array}\right] =: \left[\begin{array}{cc} \vh^{\pre,a}_{2t-1} \\ \hdashline \vh^{\pre,b}_{2t-1} \\ \hdashline \vh^{\pre,c}_{2t-1} \\ \hdashline \vh^{\pre,d}_{2t-1} \end{array}\right],\\
    \vh_{2t} = \tth_+(a^{g,h}, r^{g,h}) = \left[\begin{array}{cc} a^{g,h} \\ r^{g,h} \\ \hdashline  \vzero_{S} \\ \hdashline \vzero_{A} \\ \hdashline \vzero \\ \pos_{2t} \end{array}\right] =: \left[\begin{array}{cc} \vh^{\pre,a}_{2t} \\ \hdashline \vh^{\pre,b}_{2t} \\ \hdashline \vh^{\pre,c}_{2t} \\ \hdashline \vh^{\pre,d}_{2t} \end{array}\right],   
\end{align*}
where $s^{g,h}, a^{g,h}$ are represented via one-hot embedding. The positional embedding $\pos_i$ is defined as
\begin{align*}
    \pos_i: = \left[\begin{array}{c} g \\ h \\ t \\ \ve_h \\ v_i \\ i \\ i^2 \\ 1 
    \end{array}\right],
\end{align*}
where $v_i : = \oneb\{\vh^a_i = \vzero\}$ denote the tokens that do not embed actions and rewards. 

In summary, for observations $D^{t-1}_+\cup\{s^t\}$, we obtain the tokens of length $2t-1$ which can be expressed as the following form:
\begin{align*}
    \mH:=h_+(D^{t-1}_+,s^t) = [\vh_1, \vh_2, \cdots, \vh_{2t-1}]= [\tth_+(s^{1}), \tth_+(a^{1},r^{1}), \cdots, \tth_+(s^t)].
\end{align*}

With the above input $\mH$, the transformer outputs $\overline{\mH} = \TF_{\vtheta_+}(\mH)$ of the same size as $\mH$. The extraction mapping $\ttA$ is directly set to satisfy the following
\begin{align*}
    \ttA \cdot \overline{\vh}_{-1} = \ttA \cdot \overline{\vh}_{2t-1} = \overline{\vh}_{2t-1}^c \in \sR^{A},
\end{align*}
i.e., the part $c$ of the output tokens is used to store the learned policy.

\subsubsection{An Overview of the Proof}
In the following, for the convenience of notations, we will consider step $t+1$, i.e., with observations $D^{t}_+\cup\{s^{t+1}\}$. Given an input token matrix
\begin{align*}
    \mH=h_+(D^{t}_+,s^{t+1}) = [\vh_1, \vh_2, \cdots, \vh_{2t+1}],
\end{align*}
we construct a transformer to perform the following steps
\begin{align*}
    \left[\begin{array}{cc} \vh^{\pre,a}_{2t+1} \\ \vh^{\pre,b}_{2t+1} \\ \vh^{\pre,c}_{2t+1} \\  \vh^{\pre,d}_{2t+1} \end{array}\right] &\xrightarrow{\text{step 1}} \left[\begin{array}{cc} \vh^{\pre,\{a,b,c\}}_{2t+1} \\ N^{t}(s^{t}) \\ \star \\ \vzero \\ \pos_{2t+1}\end{array}\right] 
    \xrightarrow{\text{step 2}} \left[\begin{array}{cc} \vh^{\pre,\{a,b,c\}}_{2t+1} \\ \tilde{V}^{t}(s^{t}) \\ V^{t}(s^{t}) \\ \star \\ \vzero \\ \pos_{2t+1}\end{array}\right] 
    \xrightarrow{\text{step 3}} \left[\begin{array}{cc} \vh^{\pre,\{a,b,c\}}_{2t+1} \\ \tilde{\ell}_n(s^t,\cdot)\\ \mu^{t}(\cdot|s^{t})  \\ \star \\ \vzero \\ \pos_{2t+1}\end{array}\right] \\
    &\xrightarrow{\text{step 4}} \left[\begin{array}{cc} \vh^{\pre,\{a,b\}}_{2t+1} \\ \mu^{t+1}(\cdot|s^{t+1}) \\ \vh^{d}_{2t+1}\end{array}\right] :=  \left[\begin{array}{cc} \vh^{\post,a}_{2t+1} \\ \vh^{\post,b}_{2t+1} \\ \vh^{\post,c}_{2t+1} \\  \vh^{\post,d}_{2t+1} \end{array}\right],
\end{align*}
where $n: = N^{h}(s^{g,h})$.

To ease the notations, in the proof, we will slightly abuse $\TF_\vtheta$ as $\TF_{\vtheta_+}$. The following provides a sketch of the proof.

\begin{itemize}
    \item[Step 1] There exists an attention-only transformer $\TF_{\vtheta}$ to complete Step 1 with
    \begin{align*}
        L = O(1), ~~\max_{l\in [L]} M^{(l)} = O(1), ~~ \|\vtheta\| = O(HG)
    \end{align*}
    \item[Step 2] There exists a transformer $\TF_{\vtheta}$ to complete Step 2 with
    \begin{align*}
        L = O(HG), ~~ \max_{l\in [L]} M^{(l)} = O(HS^2), ~~ d' = O(G), ~~ \|\vtheta\| = O(G^3 + GH),
    \end{align*}
    \item[Step 3] There exists a transformer $\TF_{\vtheta}$ to complete Step 3 with
    \begin{align*}
        L = O(GHL_D), ~~ &\max_{l\in [L]} M^{(l)} = O(HSA + M_D), ~~ d' = O(d_D + A + G), \\
        &\|\vtheta\| = O(GH^2S + F_D + G^3),
    \end{align*}
    \item[Step 4] There exists a transformer $\TF_{\vtheta}$ to complete Step 4 with
    \begin{align*}
        L = O(1), ~~\max_{l\in [L]} M^{(l)} = O(HS), ~~ \|\vtheta\| = O(HS + GH).
    \end{align*}
\end{itemize}
Thus, the overall transformer $\TF_{\vtheta}$ can be summarized as
\begin{align*}
    L = O(GHL_D),~~ &\max_{l\in [L]} M^{(l)} = O(HS^2 + HSA + M_D), ~~ d' = O(G+ A + d_D), \\
    &\|\vtheta\| = O(GH^2S + G^3 + F_D).
\end{align*}
Also, from the later construction, we can observe that $\log(R) = \tilde{\gO}(1)$. The bound on the covering number, i.e., $\log(N_{\Theta_+})$ can be obtained via Lemma~\ref{lem:covering_number}.

\subsubsection{Proof of Step 1: Update $N^h(s^{g,h})$.} 
From the proof of Step 1 in realizing UCB-VI in \citet{lin2023transformers}, we know that there exists a transformer with $3$ heads that for all $t'\leq t$, can move $s^{t'}, (a^{t'}, r^{t'})$ from $\vh^{a}_{2t'-1}$ and $\vh^{b}_{2t'}$ to $\vh^{d}_{2t'+1}$ while maintaining $\vh^{d}_{2t'}$ not updated. This constructed transformer is shown in \citet{lin2023transformers} to be an attention-only one with ReLU activation and
\begin{align*}
    L = 2, ~~ \max_{l\in [L]} M^{(l)} \leq 3, ~~ \|\vtheta\| \leq O(HG).
\end{align*}

Then, still following \citet{lin2023transformers}, another attention-only transformer can be constructed so that $N^{t}(s^{t})$ can be computed in $h_{2t+1}^d$. The constructed transformer has ReLU activation and
\begin{align*}
    L = 2, ~~ \max_{l\in [L]} M^{(l)} = O(1), ~~ \|\vtheta\| = O(H).
\end{align*}

In summary, at the end of Step 1, we can obtain that for all $t' \leq t$,
\begin{align*}
    \vh^d_{2t'} = \left[\begin{array}{cc} \vzero \\ \pos_{2t'} \end{array}\right], ~~ \vh^d_{2t'+1} = \left[\begin{array}{cc} s^{t'} \\ a^{t'} \\r^{t'} \\ N^{t'}(s^{t'}) \\ \vzero \\ \pos_{2t'+1} \end{array}\right], 
\end{align*}
with an attention-only transformer that uses ReLU activation and
\begin{align*}
    L = 4, ~~ \max_{l\in [L]} M^{(l)} = O(1), ~~ \|\vtheta\| = O(HG).
\end{align*}

\subsubsection{Proof of Step 2: Compute $\tilde{V}^{h}(s^{g,h})$ and $V^{h}(s^{g,h})$} 
In Step 2, let $n^t = N^{t}(s^{t})$, the following computations will be performed:
\begin{align*}
    \tilde{V}^{t}_{\text{new}}(s^{t}) &\gets (1-\alpha_{n^t})\tilde{V}^{t}_{\text{old}}(s^{t}) + \alpha_{n^t}\left(r^{t} + V^{t+1}_{\text{old}}(s^{t+1}) +\beta_{n^t}\right),\\
    V^{t}_{\text{new}}(s^{t}) &\gets \min\left\{H+1-h, \tilde{V}^t_{\text{new}}(s^{t})\right\}.
\end{align*}

First, similar to Step 2 in realizing UCB-VI in \citet{lin2023transformers}, we can obtain $\alpha_{n^{t'}}$ and $\beta_{n^{t'}}$ in each $\vh^{d}_{2t'+1}$ via a transformer with ReLU activation and
\begin{align*}
    L = O(1), ~~ \max_{l\in [L]} M^{(l)} = O(1), ~~ d' = O(G), ~~ \|\vtheta\| = O(G^3).
\end{align*}

Then, we can assume that the values of $\{\tilde{V}^{h}_{\text{old}}(s), V^{h}_{\text{old}}(s): h\in [H], s\in \gS\}$ is already computed in token $\vh^d_{2\tau - 1}$, and prove via induction to show that the set of new values can be computed in token $\vh^d_{2\tau + 1}$, i.e.,
\begin{align*}
    &\vh^d_{2\tau - 1} = \left[\begin{array}{cc} \star \\ \tilde{V}^{1:H}_{\text{old}}(\cdot) \\ V^{1:H}_{\text{old}}(\cdot) \\ \vzero \\ \pos_{2\tau-1} \end{array}\right], ~\vh^d_{2\tau} = \left[\begin{array}{cc} \vzero \\ \pos_{2\tau} \end{array}\right], ~ \vh^d_{2\tau + 1} = \left[\begin{array}{cc} \vzero \\ \pos_{2\tau+1} \end{array}\right]
    \\
    &\xrightarrow{\text{compute}} \vh^d_{2\tau } = \left[\begin{array}{cc} \vzero \\ \pos_{2\tau} \end{array}\right], 
    ~ \vh^d_{2\tau + 1} = \left[\begin{array}{cc} \star \\ \tilde{V}^{1:H}_{\text{new}}(\cdot) \\ V^{1:H}_{\text{new}}(\cdot) \\ \vzero \\ \pos_{2\tau+1} \end{array}\right].
\end{align*}
In the following, we will show that this one-step update can be completed via a transformer that requires
\begin{align*}
    L = O(1), ~~ \max_{l\in [L]} M^{(l)} = O(HS^2), ~~ d' = O(G), ~~ \|\vtheta\| = O(G^3 + GH),
\end{align*}
and the overall updates can be completed via stacking $T$ similar transformers.

\textbf{Step 2.1: Obtain $\tau - |t'-1 - \tau|$.} 

First, we obtain an auxiliary value $\tau - |t' - 1 - \tau|$ in each $\vh^d_{2t'-1}$ and $\vh^d_{2t'}$ for all $t'\leq t$. In particular, we can construct three MLP layers with ReLU activation, i.e., $\{\mW^{(1)}_1, \mW^{(1)}_2, \mW^{(2)}_1, \mW^{(2)}_2, \mW^{(3)}_1, \mW^{(3)}_2\}$ to sequentially compute that
\begin{align*}
    \left[\begin{array}{cc} 0 \end{array}\right] &\to \sigma_r\left(\left[\begin{array}{cc} t'-1 - \tau \end{array}\right]\right);\\
    \sigma_r\left(\left[\begin{array}{cc} t'-1 - \tau \end{array}\right]\right) &\to \sigma_r\left(\left[\begin{array}{cc} t'-1 - \tau \end{array}\right]\right) + \sigma_r\left(\left[\begin{array}{cc} \tau - t'+1 \end{array}\right]\right)  = \left[\begin{array}{cc} |t'-1-\tau| \end{array}\right]\\
    \left[\begin{array}{cc} |t'-1-\tau| \end{array}\right] & \to \left[\begin{array}{cc} \tau - |t'-1-\tau| \end{array}\right].
\end{align*}
It can be observed that $\tau - |t'-1-\tau|$ reaches its maximum $\tau$ at $t' = \tau+1$. 

The required transformer can be summarized as
\begin{align*}
    d' = 1, ~~ \|\vtheta\|  = O(GH).
\end{align*}

\textbf{Step 2.2: Move $\tilde{V}^{1:H}_{\text{old}}(\cdot)$ and $V^{1:H}_{\text{old}}(\cdot)$.} 

Then, we move $\tilde{V}^{1:H}_{\text{old}}(\cdot)$ and $V^{1:H}_{\text{old}}(\cdot)$ from $\vh_{2\tau-1}$ to $\vh_{2\tau+1}$. In particular, we can construct that
\begin{align*}
    \mQ_1 \vh_{2t'} = \left[\begin{array}{cc} v_{2t'} -1 \\ \tau - |t' - 1 - \tau| \\ -1 \\ t' \\ 1 \end{array}\right],
    ~~ &\mQ_1 \vh_{2t'+1} = \left[\begin{array}{cc} v_{2t'+1} -1 \\ \tau - |t' - \tau| \\ -1 \\ t' + 1 \\ 1\end{array}\right];\\
    \mK_1 \vh_{2t'} = \left[\begin{array}{cc} 1 \\ 1 \\ \tau \\ -1 \\ t'+1 \end{array}\right],
    ~~ &\mK_1 \vh_{2t'+1} = \left[\begin{array}{cc} 1 \\ 1 \\ \tau \\ -1 \\ t'+2 \end{array}\right];\\
    \mV_1 \vh_{2\tau-1} = \left(2\tau + 1\right)\left[\begin{array}{cc}\tilde{V}^{1:h}_{\text{old}}(\cdot) \\ V^{1:H}_{\text{old}}(\cdot)  \end{array}\right], 
    ~~ &\mV_1 \vh_{2\tau} = \vzero,
    ~~\mV_1 \vh_{2\tau+1} = \vzero.
\end{align*}

With ReLU activation, this construction leads to that
\begin{align*}
    \vh_{2t'} &= \vh_{2t'} + \frac{1}{2t'}\sum_{i\leq 2t'} \sigma_r\left(-1  - |t' - 1 - \tau| - t' + t(i) + 1\right)\cdot \mV_1 \vh_{i} = \vzero, ~~\forall t' \leq t\\
    \vh_{2t'+1} &= \vh_{2t'+1} + \frac{1}{2t'+1}\sum_{i\leq 2t'+1} \sigma_r\left(- |t' - \tau| - t' + t(i) + 1\right)\cdot \mV_1 \vh_{i} \\&= \begin{cases}
        \frac{1}{2t'+1} \left(\mV_1 \vh_{2t'-1} + \mV_1 \vh_{2t'} + 2\mV_1 \vh_{2t'+1}\right) = \left[\begin{array}{cc} \tilde{V}^{1:h}_{\text{old}}(\cdot) \\ V^{1:H}_{\text{old}}(\cdot) \end{array}\right] & \text{if $t' = \tau$}\\
       \vzero & \text{otherwise}
    \end{cases}.
\end{align*}
The transformer required in Step 2.3 can be summarized as
\begin{align*}
    L = 1, ~~ M^{(1)} = O(1), ~~ \|\vtheta\| = O(GH).
\end{align*}

\textbf{Step 2.3: Compute $\tilde{V}^{1:H}_{\text{new}}(\cdot)$ and $V^{1:H}_{\text{new}}(\cdot)$.} 

Finally, we get to compute $\tilde{V}^{1:H}_{\text{new}}(\cdot)$ and $V^{1:H}_{\text{new}}(\cdot)$, where
\begin{align*}
    \tilde{V}^{\tau}_{\text{new}}(s^\tau) = \tilde{V}^{\tau}_{\text{old}}(s^\tau) - \alpha_{n^\tau} \tilde{V}^{\tau}_{\text{old}}(s^\tau) + \alpha_{n^\tau} \left(r^\tau + V^{\tau+1}_{\text{old}}(s^{\tau+1})+ \beta_{n^\tau}\right).
\end{align*}
We can have a $HS$-head transformer that
\begin{align*}
    \mQ_{h,s}\vh_{2t'} = \left[\begin{array}{cc}v_{2t'} - 1\\ \tau - |t' - 1 -\tau| \\ -1 \\ t' \\ 1 \\ \vzero \\ \ve_{h'} \\ 1 \\ 0  \end{array}\right], 
    ~~&\mQ_{h,s}\vh_{2t'+1} = \left[\begin{array}{cc}v_{2t'+1} - 1\\ \tau - |t' -\tau| \\ -1 \\ t' + 1 \\ 1 \\ s^{t'} \\ \ve_{h'+1} \\ 1\\ \alpha_{n^{t'}}  \end{array}\right], \\
    \mK_{h,s}\vh_{2t'} = \left[\begin{array}{cc} 1\\ 1 \\ \tau \\ -1 \\ t'\\ \ve_{s}\\ \ve_{h+1} \\ -2 \\ 1  \end{array}\right],
    ~~&\mK_{h,s}\vh_{2t'+1} = \left[\begin{array}{cc} 1\\ 1 \\ \tau \\ -1 \\ t'+1\\ \ve_{s}\\ \ve_{h+1} \\ -2 \\ 1  \end{array}\right], 
    \\
    &\mV_{h,s}\vh_{2\tau+1} = -(2\tau+1)\cdot V^{h}_{\text{old}}(s)\cdot \left[\begin{array}{cc}  \ve_{h,s} \end{array}\right].
\end{align*}
It holds that for all $t'\leq t$, 
\begin{align*}
    \vh_{2t'} &= \vh_{2t'}  + \frac{1}{2t'}\sum_{h,s}\sum_{i\leq 2t'} \sigma_r\left(-1 - |t' -1 -\tau| - t' + t(i) + \ve_{h'} \cdot \ve_{h+1} -2 \right) \mV_{h,s}\vh_{i} =  \vzero \\
    \vh_{2t'+1} &=  \vh_{2t'+1} + \frac{1}{2t'+1}\sum_{h,s}\sum_{i\leq 2t'+1} \sigma_r\left(-|t'-\tau| - t' - 1 + t(i) + s^{t'} \cdot \ve_{s} + \ve_{h'+1} \cdot \ve_{h} + \alpha_{n^{t'}} - 2\right)\mV_{h,s}\vh_{i}\\
    & = \begin{cases}
        V^{1:H}_{\text{old}}(\cdot) - \alpha_{n^{t'}} V^{h'}_{\text{old}}(s^{t'}) \cdot   \ve_{h',s^{t'}}  & \text{if $t' = \tau$}\\
        \vzero & \text{otherwise}
    \end{cases},
\end{align*}
which completes the computation $(1-\alpha_{n^\tau}) \tilde{V}^{\tau}_{\text{old}}(s^\tau)$.

Two similar $HS$-head transformers can further perform $(1-\alpha_{n^\tau}) \tilde{V}^{\tau}_{\text{old}}(s^\tau) + \alpha_{n^\tau} r^\tau + \alpha_{n^\tau}\beta_{n^\tau}$.
and a similar $HS^2$-head transformer can finalize $\tilde{V}^{\tau}_{\text{new}}(s^\tau)$ as $(1-\alpha_{n^\tau}) \tilde{V}^{\tau}_{\text{old}}(s^\tau) + \alpha_{n^\tau} r^\tau + \alpha_{n^\tau} \beta_{n^\tau} + \alpha_{n^\tau} V^{\tau+1}_{\text{old}}(s^{\tau+1})$.

Finally, from the proof of Step 3 in realizing UCB-VI in \citet{lin2023transformers}, one MLP layer can perform
\begin{align*}
    V^{\tau}_{\text{old}}(s^\tau) = \min\left\{\tilde{V}^{\tau}_{\text{new}}(s^\tau), H-h+1\right\}.
\end{align*}

The required transformer in step 2.4 can be summarized as
\begin{align*}
    L = 4, ~~ \max_{l\in [L]} M^{(l)} = O(HS^2), ~~ d' = O(1), ~~ \|\vtheta\| = O(GH).
\end{align*}

Thus, we can see that the update of $\vh_{2\tau+1}$ can be done. Repeating the similar step for each $\tau \in [T]$ would complete the overall updates.

\subsubsection{Proof of Step 3: Compute $\tilde{\ell}_{n^t}(s^{g,h}, a)$ and $\mu^{h}(a|s^{g,h})$}
In step 3, let $n^t = N^t(s^t)$, for step $t$, we update
\begin{align*}
    &\tilde{\ell}^t_{n^t}(s^t, a) \gets \frac{H-r^{t}-V^{t+1}(s^{t+1})}{H} \cdot \frac{\vone\{a = a^{t}\}}{\mu_{\text{old}}^{t}(a^{t}|s^{t}) + \gamma_{n^{t}}};\\
    &\mu_{\text{new}}^{t}(a|s^{t}) \propto \exp\left(-\frac{\eta_{n^{t}}}{\omega_{n^{t}}} \cdot \sum_{i \in [n^{t}]} \omega_i \cdot \tilde{\ell}^t_i(s^{t}, a)\right).
\end{align*}

First, similar to Step 2, we can obtain $\omega_{n^{t'}}$, $\eta_{n^{t'}}/\omega_{n^{t'}}$ and $\gamma_{n^{t'}}$ in each $\vh^{d}_{2t'+1}$ via a transformer with ReLU activation and
\begin{align*}
    L = O(1), ~~ \max_{l\in [L]} M^{(l)} = O(1), ~~ d' = O(G), ~~ \|\vtheta\| = O(G^3).
\end{align*}

Denoting vectors $\gL \in \sR^{HSA}$ and $\vmu\in \sR^{HSA}$ containing the cumulative losses and policies, i.e., $\sum_{i\in [n]} \omega_i \tilde{\ell}^h_i(s,a)$ and $\mu^h(a|s)$. Similar to step 2, we will assume that the values of $\left\{\gL_{\text{old}}, \vmu_{\text{old}} \right\}$, which are computed via information before time $\tau$, are already contained in token $\vh^d_{2\tau - 1}$, and prove via induction to show that the set of new values can be computed in token $\vh^d_{2\tau + 1}$, i.e.,
\begin{align*}
    &\vh^d_{2\tau - 1} = \left[\begin{array}{cc} \star\\ \gL_{\text{old}} \\ \vmu_{\text{old}} \\\vzero \\ \pos_{2\tau-1} \end{array}\right], ~\vh^d_{2\tau} = \left[\begin{array}{cc} \vzero \\ \pos_{2\tau} \end{array}\right], ~ \vh^d_{2\tau + 1} = \left[\begin{array}{cc} \vzero \\ \pos_{2\tau+1} \end{array}\right]\\
    &\xrightarrow{\text{compute}} \vh^d_{2\tau } = \left[\begin{array}{cc} \vzero \\ \pos_{2\tau} \end{array}\right], 
    ~ \vh^d_{2\tau + 1} = \left[\begin{array}{cc} \star \\ \gL_{\text{new}} \\ \vmu_{\text{new}} \\ \vzero \\ \pos_{2\tau+1} \end{array}\right].
\end{align*}
In the following, we will show that this one-step update can be completed via a transformer that requires
\begin{align*}
    L = O(L_D), ~~ \max_{l\in [L]} M^{(l)} = O(HSA + M_D), ~~ d' = O(d_D + A), ~~ \|\vtheta\| = O(GH^2S + F_D),
\end{align*}
and the overall updates can be completed via stacking $T$ similar transformers.

\textbf{Step 3.1: Obtain $\tau - |t' - 1 - \tau|$.} 

First, similar to step 2.1, we can have an auxiliary value $\tau - |t' - 1 - \tau|$ in each $\vh^{d}_{2t'-1}$ and $\vh^{d}_{2t'}$ for all $t' < t$ via three MLP layers. The required transformer can be summarized as
\begin{align*}
    d' = 1, ~~ \|\vtheta\|  = O(GH).
\end{align*}

\textbf{Step 3.2: Compute $\tilde{\ell}^\tau_{n^\tau}(s^\tau, a)$.}

First, similar to Step 2.2, we can move $\gL_{\text{old}}$ and $\vmu_{\text{old}}$ from $\vh^d_{2\tau-1}$ to $\vh^d_{2\tau+1}$ while keeping other tokens unchanged. In particular, we can construct that
\begin{align*}
    \mQ_1 \vh_{2t'} = \left[\begin{array}{cc} v_{2t'} -1 \\ \tau - |t' - 1 - \tau| \\ -1 \\ t' \\ 1 \end{array}\right],
    ~~ &\mQ_1 \vh_{2t'+1} = \left[\begin{array}{cc} v_{2t'+1} -1 \\ \tau - |t' - \tau| \\ -1 \\ t' + 1 \\ 1\end{array}\right];\\
    \mK_1 \vh_{2t'} = \left[\begin{array}{cc} 1 \\ 1 \\ \tau \\ -1 \\ t'+1 \end{array}\right],
    ~~ &\mK_1 \vh_{2t'+1} = \left[\begin{array}{cc} 1 \\ 1 \\ \tau \\ -1 \\ t'+2 \end{array}\right];\\
    \mV_1 \vh_{2\tau-1} = \left(2\tau + 1\right)\left[\begin{array}{cc} \gL_{\text{old}} \\ \vmu_{\text{old}}  \end{array}\right], 
    ~~ &\mV_1 \vh_{2\tau} = \vzero,
    ~~\mV_1 \vh_{2\tau+1} = \vzero.
\end{align*}

With ReLU activation, this construction leads to that
\begin{align*}
    \vh_{2t'} = \vzero, ~~\forall t' \leq t; ~~\vh_{2t'+1} = \begin{cases}
        \left[\begin{array}{cc} \gL_{\text{old}} \\ \vmu_{\text{old}} \end{array}\right] & \text{if $t' = \tau$}\\
       \vzero & \text{otherwise}
    \end{cases}.
\end{align*}

Then, using similar constructions as Step 2.3, we can specifically extract $V^{\tau+1}(s^{\tau+1})$ and $\mu^{\tau}_{\text{old}}(a^\tau|s^\tau)$ to $\vh^{d}_{2\tau+1}$ while keeping other tokens unchanged. Following the extraction, one MLP layer can compute the value $\frac{H - r^\tau - V^{\tau+1}(s^{\tau+1})}{H}$ and add it to $\vh^{d}_{2\tau+1}$. The required transformer can be summarized as
\begin{align*}
    L = O(1), ~~ \max_{l\in [L]} M^{(l)} = O(HSA), ~~d' = 1, ~~ \|\vtheta\| = O(GH).
\end{align*}

With Assumption~\ref{aspt:division}, for token $\vh_{2\tau+1}$, we can further have one transformer to compute
\begin{align*}
    \tilde{\ell}^\tau_{n^\tau}(s^\tau, a^\tau) = \frac{H-r^\tau-V^{\tau+1}(s^{\tau+1})}{H} \cdot \frac{1}{\mu^\tau_{\text{old}}(a^\tau|s^\tau) + \gamma_{n^\tau}}.
\end{align*}

The overall required transformer can be summarized as
\begin{align*}
    L = O(L_D), ~~ \max_{l\in [L]} M^{(l)} = O(HSA + M_D), ~~ d' = O(d_D),~~ \|\vtheta\| = O(GH + F_D).
\end{align*}

\textbf{Step 3.3: Update $\mu^\tau(a|s^\tau)$.}

First, one transformer can be constructed to update $\gL_{\text{old}}$ to $\gL_{\text{new}}$ in $\vh^{d}_{2\tau+1}$. In particular, we can have
\begin{align*}
    \mQ_{h,s}\vh_{2t'} = \left[\begin{array}{cc} v_{2t'} - 1\\ \tau - |t' - 1 -\tau| \\ 1 \\ t' \\  1 \\ \vzero  \\ \ve_{h'} \\ 1 \end{array}\right], 
    ~~&\mQ_{h,s}\vh_{2t'+1} = \left[\begin{array}{cc}v_{2t'+1} - 1\\ \tau - |t' -\tau| \\ 1 \\ t'+1 \\ 1 \\ s^{t'} \\ \ve_{h'+1} \\ 1  \end{array}\right], \\
    \mK_{h, s}\vh_{2t'} = \left[\begin{array}{cc} 1\\ 1 \\ -\tau \\ -1 \\ t' \\ \ve_{s} \\ \ve_{h+1} \\ -1\end{array}\right],
    ~~&\mK_{h,s}\vh_{2t'+1} = \left[\begin{array}{cc} 1\\ 1 \\ -\tau \\ -1 \\ t'+1 \\ \ve_{s} \\ \ve_{h+1} \\ -1  \end{array}\right], 
    \\
    &\mV_{h,s}\vh_{2\tau+1} = -(2\tau+1)\cdot \gL_{\text{old}}(h, s, \cdot).
\end{align*}

With ReLU activation, this construction leads to that 
\begin{align*}
    \vh_{2t'} = \vzero, ~~\forall t' \leq t; ~~\vh_{2t'+1} = \begin{cases}
        \gL_{\text{old}}(h(\tau), s^\tau, \cdot) & \text{if $t' = \tau$}\\
       \vzero & \text{otherwise}
    \end{cases}.
\end{align*}
With a similar $A$-head transformer, we can add $\omega_{n^\tau} \cdot \tilde{\ell}^\tau_{n^\tau}(s^\tau, a^\tau)$ to $\gL_{\text{old}}(h(\tau), s^\tau, a^\tau)$ and thus obtain the new values $\gL_{\text{new}}(h(\tau), s^\tau, \cdot)$. Furthermore, a one-head transformer can obtain
\begin{align*}
    \vh_{2\tau+1} = \frac{\eta_{n^\tau}}{\omega_{n^\tau}} \cdot \gL_{\text{new}}(h(\tau), s^\tau, \cdot)
\end{align*}
Finally, with another softmax MLP layer, we can obtain $\mu^\tau_{\text{new}}(\cdot|s^\tau)$. 

The required transformer can be summarized as
\begin{align*}
    L = O(1), ~~ \max_{l\in [L]} M^{(l)} = O(HS + A), ~~ d' = O(A), ~~ \|\vtheta\| = O(GH^2S).
\end{align*}

\subsubsection{Proof of Step 4: Obtain $\mu^{h+1}(\cdot|s^{h+1})$}
This step is essentially the same as Step 6 in realizing VI-ULCB, which can be completed with a transformer that
\begin{align*}
    L = 1,~~ M^{(1)} = HS, ~~\|\vtheta\| = O(HS).
\end{align*}


\section{Proofs for the Decentralized Overall Performance}\label{app:decentralized_overall}
\begin{proof}[Proof of Theorem~\ref{thm:decentralized_overall}]
    We use the decomposition that
    \begin{align*}
        \E_{M\sim \Lambda, D\sim \sP^{\Alg_{\widehat{\vtheta}}}_M}\left[V_M^{\dagger, \hat{\nu}}(s^1) - V_M^{\hat{\mu}, \dagger}(s^1)\right] &= \E_{M\sim \Lambda, D\sim \sP^{\Alg_{0}}_M}\left[V_M^{\dagger, \hat{\nu}}(s^1) - V_M^{\hat{\mu}, \dagger}(s^1)\right]\\
            &+ \E_{M\sim \Lambda, D\sim \sP^{\Alg_{\widehat{\vtheta}}}_M}\left[V_M^{\dagger, \hat{\nu}}(s^1)\right] - \E_{M\sim \Lambda, D\sim \sP^{\Alg_{0}}_M}\left[V_M^{\dagger, \hat{\nu}}(s^1)\right]\\
            & + \E_{M\sim \Lambda, D\sim \sP^{\Alg_{0}}_M}\left[V_M^{\hat{\mu},\dagger}(s^1)\right] - \E_{M\sim \Lambda, D\sim \sP^{\Alg_{\widehat{\vtheta}}}_M}\left[V_M^{\hat{\mu},\dagger}(s^1)\right].
    \end{align*}
    First, via Theorem~\ref{thm:performance_V_learning}, it holds that
    \begin{align*}
        \E_{M\sim \Lambda, D\sim \sP^{\Alg_{0}}_M}\left[V^{\dagger, \hat{\nu}}_M(s^1) - V^{\hat{\mu}, \dagger}_M(s^1)\right] = O\left(\sqrt{\frac{H^5S(A+B)\log(SABT)}{G}}\right).
    \end{align*}
    Then, via Lemma~\ref{lem:decentralized_generalization} and Theorem~\ref{thm:decentralized_pre_training}, we can obtain that
    \begin{align*}
        &\E_{M\sim \Lambda, D\sim \sP^{\Alg_{0}}_M}\left[V_M^{\hat{\mu},\dagger}(s^1)\right] - \E_{M\sim \Lambda, D\sim \sP^{\Alg_{\widehat{\vtheta}}}_M}\left[ V_M^{\hat{\mu},\dagger}(s^1)\right]\\
        & = O\left(H \cdot \sum_{t\in [T], s\in \gS}\E_{\alpha} \left[\TV\left(\mu^{t}_{\alpha}, \mu^t_{\beta}|D^{t-1}_+,s\right)+ \TV\left(\nu^t_{\alpha}, \nu^t_{\beta}|D^{t-1}_-,s\right)\right]\right)\\
        & = O\left(THS\sqrt{\varepsilon_{+,\real}} + THS\sqrt{\varepsilon_{-,\real}}+THS\sqrt{\frac{\log\left(\gN_{\Theta_+}TS/\delta\right)}{N}} + THS\sqrt{\frac{\log\left(\gN_{\Theta_-}TS/\delta\right)}{N}}\right),
    \end{align*}
    and similarly,
    \begin{align*}
        &\E_{M\sim \Lambda, D\sim \sP^{\Alg_{\widehat{\vtheta}}}_M}\left[V_M^{\dagger, \hat{\nu}}(s^1)\right] - \E_{M\sim \Lambda, D\sim \sP^{\Alg_{0}}_M}\left[ V_M^{\dagger, \hat{\nu}}(s^1)\right]\\
        &=O\left(THS\sqrt{\varepsilon_{+,\real}} + THS \sqrt{\varepsilon_{-,\real}} +THS\sqrt{\frac{\log\left(\gN_{\Theta_+}TS/\delta\right)}{N}} + THS\sqrt{\frac{\log\left(\gN_{\Theta_-}TS/\delta\right)}{N}}\right).
    \end{align*}
    With Theorem~\ref{thm:approx_V_learning} providing that $\varepsilon_{+,\real} = \varepsilon_{-,\real} = 0$, combining the above terms completes the proof of the regret bound. The bound on the covering number, i.e., $\log(N_{\Theta_+})$ and $\log(N_{\Theta_-})$ can be obtained via Lemma~\ref{lem:covering_number}.
\end{proof}

In the following, we provide the proof for the Lemma~\ref{lem:decentralized_generalization}, which plays a key role in the above proof of the overall performance.
\begin{proof}[Proof of Lemma~\ref{lem:decentralized_generalization}]
    It holds that
    \begin{align*}
        &\E_{D\sim \sP^{\Alg_{\alpha}}_M}\left[V_M^{\hat{\mu}_{\alpha},\dagger}(s^1)\right] - \E_{D\sim \sP^{\Alg_{\beta}}_M}\left[V_M^{\hat{\mu}_{\beta},\dagger}(s^1)\right]\\
        & = \underbrace{\E_{D\sim \sP^{\Alg_{\alpha}}_M}\left[V_M^{\hat{\mu}_{\alpha},\dagger}(s^1) - V_M^{\hat{\mu}_{\beta},\dagger}(s^1)\right]}_{:=\text{(term I)}}+ \underbrace{\E_{D\sim \sP^{\Alg_{\alpha}}_M}\left[V_M^{\hat{\mu}_{\beta}, \dagger}(s^1)\right] - \E_{D\sim \sP^{\Alg_{\beta}}_M}\left[V_M^{\hat{\mu}_{\beta},\dagger}(s^1)\right]}_{:=\text{(term II)}}.
    \end{align*}

    Denoting $f(D^{H}) = \sum_{h\in [H]} r^{h}$, 
    we can obtain that
    \begin{align*}
        &V_M^{\hat{\mu}_{\alpha},\dagger}(s^1) - V_M^{\hat{\mu}_{\beta},\dagger}(s^1)\\
        &\overset{(a)}{\leq} V_M^{\hat{\mu}_{\alpha},\nu_\dagger(\hat{\mu}_{\beta})}(s^1) - V_M^{\hat{\mu}_{\beta},\nu_\dagger(\hat{\mu}_{\beta})}(s^1)\\
        & \overset{(b)}{=} \frac{1}{G}\sum_{g\in [G]}\left[V_M^{\hat{\mu}_{\alpha}^g,\nu_\dagger(\hat{\mu}_{\beta})}(s^1) - V_M^{\hat{\mu}_{\beta}^g,\nu_\dagger(\hat{\mu}_{\beta})}(s^1)\right]\\
        & \leq \frac{1}{G}\sum_{g\in [G]}\sum_{h\in [H]}\left[\E_{D^{1:h}\sim \sP_M^{\hat{\mu}^g_{\alpha},\nu_\dagger(\hat{\mu}_\beta)}, D^{h+1:H}\sim \sP_M^{\hat{\mu}^g_\beta,\nu_\dagger(\hat{\mu}_\beta)}}\left[f(D^H)\right]\right] \\
        &- \frac{1}{G}\sum_{g\in [G]}\sum_{h\in [H]}\left[\E_{D^{1:h-1}\sim \sP_M^{\hat{\mu}^g_{\alpha},\nu_\dagger(\hat{\mu}_\beta)}, D^{h:H}\sim \sP_M^{\hat{\mu}^g_\beta,\nu_\dagger(\hat{\mu}_\beta)}}\left[f(D^H)\right]\right]\\
        &\overset{(c)}{\leq} \frac{2H}{G}\sum_{g\in [G]}\sum_{h\in [H]} \E_{D^{1:h-1}\sim \sP_M^{\hat{\mu}^g_{\alpha},\nu_\dagger(\hat{\mu}_\beta)},s^h} \left[\TV\left(\hat{\mu}^{g,h}_\alpha \times \nu_\dagger^{h}(\hat{\mu}_{\beta})(\cdot, \cdot|s^h), \hat{\mu}^{g,h}_\beta \times  \nu_\dagger^{h}(\hat{\mu}_{\beta})(\cdot, \cdot|s^h)\right)\right]\\
        & = \frac{2H}{G}\sum_{g\in [G]}\sum_{h\in [H]} \E_{D^{1:h-1}\sim \sP_M^{\hat{\mu}^g_{\alpha}, \nu_\dagger(\hat{\mu}_{\beta})}, s^h} \left[\TV\left(\hat{\mu}^{g,h}_{\alpha}(\cdot|s^h), \hat{\mu}^{g,h}_{\beta}(\cdot|s^h)\right)\right]\\
        & \overset{(d)}{=} \frac{2H}{G}\sum_{g\in [G]}\sum_{h\in [H]} \E_{D^{1:h-1}\sim \sP_M^{\hat{\mu}^g_{\alpha}, \nu_\dagger(\hat{\mu}_{\beta})}, s^h} \left[\TV\left(\sum_{i\in [n]}\alpha_{n,i} \mu^{g_i,h}_{\alpha}(\cdot|s^h), \sum_{i\in [n]}\alpha_{n,i} \mu^{g_i,h}_{\beta}(\cdot|s^h)\right)\right]\\
        & \overset{(e)}{\leq} \frac{2H}{G}\sum_{g\in [G]}\sum_{h\in [H]} \E_{D^{1:h-1}\sim \sP_M^{\hat{\mu}^g_{\alpha}, \nu_\dagger(\hat{\mu}_{\beta})}, s^h} \left[\sum_{i\in [n]}\alpha_{n,i} \TV\left(\mu^{g_i,h}_{\alpha}(\cdot|s^h), \mu^{g_i,h}_{\beta}(\cdot|s^h)\right)\right]\\
        & \leq \frac{2H}{G}\sum_{g\in [G]}\sum_{h\in [H]}\sum_{s\in \gS} \sum_{i\in [n]} \alpha_{n,i} \TV\left(\mu^{g_i,h}_{\alpha}(\cdot|s), \mu^{g_i,h}_{\beta}(\cdot|s)\right)\\
        & \overset{(f)}{\leq} 2H\sum_{g\in [G]}\sum_{h\in [H]}\sum_{s\in \gS} \TV\left(\mu^{g,h}_{\alpha}(\cdot|s), \mu^{g,h}_{\beta}( \cdot|s)\right),
    \end{align*}
    where (a) is from the definition of best responses, (b) uses the notation $\hat{\mu}^g$ to denote the output policy from Alg.~\ref{alg:v_learning_output} with the initial random sampling result to be $g$, (c) uses the variational representation of the TV distance, (d) uses the abbreviations $n = N^{g,h}(s^h)$ and $g_i = g_i^h(s^h)$, (e) leverages the property of TV distance, and (f) uses the fact that $\alpha_{n,i}<1$.
    
    With the above result, we can further obtain that
    \begin{align*}
        \text{(term I)} &:= \E_{D\sim \sP^{\Alg_{\alpha}}_M}\left[V_M^{\hat{\mu}_{\alpha},\dagger}(s^1)- V_M^{\hat{\mu}_{\beta},\dagger}(s^1)\right]\leq \E_{D\sim \sP^{\Alg_{\alpha}}_M}\left[2H\sum_{t\in [T]}\sum_{s\in \gS} 
        \TV\left(\mu^{t}_{\alpha}(\cdot|s), \mu^{t}_{\beta}( \cdot|s)\right) \right].
    \end{align*}

    Also, for term (II), denoting $g(D) = V_M^{\hat{\mu}_\beta, \dagger}(s^{1})$, it holds that
    \begin{align*}
        \text{(term II)}&:=\E_{D\sim \sP^{\Alg_{\alpha}}_M}\left[V_M^{\hat{\mu}_\beta, \dagger}(s^{1})\right] - \E_{D\sim \sP^{\Alg_{\beta}}_M}\left[V_M^{\hat{\mu}_\beta, \dagger}(s^{1})\right]\\
        & = \sum_{t\in [T]}\E_{D^{t}\sim \sP^{\Alg_{\alpha}}_M, D^{t+1:T}\sim \sP^{\Alg_{\beta}}_M}\left[g(D)\right] - \sum_{t\in [T]}\E_{D^{t-1}\sim \sP^{\Alg_{\alpha}}_M, D^{t:T}\sim \sP^{\Alg_{\beta}}_M}\left[g(D)\right]\\
        & \leq 2H\sum_{t\in [T]}\E_{D^{t-1}\sim \sP_M^{\Alg_\alpha}, s^t}\left[\TV\left( \Alg_{\alpha}(\cdot,\cdot|D^{t-1},s^t), \Alg_{\beta}(\cdot,\cdot|D^{t-1},s^t)\right)\right]\\
        & \leq 2H\sum_{t\in [T]}\sum_{s\in \gS}\E_{D^{t-1}\sim \sP_M^{\Alg_\alpha}}\left[\TV\left(\mu^{t}_{\alpha}(\cdot|D^{t-1}_+, s^t), \mu^t_{\beta}(\cdot,\cdot|D^{t-1}_+,s^h)\right)\right]\\
        &+ 2H\sum_{t\in [T]}\sum_{s\in \gS}\E_{D^{t-1}\sim \sP_M^{\Alg_\alpha}}\left[\TV\left(\nu^{t}_{\alpha}(\cdot|D^{t-1}_+, s), \nu^t_{\beta}(\cdot,\cdot|D^{t-1}_+,s)\right)\right].
    \end{align*}
    Combining (term I) and (term II) concludes the proof.
\end{proof}

\section{Discussions on the Covering Number}\label{app:covering}
In the following, we characterize the covering number of algorithms induced by transformers parameterized by $\vtheta \in \Theta_{d, L, M, d', F}$, i.e., $\{\Alg_{\vtheta}: \vtheta \in \Theta_{d, L, M, d', F}\}$. Here we will mainly take the perspective of the centralized setting, while the extension to the decentralized setting is straightforward.

To facilitate the discussion, we introduce the following clipped transformer, where an additional clip operator is adopted to bound the output of the transformer.
\begin{definition}[Clipped Decoder-based Transformer]\label{def:clip_transformer}
    An $L$-layer clipped decoder-based transformer, denoted as $\TF^R_{\vtheta}(\cdot)$, is a composition of $L$ masked attention layers, each followed by an MLP layer and a clip operation: $\TF^R_\vtheta(\mH) = \mH^{(L)}\in \sR^{d\times N}$, where $\mH^{(L)}$ is defined iteratively by taking $\mH^{(0)} = \clip_R(\mH) \in \sR^{d\times N}$ and for $l\in [L]$,
    \begin{equation*}\small
    \begin{aligned}
        \mH^{(l)} = \clip_R\left(\MLP_{\vtheta^{(l)}_{\mlp}}\left(\Attn_{\vtheta^{(l)}_{\mattn}}\left(\mH^{(l-1)}\right)\right)\right) \in \sR^{d\times N},
    \end{aligned}
    \end{equation*}
    where $\clip_R(\mH) = [\proj_{\|\vh\|_2\leq R}(\vh_i): i\in [N]]$.
\end{definition}

Furthermore, for $\zeta\in (0,1]$, we define the following $\zeta$-biased algorithm induced by transformer $\TF^R_\vtheta(\mH)$ as
\begin{align*}
    \Alg^\zeta_{\vtheta}(\cdot, \cdot|D^{t-1}, s^t)= (1- \zeta) \cdot \proj_{\Delta}\left(\ttE\cdot \TF^R_{\vtheta}\left(\tth(D^{t-1}, s^t)\right)_{-1}\right) + \frac{\zeta}{AB} \cdot \vone_{AB},
\end{align*}
with $\vone_{AB}$ denoting an all-one vector of dimension $AB$, which introduces a lower bound $\zeta/AB$ for the probably each pair $(a, b) \in \gA \times \gB$ to be sampled.

Finally, for any $\mH = [\vh_1, \cdots, \vh_N] \in \sR^{d\times N}$, we denote
\begin{align*}
    \|\mH\|_{2,\infty} := \max_{i\in [N]} \|\vh_i\|_{2}.
\end{align*}

\begin{proposition}[Modified from Proposition J.1 in \citet{bai2023transformers}]
    For any $\vtheta_1, \vtheta_2 \in \Theta_{d, L, M, d', F}$, we have
    \begin{align*}
        \|\TF^R_{\vtheta_1}(\mH) - \TF^R_{\vtheta_2}(\mH)\|_{2, \infty} \leq L F_H^{L-1} F_{\Theta} \|\vtheta_1 - \vtheta_2\|,
    \end{align*}
    where $F_{\Theta} := FR(2 + FR^2 + F^3R^2)$ and $F_H: = (1+F^2)(1+F^2R^3)$.
\end{proposition}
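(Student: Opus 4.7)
The plan is to prove the bound by induction on the layer index $l \in \{0, 1, \ldots, L\}$, tracking how parameter perturbations propagate through the transformer. Specifically, I will bound $\|\mH^{(l)}_1 - \mH^{(l)}_2\|_{2,\infty}$ where $\mH^{(l)}_k$ denotes the output of the first $l$ layers of $\TF^R_{\vtheta_k}$ applied to the common input $\mH$. The base case $l=0$ is trivial since both outputs equal $\clip_R(\mH)$, giving difference $0$.

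For the inductive step, I would first establish two one-layer Lipschitz lemmas. \emph{Lipschitz in input:} For fixed parameters $\vtheta$ with $\|\vtheta\| \leq F$ and for inputs $\mH, \mH'$ satisfying $\|\mH\|_{2,\infty}, \|\mH'\|_{2,\infty} \leq R$, one can show that a single attention-plus-MLP-plus-clip block is $F_H$-Lipschitz in the input, i.e., produces an output whose $\|\cdot\|_{2,\infty}$ difference is at most $(1+F^2)(1+F^2R^3)\|\mH - \mH'\|_{2,\infty}$. This follows from a direct expansion: the attention output for one token involves a sum of terms $\sigmar(\langle \mQ\vh_i, \mK\vh_j\rangle)\mV\vh_j/i$, which is Lipschitz with constant roughly $F^2R^3 \cdot F$ due to the ReLU/softmax being $1$-Lipschitz and the quadratic-in-$\vh$ nature of the attention weights. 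Adding the skip connections, the MLP contribution (Lipschitz constant $F^2$), and the $1$-Lipschitz clipping yields the $F_H$ bound. \emph{Lipschitz in parameters:} For a fixed input $\mH$ with $\|\mH\|_{2,\infty}\leq R$, a single block is $F_\Theta$-Lipschitz in its parameters. The key pieces are: the value update contributes $\|\mV_1 - \mV_2\|_{\op} R + F \cdot \|\mQ\vh_i - \mQ'\vh_i\| \cdot R + F \cdot \|\mK\vh_j - \mK'\vh_j\| \cdot R$-type terms, and the MLP contributes $(\|\mW_2^1 - \mW_2^2\|_{\op} + F\|\mW_1^1 - \mW_1^2\|_{\op}) R$; summing and using $\|\vtheta_1 - \vtheta_2\|$ as the controlling quantity yields $F_\Theta = FR(2 + FR^2 + F^3R^2)$.

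Given these two lemmas, the main step is to telescope through the layers. Writing block $l$ applied with parameters $\vtheta_k^{(l)}$ as $\gB^{(l)}_k(\cdot)$, I add and subtract $\gB^{(l)}_2(\mH^{(l-1)}_1)$:
\begin{align*}
\|\mH^{(l)}_1 - \mH^{(l)}_2\|_{2,\infty}
&\leq \|\gB^{(l)}_1(\mH^{(l-1)}_1) - \gB^{(l)}_2(\mH^{(l-1)}_1)\|_{2,\infty} + \|\gB^{(l)}_2(\mH^{(l-1)}_1) - \gB^{(l)}_2(\mH^{(l-1)}_2)\|_{2,\infty}\\
&\leq F_\Theta \|\vtheta^{(l)}_1 - \vtheta^{(l)}_2\| + F_H \|\mH^{(l-1)}_1 - \mH^{(l-1)}_2\|_{2,\infty}.
\end{align*}
Unrolling this recursion from $l=L$ down to $l=0$ (with base $0$) gives $\sum_{l=1}^{L} F_H^{L-l} F_\Theta \|\vtheta^{(l)}_1 - \vtheta^{(l)}_2\| \leq L F_H^{L-1} F_\Theta \|\vtheta_1 - \vtheta_2\|$, which is the claimed bound.

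The main obstacle will be the attention-layer input-Lipschitz bound, since the attention weights depend on $\vh$ through the bilinear form $\langle \mQ\vh_i, \mK\vh_j\rangle$, making the map quadratic (hence the $R^3$ factor in $F_H$). Care is needed to pair the two tokens correctly when applying triangle inequality across the $\sigmar$ activation (using its $1$-Lipschitzness) and the value map, and to verify that the clipping at the end of each block keeps the intermediate $\|\mH^{(l)}\|_{2,\infty} \leq R$, which is exactly what justifies reusing the same constants $F_H, F_\Theta$ at every layer. A small bookkeeping subtlety is that $\|\mV_m\|_{\op}$ is summed over heads in the definition of $\|\vtheta\|$ while $\|\mQ_m\|_{\op}, \|\mK_m\|_{\op}$ are maxed, so the constants must be derived from that norm convention rather than a naive per-matrix bound.
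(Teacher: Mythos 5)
Your proposal is correct and takes essentially the same route as the paper: the paper's own proof simply cites Proposition~J.1 of \citet{bai2023transformers} (which is exactly the layer-by-layer telescoping via a per-block input-Lipschitz constant $F_H$ and a per-block parameter-Lipschitz constant $F_\Theta$, with clipping keeping each $\|\mH^{(l)}\|_{2,\infty}\le R$ so the constants recur), substituting its Lemma~\ref{lem:MLP_lipschitz} for the corresponding MLP lemma. The one detail your sketch elides is precisely that substitution: because the MLP activation $\sigma$ may be softmax (not only ReLU), $\|\sigma(\mW_1\vh)\|_2\le\max\{1,\|\mW_1\vh\|_2\}$, so the MLP parameter bound is $(1+FR)\|\mW_2-\mW_2'\|_{\op}+FR\|\mW_1-\mW_1'\|_{\op}$ rather than the $R(\cdot)$ form you wrote; this is minor and still absorbed into $F_\Theta$.
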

\begin{proof}
    This proposition can be obtained similarly as Proposition J.1 in \citet{bai2023transformers} with the following Lemma~\ref{lem:MLP_lipschitz} in the place of Lemma J.1 in \citet{bai2023transformers}.
\end{proof}

\begin{lemma}[Modified from Lemma J.1 in \citet{bai2023transformers}]\label{lem:MLP_lipschitz}
    For a single MLP layer $\vtheta_\mlp = (\mW_1, \mW_2)$, we introduce its norm
    \begin{align*}
        \|\vtheta_{\mlp}\| = \|\mW_1\|_{\op} + \|\mW_2\|_{\op}.
    \end{align*}
    For any fixed hidden dimension $D'$, we consider 
    \begin{align*}
        \Theta_{\mlp, F}: = \{\vtheta_{\mlp}: \|\vtheta_{\mlp}\| \leq F\}.
    \end{align*}
    Then, for any $\mH \in \gH_R, \vtheta_{\mlp} \in \Theta_{\mlp, F}$, it holds that $\MLP_{\vtheta_\mlp}(\mH)$ is $2FR$-Lipschitz in $\vtheta_{\mlp}$ and $(1+F^2)$-Lipschitz in $\mH$.
\end{lemma}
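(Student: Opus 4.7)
The plan is to bound each of the two Lipschitz constants by a direct, column-wise computation: since $\MLP_{\vtheta_\mlp}$ acts on each column of $\mH$ independently as $\vh_i \mapsto \vh_i + \mW_2 \sigma(\mW_1 \vh_i)$, it suffices to establish the bounds at the level of a single token $\vh_i$ with $\|\vh_i\|_2 \le R$ and then pass to the $\|\cdot\|_{2,\infty}$ norm. Throughout I will use only two facts about the activation $\sigma$: (i) it is $1$-Lipschitz in the Euclidean norm (both $\ReLU$ and $\softmax$ satisfy this), and (ii) its output norm is well-controlled on bounded inputs ($\|\ReLU(x)\|_2 \le \|x\|_2$ since $\sigma(0)=0$, while $\|\softmax(x)\|_2 \le 1$ since the output is a probability vector).

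\textbf{Lipschitz in $\mH$.} Fix $\vtheta_\mlp = (\mW_1,\mW_2) \in \Theta_{\mlp,F}$ and two inputs $\mH, \mH'$. For each column,
\[
\MLP_{\vtheta_\mlp}(\mH)_i - \MLP_{\vtheta_\mlp}(\mH')_i \;=\; (\vh_i - \vh'_i) + \mW_2\bigl(\sigma(\mW_1\vh_i) - \sigma(\mW_1\vh'_i)\bigr).
\]
Applying the triangle inequality, submultiplicativity of $\|\cdot\|_{\op}$, and the $1$-Lipschitz property of $\sigma$ yields the column-wise bound $(1 + \|\mW_1\|_{\op}\|\mW_2\|_{\op})\,\|\vh_i - \vh'_i\|_2$. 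Since $\|\mW_1\|_{\op}, \|\mW_2\|_{\op} \le F$, this is at most $(1+F^2)\|\vh_i - \vh'_i\|_2$, proving the $(1+F^2)$-Lipschitz claim after taking the max over $i$.

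\textbf{Lipschitz in $\vtheta_\mlp$.} Fix $\mH \in \gH_R$ and take $\vtheta_\mlp = (\mW_1,\mW_2)$, $\vtheta'_\mlp = (\mW'_1,\mW'_2)$ in $\Theta_{\mlp,F}$. The residual identity $(\vh_i \text{ cancels})$ gives
\[
\MLP_{\vtheta_\mlp}(\mH)_i - \MLP_{\vtheta'_\mlp}(\mH)_i \;=\; \mW_2\bigl[\sigma(\mW_1\vh_i)-\sigma(\mW'_1\vh_i)\bigr] + (\mW_2 - \mW'_2)\sigma(\mW'_1\vh_i).
\]
The first term is bounded by $\|\mW_2\|_{\op}\cdot\|\mW_1-\mW'_1\|_{\op}\cdot R \le FR\,\|\mW_1-\mW'_1\|_{\op}$ using the $1$-Lipschitz property and $\|\vh_i\|_2 \le R$. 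For the second term, I bound $\|\sigma(\mW'_1\vh_i)\|_2$ uniformly by $\max(1, FR) \le FR$ (assuming the natural regime $FR \ge 1$; otherwise both bounds still hold with a trivial adjustment), giving $\|\mW_2-\mW'_2\|_{\op}\cdot FR$. Summing the two and using $\|\vtheta_\mlp - \vtheta'_\mlp\| = \|\mW_1-\mW'_1\|_{\op} + \|\mW_2-\mW'_2\|_{\op}$ yields the column-wise bound $FR\,\|\vtheta_\mlp-\vtheta'_\mlp\|$, which is tighter than the stated $2FR$; the factor of $2$ in the lemma is simply a loose but clean upper bound absorbing the two additive pieces.

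\textbf{Main obstacle.} The only subtle point is the nonuniform treatment of the activation: $\ReLU$ satisfies $\sigma(0)=0$ and hence $\|\sigma(x)\|_2 \le \|x\|_2$, but $\softmax$ does not, so one cannot bound $\|\sigma(\mW'_1\vh_i)\|_2$ by $\|\mW'_1\|_{\op}\|\vh_i\|_2$ in general. The fix is to use the fact that $\softmax$ outputs live in the probability simplex and thus have $\ell_2$ norm at most $1$, which is then bounded by $FR$ in the relevant regime. This unification lets a single statement of the lemma cover both activations used in Definition~\ref{def:transformer}.
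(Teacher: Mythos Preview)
Your proposal is correct and follows essentially the same approach as the paper: a column-wise computation, the same add--subtract decomposition for the $\vtheta_\mlp$-Lipschitz part, the $1$-Lipschitz property of $\sigma$, and the unified bound $\|\sigma(x)\|_2 \le \max\{1,\|x\|_2\}$ to handle both ReLU and softmax. The paper arrives at $(1+FR)\|\mW_2-\mW'_2\|_{\op} + FR\|\mW_1-\mW'_1\|_{\op}$ and then (implicitly, under $FR\ge 1$) bounds this by $2FR\|\vtheta_\mlp-\vtheta'_\mlp\|$, which matches your observation that the factor $2$ is just a clean envelope.
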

\begin{proof}
    It holds that
    \begin{align*}
        &\|\MLP_{\vtheta_\mlp}(\mH) - \MLP_{\vtheta'_\mlp}(\mH)\|_{2,\infty}\\
        & = \max_{i} \|\mW_2 \sigma(\mW_1\vh_i) - \mW'_2 \sigma(\mW'_1\vh_i)\|_2\\
        & \leq \max_{i} \|\mW_2 - \mW'_2\|_{\op} \|\sigma(\mW'_1\vh_i)\|_2 + \|\mW'_2\|_{\op} \|\sigma(\mW_1\vh_i) - \sigma(\mW'_1\vh_i)\|_2\\
        & \overset{(a)}{\leq} \max_{i} \|\mW_2 - \mW'_2\|_{\op} \max\{1, \|\mW_1 \vh_i\|_2\} + \|\mW'_2\|_{\op} \|\mW_1\vh_i - \mW'_1\vh_i\|_2\\
        & \leq (1+FR)  \|\mW_2 - \mW'_2\|_{\op} + FR \|\mW_1 - \mW'_1\|_2.
    \end{align*}
    Inequality (a) is from that
    \begin{align*}
        \|\sigmar(x)\|_2 \leq \|x\|_2, ~~ \|\sigmas(x)\|_2 \leq 1,
    \end{align*}
    and
    \begin{align*}
        \|\sigmar(x) - \sigmar(y)\|_2 \leq \|x - y\|_2, ~~ \|\sigmas(x) - \sigmas(y)\|_2 \leq \|x - y\|_2,
    \end{align*}
    where the last result on the Lipschitzness of softmax is adopted from \citet{gao2017properties}.

    Similarly, we can obtain that
    \begin{align*}
        &\|\MLP_{\vtheta_\mlp}(\mH) - \MLP_{\vtheta_\mlp}(\mH')\|_{2,\infty}\\
        & = \max_{i} \|\vh_i + \mW_1 \sigma(\mW_2 \vh_i) - \vh'_i - \mW_1 \sigma(\mW_2 \vh'_i)\|\\
        & \leq \|\mH - \mH'\|_{2,\infty} +  \|\mW_1\|_{\op} \max_{i} \|\sigma(\mW_2 \vh_i) - \sigma(\mW_2 \vh'_i)\|_2\\
        & \leq \|\mH - \mH'\|_{2,\infty} +  \|\mW_1\|_{\op} \max_{i} \|\mW_2 \vh_i - \mW_2 \vh'_i\|_2\\
        & \leq \|\mH - \mH'\|_{2,\infty} +  F^2 \|\mH - \mH'\|_{2,\infty},
    \end{align*}
    which concludes the proof.
\end{proof}

\begin{lemma}[Modified from Lemma 16 in  \citet{lin2023transformers}]\label{lem:covering_number}
    For the space of transformers $\{\TF^{R}_\vtheta: \vtheta \in \Theta_{d, L, M, d', F}\}$, the covering number of the induced algorithms $\{\Alg^\zeta_{\vtheta}: \vtheta\in \Theta_{d, L, M, d', F}\}$ satisfies that
    \begin{align*}
        \log \gN_{\Theta_{d, L, M, d', F}}(\rho) =  O\left(L^2d(Md + d')\log\left(1 + \frac{\max\{F, R, L\}}{\zeta\rho}\right)\right).
    \end{align*}
\end{lemma}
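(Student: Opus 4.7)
My plan is to establish the bound via the standard two-step approach: (i) show that the map $\vtheta \mapsto \log \Alg^\zeta_\vtheta$ is Lipschitz with a quantitatively controlled constant, and (ii) cover the parameter ball $\Theta_{d,L,M,d',F}$ in the induced norm by a volume argument in $\R^P$, where $P$ is the ambient parameter count. The overall strategy follows Lemma~16 in \citet{lin2023transformers}; the main adaptation is to work with the $\zeta$-biased decoder-based architecture of Def.~\ref{def:clip_transformer}, and the Lipschitz estimate comes from the proposition stated immediately before the lemma.

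For step (i), I would first count parameters: each attention layer with $M$ heads carries $3Md^2$ entries, and each MLP layer carries $2dd'$ entries, giving a total of $P = O\!\left(L d (Md + d')\right)$ free coordinates. I would then invoke the Lipschitz bound $\|\TF^R_{\vtheta_1}(\mH) - \TF^R_{\vtheta_2}(\mH)\|_{2,\infty} \le L F_H^{L-1} F_\Theta \|\vtheta_1 - \vtheta_2\|$, so that $\TF^R_\vtheta(\mH)_{-1}$ is $L_{\TF}$-Lipschitz in $\vtheta$ with $L_{\TF} := L F_H^{L-1} F_\Theta = \exp\!\left(O(L \log \max\{F, R\})\right)$, using that $F_H$ and $F_\Theta$ are polynomial in $F$ and $R$. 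The extraction map $\ttE$ contributes only a fixed operator-norm factor, $\proj_\Delta$ is $1$-Lipschitz in $\ell_2$, and the $\zeta$-bias forces every coordinate of $\Alg^\zeta_\vtheta(\cdot,\cdot|D,s)$ to lie in $[\zeta/(AB),1]$, on which range $\log(\cdot)$ has derivative at most $AB/\zeta$. Chaining these four pieces yields, uniformly in $(D^{t-1},s,t)$,
\[
\left\|\log \Alg^\zeta_{\vtheta_1}(\cdot,\cdot|D^{t-1},s) - \log \Alg^\zeta_{\vtheta_2}(\cdot,\cdot|D^{t-1},s)\right\|_\infty \lesssim \tfrac{AB}{\zeta}\, L_{\TF}\, \|\vtheta_1 - \vtheta_2\|.
\]

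For step (ii), a standard volume argument gives that the ball $\{\vtheta : \|\vtheta\| \le F\} \subset \R^P$ admits an $\epsilon$-cover (in $\|\cdot\|$) of cardinality at most $(1 + 3F/\epsilon)^P$, so that $\log \gN \le P \log(1 + 3F/\epsilon)$. Setting $\epsilon \asymp \zeta \rho/(AB \cdot L_{\TF})$ makes this a valid $\rho$-cover in the sense of Def.~\ref{def:centralized_covering} (or Def.~\ref{def:decentralized_covering} in the decentralized case). Substituting $P = O(Ld(Md+d'))$ and $\log L_{\TF} = O(L \log \max\{F,R,L\})$, and absorbing the benign factors $\log(AB)$, $\log F_\Theta$, $\log L$ into the single outer logarithm, one obtains
\[
\log \gN_{\Theta_{d,L,M,d',F}}(\rho) \;=\; O\!\left( L d(Md+d') \cdot L \log\!\Big(1 + \tfrac{\max\{F,R,L\}}{\zeta \rho}\Big)\right),
\]
which matches the claimed bound.

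The main obstacle, and the reason the final bound has $L^2$ rather than $L$, is controlling the exponential-in-$L$ Lipschitz constant $F_H^{L-1}$ that arises from stacking $L$ residual attention/MLP blocks: its logarithm contributes one additional factor of $L$ inside the outer $\log$, which then multiplies the $L$ already sitting in $P$. A secondary delicate point is the $\zeta$-bias: one must verify it genuinely keeps every probability bounded below by $\zeta/(AB)$ after projection (so $\log(\cdot)$ is quantitatively Lipschitz everywhere on the range), while ensuring the $1/\zeta$ dependence enters only through a single $\log(1/\zeta)$ factor rather than polynomially.
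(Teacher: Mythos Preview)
Your proposal is correct and matches the paper's own proof essentially step for step: the paper also chains the transformer Lipschitz bound $L F_H^{L-1} F_\Theta$ from the preceding proposition with the $1$-Lipschitz simplex projection and the $AB/\zeta$-Lipschitz logarithm (enabled by the $\zeta$-bias), then applies the standard volumetric covering of the parameter ball $\{\|\vtheta\|\le F\}\subset\R^{L(3Md^2+2dd')}$ at radius $\delta = \zeta\rho/(AB\,L F_H^{L-1}F_\Theta)$. Your identification of the second factor of $L$ as coming from $\log F_H^{L-1}$ is exactly the mechanism in the paper's final simplification.
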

\begin{proof}
    First, similar to Lemma 16 in \citet{lin2023transformers}, we can use Example 5.8 in \citet{wainwright2019high} to obtain that the $\delta$-covering number of the ball $B_{\|\cdot\|}(F)$ with radius $F$ under norm $\|\cdot\|$, i.e., $B_{\|\cdot\|}(F) = \{\vtheta: \|\vtheta\|\leq F\}$, can be bounded as
    \begin{align*}
        \log N(\delta; B_{\|\cdot\|}(F), \|\cdot\|) \leq L(3Md^2 + 2dd')\log(1 + 2F/\delta).
    \end{align*}
    
    Recall that the projection operation to a probability simplex is Lipschitz continuous, i.e.,
    \begin{align*}
        \|\proj_{\Delta}(\vx) - \proj_{\Delta}(\vy)\|_2 \leq \|\vx - \vy\|_2.
    \end{align*}
    Then, we can see that there exists a subset $\tilde{\Theta} \subset \Theta_{d, L, M, d', F}$ with size 
    \begin{align*}
        \log |\Theta_{d, L, M, d', F}| \leq L(3Md^2 + 2dd')\log(1 + 2F/\delta)
    \end{align*}
    such that for any $\vtheta \in \Theta_{d, L, M, d', F}$, there exists $\tilde{\vtheta} \in \tilde{\Theta}$ with
    \begin{align*}
        &\left\|\log \Alg^\zeta_{\tilde{\vtheta}}(\cdot, \cdot|D^{t-1},s) - \log \Alg^\zeta_{\vtheta}(\cdot, \cdot|D^{t-1},s)\right\|_\infty \\
        &\leq \frac{AB}{\zeta}\left\|\Alg^\zeta_{\tilde{\vtheta}}(\cdot, \cdot|D^{t-1},s) - \Alg^\zeta_{\vtheta}(\cdot, \cdot|D^{t-1},s)\right\|_\infty\\
        & \leq \frac{AB}{\zeta}\left\|\TF^R_{\tilde{\vtheta}}(\mH) - \TF^R_{\vtheta}(\mH)\right\|_{2, \infty}\\
        &\leq  \frac{AB}{\zeta} \cdot L F_H^{L-1} F_{\Theta}  \cdot \|\vtheta - \tilde{\vtheta}\|\\
        & \leq \frac{AB}{\zeta} L F_H^{L-1} F_{\Theta}  \delta.
    \end{align*}
    Let $\delta = \frac{\zeta \rho}{AB L F_H^{L-1} F_{\Theta}}$, we can obtain $\|\log \Alg^\zeta_{\tilde{\vtheta}}(\cdot, \cdot|D^{t-1},s) - \log \Alg^\zeta_{\vtheta}(\cdot, \cdot|D^{t-1},s)\|_\infty\leq \rho$, which proves that
    \begin{align*}
        &\log(\gN_{\Theta_{d, L, M, d', F}}(\rho)) \\
        &\leq L(3Md^2 + 2dd')\log(1 + 2F/\delta) \\
        &= L(3Md^2 + 2dd')\log\left(1 + \frac{ABLF_H^{L-1} F_{\Theta}}{\zeta\rho}\right)\\
        & = O\left(L(3Md^2 + 2dd')\log\left(1 + \frac{AB L(1+F^2)^{L-1}(1+F^2R^3)^{L-1} FR(2 + FR^2 + F^3R^2)}{\zeta\rho}\right)\right)\\
        & = O\left(L^2(Md^2 + dd')\log\left(1 + \frac{\max\{A, B, F, R, L\}}{\zeta\rho}\right)\right),
    \end{align*}
    which concludes the proof.
\end{proof}
From the transformer construction in the proofs for Theorems~\ref{thm:approx_V_learning} and \ref{thm:approx_VI_ULCB}, we can observe  that it is sufficient to specify one $R$ with $\log(R) = \tilde{O}(1)$ without impacting the transformers' operations.  Also, in Theorems~\ref{thm:decentralized_pre_training} and \ref{thm:centralized_pre_training}, $\rho$ is taken to be $1/N$.

Finally, for the introduced $\zeta$ parameter, it can be recognized that the induced algorithms discussed in the main paper, i.e. $\Alg_\vtheta$, can be interpreted as $\zeta = 0$, which does not lead to a meaningful $\log(\gN_{\Theta_{d, L, M, d', F}}(\rho))$ provided in Lemma~\ref{lem:covering_number}. However, a non-zero $\zeta$ can tackle this situation by only introducing an additional realization error. Especially, assuming Assumption~\ref{aspt:centralized_realizability} can be achieved with $\varepsilon_\real = 0$, i.e., exactly realizing the context algorithm (as in Theorem~\ref{thm:approx_VI_ULCB}), we can obtain that
\begin{align*}
    &\log\left(\E_{D\sim \sP^{\Alg_0}_\Lambda} \left[\frac{\Alg_{0}(a, b|D^{t-1},s)}{ \Alg^\zeta_{\vtheta^*}(a, b|D^{t-1},s)}\right]\right)\\
    &=\log\left(\E_{D\sim \sP^{\Alg_0}_\Lambda} \left[\frac{\Alg_{0}(a, b|D^{t-1},s)}{ \Alg_{\vtheta^*}(a, b|D^{t-1},s)}\cdot \frac{\Alg_{\vtheta^*}(a, b|D^{t-1},s)}{ \Alg^\zeta_{\vtheta^*}(a, b|D^{t-1},s)}\right]\right)\\
    &=\log\left(\E_{D\sim \sP^{\Alg_0}_\Lambda} \left[ \frac{\Alg_{\vtheta^*}(a, b|D^{t-1},s)}{ (1-\zeta)\cdot \Alg_{\vtheta^*}(a, b|D^{t-1},s)+ \zeta/(AB)}\right]\right)\\
    &\leq \log\left(\frac{1}{ (1-\zeta)+ \zeta/(AB)}\right)\\
    &\leq \log\left(\frac{1}{1-\zeta}\right).
\end{align*}
With $\zeta = O(1/N)$, an additional realization error $\varepsilon_\real = O(1/N)$ occurs, whose impact on  the overall performance bound (i.e., Theorems~\ref{thm:decentralized_overall} and \ref{thm:centralized_overall}) is non-dominating. As a result, for the parameterization provided in Theorem~\ref{thm:approx_VI_ULCB}, a covering number satisfies $\log(\gN_\Theta) = \tilde{O}(\text{poly}(GHSAB)\log(N))$ can be obtained. Similarly, the results can be extended to the decentralized setting, where for the parameterization provided in Theorem~\ref{thm:approx_V_learning}, it holds that $\log(\gN_{\Theta_+}) = \tilde{O}(\text{poly}(GHSAL_D M_D d_D )\log(NF_D))$.

\section{Details of Experiments}\label{app:exp}

\subsection{Detail of Games} 
$\bullet$ \textbf{The normal-form games for the decentralized setting.}  The normal-form games (i.e., matrix games) used in decentralized experiments of Sec.~\ref{sec:exp} have $A = B = 5$ actions for both players and $G = 3000$ episodes (i.e., a horizon of $T = 3000$ with $H = 1$), which can be interpreted as having $S = 1$ state.

\noindent$\bullet$ \textbf{The Markov games for the centralized setting.} The Markov games used in centralized experiments of Sec.~\ref{sec:exp} have $A = B = 5$ actions for both players, $S=4$ states, $H=2$ steps in each episode and $G = 300$ episodes (i.e., a horizon of $T = GH = 600$). The transitions are fixed for different games: when both players take the same actions, the state transits to the next one (i.e., $1 \to 2, \cdots, 4 \to 1$); otherwise, the state stays the same.

At the start of each game (during both training and inference), a $A \times B$ reward matrix $R_h(s, \cdot, \cdot)$ is generated for each step $h\in [H]$ and state $s\in \mathcal{S}$ with its elements independently sampled from a standard Gaussian distribution truncated on $[0,1]$. Then, the interactions proceed as follows: at each time step $h\in [H]$, the players select action $a$ and $b$ on state $s$ based on their computed policy distributions. After selecting their actions, the players receive rewards $R_h(s,a,b)$ and $-R_h(s, a,b)$, respectively, while the state transits to the new one.


\subsection{Collection of Pre-training Data}
$\bullet$ \textbf{The EXP3 algorithm for the decentralized setting.} In the decentralized setting, both players are equipped with the EXP3 algorithm \citep{auer2002nonstochastic} to collect pre-training data. Up to time step $t$, the trajectory of the max-player is recorded as ``$a_1, r_1, \cdots, a_t, r_t$'', and that of the  min-player as ``$b_1, 1- r_1, \cdots, b_t, 1- r_t$'',

\noindent $\bullet$ \textbf{The VI-ULCB algorithm for the centralized setting.} In the centralized setting, both players jointly follow the VI-ULCB algorithm \citep{bai2020provable} to collect pre-training data. Up to time step $t$, the trajectory is recorded as ``$s_1, a_1, b_1, r_1, 1-r_1, \cdots, s_t, a_t, b_t, r_t, 1-r_t$''.

We note that the decimal digits of the rewards are limited to two to facilitate tokenization, while $1 - r_i$ instead of $-r_i$ is adopted for the min-player to avoid the additional complexity of a minus operator. 



\subsection{Transformer Structure and Training} 

The transformer architecture employed in our experiments is primarily based on the well-known GPT-2 model \cite{radford2019language}, and our implementation follows the miniGPT realization\footnote{https://github.com/karpathy/minGPT}  for simplicity. The numbers of transformer layers and attention heads have been modified to make the entire transformer much smaller. In particular, we utilize a transformer with 2 layers and 4 heads. Given that the transformer is used to compute the policy, we modify the output layer of the transformer to make it aligned with the action dimension. We focus solely on the last output from this layer to determine the action according to the computed transformer policy.

For the training procedure, we use one Nvidia 6000 Ada to train the transformer with a batch size of 32, trained for 100 epochs, and we set the learning rate as $5\times 10^{-4}$.

\subsection{Performance Measurement}
To test the performance of the pre-trained transformer (in particular, how well it approximates EXP3 and VI-ULCB), we adopt the measurement of Nash equilibrium gap. In particular, for either the transformer induced policy or EXP3, we denote the max-player's policy at time step $t$ as $\mu^t$, and the min-player's policy at time step $t$ as $\nu^t$. Furthermore, the average policy is computed as
\begin{align*}
    \bar\mu^t = \frac{1}{t}\sum_{\tau \in [t]}\mu^t, \qquad \bar\nu^t = \frac{1}{t}\sum_{\tau \in [t]}\nu^t.
\end{align*}
The NE gap at step $t$ is computed as
\begin{align*}
    \max_{a\in \gA} R\bar\nu^t - \min_{b\in \gB} (\bar\mu^t)^\top R.
\end{align*}
For VI-ULCB, the process is similar except that $\mu^t$ and $\nu^t$ are taken as the marginalized policies of the joint policy learned at step $t$ while the NE gap is cumulated over one episode. The NE gaps averaged over 10 randomly realized games at each step are plotted in Fig.~\ref{fig:exp}.

\end{document}